%% file: G_invariant_spectral_embedding.tex
\documentclass{article}     
\usepackage[utf8]{inputenc}
\usepackage[left=3cm, right=3cm, top=3cm, bottom=3cm]{geometry}
\usepackage[colorlinks=true, allcolors=blue]{hyperref}
\usepackage{natbib}
\usepackage{bm,  mathrsfs, amsmath, amssymb, amsthm, amscd, amsfonts, graphicx,epsfig,latexsym,layout,fancyhdr, colonequals, enumerate, fontenc, graphics, verbatim, psfrag, lipsum, appendix}
\usepackage{algpseudocode}
\usepackage[all,cmtip]{xy}
\usepackage{tikz}
\usepackage{tikz-cd}
\usepackage{bbm}
\usepackage{multirow}
\usepackage{booktabs}                                   \usepackage{array}
\usepackage{makecell}
\usepackage{tikz-cd}
\usepackage{relsize} 
\usepackage{comment}
\usepackage{subcaption} 
\usepackage{soul}
\usepackage[ruled,vlined]{algorithm2e}
\usepackage{longtable}
\usepackage{booktabs}
\usepackage{authblk}
\usepackage{textcomp}   
\usepackage{hyperref}

\newcommand{\keywords}[1]{%
  \small
  \begin{quotation}
  \noindent\textbf{Keywords:} #1
  \end{quotation}
}
\newcommand{\expfig}[1]{\includegraphics[width=0.15\textwidth]{#1}}

\usetikzlibrary{matrix,arrows.meta,decorations.markings}
\definecolor{COrange}{HTML}{E69F00}
\definecolor{CRed}{HTML}{FF5050}
\definecolor{CBlue}{HTML}{0072B2}
\definecolor{CGreen}{HTML}{009E73}
\definecolor{CPink}{HTML}{CC79A7}
\definecolor{CDark}{HTML}{2B2B2B}
\tikzset{
  edge id/.style={
    line width=1.1pt,
    postaction={decorate,decoration={markings,
      mark=at position 0.32 with {\arrow{Latex[length=2.1mm,width=1.5mm]}},
      mark=at position 0.72 with {\arrow{Latex[length=2.1mm,width=1.5mm]}}}}
  },
  orbit/.style={line width=5.5pt,line cap=round,opacity=0.18},
  pt/.style={circle,inner sep=0pt,minimum size=4.5pt,fill},
  map/.style={-{Latex[length=2.8mm,width=1.9mm]},line width=0.95pt,CDark},
  dist/.style={<->,line width=0.95pt,CPink}
}

\newcommand{\D}{ D}
\newcommand{\ssf}{\mathrm{II}}
\newcommand{\M}{\mathcal M}
\newcommand{\N}{\mathcal N}
\newcommand{\X}{\mathcal X}
\newcommand{\RR}{\mathbb R}

\newcommand{\vol}{\mathrm{Vol}}

\newcommand{\x}{x}
\newcommand{\y}{x}

\newcommand\numberthis{\addtocounter{equation}{1}\tag{\theequation}}

\theoremstyle{plain}
\newtheorem{theorem}{Theorem}[section]
\newtheorem{corollary}[theorem]{Corollary}
\newtheorem{lemma}[theorem]{Lemma}
\newtheorem{proposition}[theorem]{Proposition}

\theoremstyle{definition}
\newtheorem{definition}[theorem]{Definition}
\newtheorem{example}[theorem]{Example}
\newtheorem{remark}[theorem]{Remark}

\title{\textbf{Group Invariant Spectral Embedding}}

\author[1,*,$\dagger$]{Yeari Vigder}
\author[2,*,$\dagger$]{Paulina Hoyos}
\author[3]{David Thong}
\author[3]{Joakim And\'en}
\author[2]{Joe Kileel}
\author[1]{Amit Moscovich}

\affil[1]{Department of Statistics and Operations Research, Tel Aviv University, Tel Aviv, Israel \protect\\
\href{mailto:mosco@tauex.tau.ac.il}{\texttt{mosco@tauex.tau.ac.il}} (A. Moscovich), \;
\href{mailto:vigderyeari@mail.tau.ac.il}{\texttt{vigderyeari@mail.tau.ac.il}} (Y. Vigder)}

\affil[2]{Department of Mathematics, University of Texas at Austin, Austin, TX, USA \protect\\
\href{mailto:paulinah@utexas.edu}{\texttt{paulinah@utexas.edu}} (P. Hoyos), \;
\href{mailto:jkileel@math.utexas.edu}{\texttt{jkileel@math.utexas.edu}} (J. Kileel)}

\affil[3]{Department of Mathematics, KTH Royal Institute of Technology, Stockholm, Sweden \protect\\
\href{mailto:dthong@kth.se}{\texttt{dthong@kth.se}} (D. Thong), \;
\href{mailto:janden@kth.se}{\texttt{janden@kth.se}} (J. And\'en)}

\affil[*]{Equal contribution}
\affil[$\dagger$]{Corresponding author}

\date{\today}

\begin{document}
\maketitle

\begin{abstract}
\noindent Spectral embedding methods are widely used for dimensionality reduction and clustering of high-dimensional datasets with intrinsic low-dimensional structures.
Although many datasets of practical interest exhibit invariance under symmetries such as rotations, standard spectral embedding methods do not account for this, treating symmetry-related data points as unrelated. 
Our approach to this problem is to incorporate the symmetries directly into the affinity kernels used for spectral embedding.
We analyze the case of a Riemannian data manifold $\M$ with symmetries given by a compact Lie group~$G$ and prove that, under suitable conditions, graph Laplacians constructed from three types of invariant kernels converge pointwise to explicit 
second-order differential operators on the quotient space $\M/G$.
Our analysis implies improved convergence rates, as the effective dimension drops according to the dimension of the group. 
We validate our approach on datasets with $\mathrm{SO}(2)$ or $\mathrm{SO}(3)$ symmetry, and show that $G$-invariant spectral embedding recovers the intrinsic geometry of the data, in contrast to standard spectral embedding, which fails to do so even in the limit of infinite data.
\end{abstract}

\keywords{
dimensionality reduction, manifold learning, graph Laplacian, data symmetries, quotient manifold, sample complexity} 

\quad {\bf MSC 2020:} 62R07, 62R30, 58J70, 58J50, 35R02


\section{Introduction}
Spectral embedding methods are powerful tools for analyzing high-dimensional data with intrinsic low-dimensional structure. 
Their basic operating principle is to 
construct a graph from pairwise affinities between data points and then use the eigenvectors of the graph Laplacian as low-dimensional representations. 
These methods are widely used for tasks such as dimensionality reduction, clustering, semi-supervised learning, and data denoising.
In this work, we consider datasets that exhibit invariance under known symmetry transformations.
For example, in single-particle cryo-electron microscopy (cryo-EM), molecular projection images are subject to random in-plane rotations.   
The 2D rotations are typically viewed as nuisance parameters, which  computational methods need to account for \citep{singer2020computational}.
As an example of a discrete symmetry, consider set-structured data such as unlabeled graphs, stored using an adjacency matrix.
Despite the vertices having no natural ordering, they are nonetheless assigned arbitrary row/column indices. However applying the same permutation on the rows and columns of the matrix results in an identical graph \citep{zaheer2017deep, maron2019invariant}.

Classical approaches for symmetry-aware machine learning fall into two main categories: data augmentation and invariant features. Data augmentation enlarges the dataset by including transformed copies of each data point; this is widely used in supervised learning and can provably reduce sample complexity \citep{chen2020group}. A second approach relies on handcrafted invariant features such as rotation-invariant descriptors for images \citep{Lowe2004} or permutation-invariant statistics for sets \citep{zaheer2017deep}. In the unsupervised setting neither approach is entirely satisfactory.
Data augmentation treats different transformations of one data point as separate observations, failing to account for the fact that they represent the same instance, and can also incur significant memory overhead.
While invariant features are effective in some settings, they do not provide a general recipe that transfers across different domains and symmetry types easily.

The central question we address in this paper is \emph{how should one perform spectral embedding on datasets with known symmetries?}
Rather than augmenting the data, we propose to ``augment" the affinity kernel, so that  observations are treated as representatives of an equivalence class.
 Mathematically, we seek to pass to the quotient manifold by using group invariant kernel functions.
Figure~\ref{fig:torus_quotient_intro} illustrates this on a simple example.

\begin{figure}
    \centering
    \resizebox{0.7\textwidth}{!}{\input{torus_orbit_intro_figure_tikz.tex}}
    \caption{An illustration of group orbits and the quotient manifold. The flat torus $\M=\mathbb{T}^2$ is drawn as a fundamental square with opposite edges identified. The group $G=\mathbb{S}^1$ acts by translations in the $a$-direction, so each horizontal orbit is one equivalence class: $x$ and $x'=g\cdot x$ represent the same quotient point, while $y$ lies on a distinct orbit. The quotient map $\pi\colon\M\to\N=\M/G$ collapses each horizontal orbit to a point on the $b$-coordinate.
    A $G$-invariant affinity kernel $K_G(x,y)$ is a real symmetric function that depends only on the $G$-orbits $[x]$ and $[y]$.}
    \label{fig:torus_quotient_intro}
\end{figure}

\subsection{Our Contributions}
We study three broadly applicable classes of group invariant affinity kernels with respect to a known group $G$:
(i)~minimization over $G$; (ii)~integration over $G$; and (iii)~$G$-invariant features mapping.
We analyze the continuous case where the data lies on a Riemannian manifold $\M$ and the symmetries are given by a compact Lie group $G$ that acts smoothly and freely on $\M$ via isometries.
Our main theoretical result (Theorem~\ref{thm: convergence of G-invariant graph Laplacian}) proves that graph Laplacians constructed from $G$-invariant kernels converge pointwise to explicit second-order differential operators on the quotient space $\N = \M/G$.  
Crucially, we establish an improvement in the convergence rate, from a rate of
\begin{align}
    O(\varepsilon)
    +
    O_P
    \left(
        n^{-1/2}\varepsilon^{-1/2-\mathrm{dim}(\M)/4}
    \right)
    \qquad\qquad\qquad\quad\ 
    \text{(see Eq.~\eqref{eq:LRW_convergence})}
\end{align}
for the standard graph Laplacian that converges to a  differential operator on $\M$, to
\begin{align}
    O(\varepsilon)
    +
    O_P
    \left(
        n^{-1/2}\varepsilon^{-1/2 -(\mathrm{dim}(\M) - \dim(G))/4}
    \right)    \qquad\qquad
    \text{(Corollary~\ref{cor:improved_convergence_rate})}
\end{align}
using $G$-invariant kernels that converge to operators on $\mathcal{M}/G$.
Here $\varepsilon$ is a bandwidth parameter and $n$ is the number of data points. 
This reflects an effective dimension reduction by exploiting symmetries in the data, similar to results on augmentation in supervised learning \citep{chen2020group}.
Further, Corollary~\ref{cor:eigen_descend} shows that when all orbits of the action have the same volume there is a bijection between the eigenfunctions on $\N$ and the  $G$-invariant eigenfunctions on $\M$.
In Section~\ref{sec:experiments}, we validate our framework on problems with $\mathrm{SO}(2)$ and $\mathrm{SO}(3)$ symmetry, demonstrating symmetry-aware embeddings and their better sample efficiency and interpretability compared to standard spectral  methods.

\subsection{Related Work}

\noindent \textbf{Manifold learning and spectral methods.}
Manifold learning has a long history in unsupervised data analysis.
Early examples are Isomap and LLE \citep{doi:10.1126/science.290.5500.2319,roweis2000nonlinear}, based on approximate geodesic distances and local linear approximations, respectively.
In this paper we consider spectral embeddings, two variants of which are Laplacian eigenmaps \citep{6789755} and diffusion maps \citep{COIFMAN20065}.
These methods use the eigenvectors of a graph Laplacian to obtain low-dimensional coordinates that reflect geometry intrinsic to the data. Convergence results for graph Laplacian methods were established in \citet{BELKIN20081289}, \citet{Hein2007}, and \citet{Singer2006}, with improved rates obtained  by \citet{calder2022improved} and \citet{Cheng2021EigenconvergenceOG}. The theory was extended to non-Euclidean affinities by \citet{Kileel_2021} and \citet{XuSinger2026}.
Spectral clustering leverages the same eigenvector embedding for unsupervised partitioning \citep{NIPS2001_801272ee,vonLuxburg2007tutorial}.
The consistency of spectral clustering was established by \citet{vonLuxburgBelkinBousquet2008}.
\\[-0.8em]

\noindent \textbf{Symmetry-aware neural networks.}
Machine learning increasingly exploits symmetries to improve efficiency and generalization. Data augmentation was analyzed by \citet{chen2020group} where the authors proved that in the presence of group symmetries, adding transformed copies of data points reduces the sample complexity of training. Theoretical analyses of the benefits of invariance for kernel methods and regression appeared in \citet{tahmasebi2023exact}.
Equivariant neural networks incorporate symmetry directly into neural network architectures using equivariant convolutional layers \citep{cohen2016group}, steerable CNNs \citep{cohen2017steerable} and/or spherical CNNs \citep{cohen2018spherical}. Tensor field networks \citep{thomas2018tensor} and SE(3)-Transformers \cite{fuchs2020se3} handle rotation and translation equivariance for 3D point clouds and E(n)-equivariant graph neural networks \citep{satorras2021en} provide a flexible framework for Euclidean symmetries.
For works related to permutation-symmetries of deep neural networks, see \citet{zaheer2017deep} and \citet{maron2019invariant}. A survey of geometric deep learning and equivariant architectures appears in \citet{bronstein2021geometric}.\\[-0.8em]

\noindent \textbf{Symmetry-aware spectral methods.}
The steerable graph Laplacian of \citet{SteerablePaper} is a symmetry-aware extension of the graph Laplacian tailored to datasets of 2D images with in-plane rotational symmetry.  That work constructs a rotation-equivariant operator by explicitly incorporating rotated copies in the affinity construction.
Closely related to our setting, \citet{ROSEN2024} introduced the $G$-invariant graph Laplacian for data lying on a manifold that is closed under the action of a known Lie group $G$. Its construction enforces invariance by incorporating distances across all pairs of points generated by the action of $G$ on the dataset. This amounts to implicit (or closed-form) data augmentation, since the construction is conceptually equivalent to the standard graph Laplacian applied to the infinitely many data points generated by the action of $G$.
Similar to the approach of equivariant neural networks, we incorporate group invariance directly into the learning method rather than relying on data augmentation but do so in the unsupervised setting of spectral embedding.
In addition to integration over $G$, we also consider minimization over $G$ and symmetry-invariant feature maps.
Recent works that use rotation-invariant metrics for 3D molecules include \citet{Diepeveen2024} and \citet{Zhang2024}.

\section{Background on Spectral Embedding}\label{sec:GL_methods}
Throughout this paper,  $\M \subseteq \RR^D$ denotes a $d$-dimensional connected compact Riemannian submanifold of $\RR^D$ without boundary. The data points $\X = \{ \x_1, \x_2, \dots, \x_n \}$ are sampled from $\M $ and given as vectors in the ambient space $\RR^{D}$. The geometric structure of the data manifold $\M$ is captured using a suitable affinity kernel $K \colon \RR^D \times \RR^D \to \RR$ which measures the pairwise similarity of data points.
Common choices of $K$ are the Gaussian kernel $K(x,y) = \exp(-\| x - y \|^2/ \varepsilon)$ and the $0/1$ kernel $K(x, y) = \mathbf{1}(\|x - y\| \leq \varepsilon)$, where $\| \cdot \|$ denotes the Euclidean norm in $\RR^D$ and $\varepsilon > 0$ is a bandwidth parameter.
See Appendix \ref{sec:notation} for a table of the notation used in the paper.

Let $W \in \RR^{n \times n}$ be a weight matrix defined by:
\begin{equation}\label{eq: weight matrix}
    W_{ij} := K(\x_i, \x_j).
\end{equation}
The matrix $W$ defines a weighted graph $(\X, E, W)$ with edges $E$ consisting of the pairs $\{\x_i, \x_j\}$ for which $W_{ij} >0$. We assume throughout that this graph is connected.
Let $D$ be the diagonal degree matrix with non-zero entries given by $D_{ii} := \sum_{j} W_{ij}.$
The random-walk normalized \emph{graph Laplacian} is the $n\times n$ matrix:
\begin{equation}\label{eq: RW graph Laplacian}
   L_{RW} \colonequals I - D^{-1}W,
\end{equation}
where $I$ is the identity matrix.
Since $D^{-1}W$ is a row-stochastic matrix, it can be interpreted as the transition probability matrix of a random walk on the graph $(\X, E, W)$. 
Since $\X$ is comprised of $n$ points we can identify functions $f \colon \X \to \RR$ with vectors in $\RR^n$ whose entries are indexed by the points in $\X$. The graph Laplacian defines a linear map on such functions, given by
\begin{equation}\label{eq: RW graph Laplacian acting on functions}
    (L_{RW} f)(\x_i)
    =
    \frac{1}{\sum_{j=1}^n K(\x_i, \x_j)}\sum_{j=1}^n K(\x_i, \x_j) \left( f(\x_i) - f(\x_j) \right).
\end{equation}
In words, at each point $\x_i \in \X$, the graph Laplacian averages the differences between the value $f(\x_i)$ and the neighbors' values, weighted by their affinities to $\x_i$.

The matrix $L_{RW}$
has real eigenvalues $0 =\lambda_0 < \lambda_1 \leq \cdots \leq \lambda_{n-1}$ with
corresponding eigenvectors 
$ \varphi_0, \varphi_1, \dots, \varphi_{n-1} \in \RR^n $
that form a basis for $\RR^n$, where $\varphi_0 = n^{-1/2}\mathbf{1}$ \citep{vonLuxburg2007tutorial}.
The smoothness of each eigenvector $\phi_i$ is measured by its Rayleigh quotient: $\lambda_i = (\varphi_i^T L_{RW} \varphi_i) / (\varphi_i^T \varphi_i)$. The eigenvectors corresponding to the $m$ smallest nonzero eigenvalues vary most smoothly over the graph, and therefore capture the coarsest geometric structure of the data. Mapping each data point $\x_i$ to its coordinates in this low-dimensional eigenvector basis, as detailed in Algorithm~\ref{alg:spectral embedding}, gives an embedding that reflects the intrinsic geometry of the data, bringing nearby points in the original space close together in the embedding.\footnote{Several names were given to this method and its variants over the years.
Key works include Laplacian eigenmaps \citep{6789755} and diffusion maps \citep{COIFMAN20065}.}

\begin{algorithm}
\caption{\textsc{Spectral Embedding}}
\label{alg:spectral embedding}
\DontPrintSemicolon
\KwIn{Data points $\x_1, \dots, \x_n \in \RR^D$, positive integer $m$, affinity kernel $K$.}
  \vspace{2pt}
\KwOut{Embedded data points $\y_1, \dots, \y_n \in \RR^m$. }
  \vspace{6pt}

  \textbf{(1)}
  Compute the weights matrix $W$ via Eq.~\eqref{eq: weight matrix} and its corresponding degree matrix $D$. \;
  
  \BlankLine
  \textbf{(2)}
  Calculate the random-walk normalized graph Laplacian $L_{RW} := I - D^{-1}W$. \;

  \BlankLine
  \textbf{(3)}
  Compute the eigenvectors $\varphi_1, \dots, \varphi_m$ of $L_{RW}$ that correspond to the first $m$ nonzero eigenvalues $\lambda_1 \leq \cdots \leq \lambda_m$.\;

  \BlankLine
  \textbf{(4)}
  Map $\x_i$ into $\RR^m$ via $\x_i \mapsto \y_i = \left(\varphi_1(\x_i), \dots, \varphi_m(\x_i)\right)$.

  \BlankLine
  \textbf{(5)}
  {\Return $\y_1, \dots, \y_n$}.
\end{algorithm}    

It is known that if one uses the Gaussian kernel to construct the weight matrix $W$ then the random-walk Laplacian converges in a pointwise sense to the Laplace--Beltrami operator on the manifold $\M$.  A precise statement is as follows.

\begin{theorem}[\cite{Singer2006}]\label{thm: convergence of standard graph Laplacian to Laplace-Beltrami operator}
Let $\M$ be a compact $d$-dimensional Riemannian submanifold of $\RR^D$ with Laplace--Beltrami operator  $\Delta_\M f := -\mathrm{div}(\nabla_\M f)$. For any smooth function $f:\M \to \RR$ and given points $\x_1, \dots, \x_n$ sampled independently and uniformly from $\M$, we have that for all sufficiently small $\varepsilon$ and as $n \to \infty$
\begin{align} \label{eq:LRW_convergence}
    \frac{4}{\varepsilon}(L_{RW} f)(\x_i)
    =
    \Delta_\M f(\x_i)
    +
    O\left( \varepsilon \right)
    +
    O_P
    \left(
        \frac{1}{n^{1/2}\varepsilon^{1/2 + d/4}}
    \right),
\end{align}
\noindent where $L_{RW}$ is the random-walk graph Laplacian based on the Gaussian kernel $K(x,y)=\exp(-\|x - y\|^2/\varepsilon)$. 
\end{theorem}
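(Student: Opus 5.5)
We follow Singer's bias--variance decomposition. Write the random-walk Laplacian at a fixed sample point $\x_i$ as a ratio of empirical averages,
\begin{align*}
(L_{RW} f)(\x_i) = \frac{\frac{1}{n}\sum_{j} K(\x_i,\x_j)\bigl(f(\x_i)-f(\x_j)\bigr)}{\frac{1}{n}\sum_{j} K(\x_i,\x_j)} =: \frac{\hat F(\x_i)}{\hat G(\x_i)}.
\end{align*}
Conditioning on $\x_i$, the summands with $j\neq i$ are i.i.d. The $j=i$ term contributes nothing to $\hat F$ and only $O(1/n)$ to $\hat G$, which is negligible. The plan has three steps: compute the expectations $F = \mathbb{E}\hat F$ and $G = \mathbb{E}\hat G$ (the bias term), bound the fluctuations of the ratio (the variance term), and combine the two.

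\textbf{Bias.} Take the uniform density $p = 1/\vol(\M)$. We need the expansions
\begin{align*}
G(x) &= \int_\M e^{-\|x-y\|^2/\varepsilon}\,p\,dy,\\
F(x) &= \int_\M e^{-\|x-y\|^2/\varepsilon}\bigl(f(x)-f(y)\bigr)\,p\,dy.
\end{align*}
The Gaussian decays exponentially, so we may restrict both integrals to a geodesic ball of radius $\varepsilon^{\gamma}$ with $\gamma<1/2$, at exponentially small cost. Inside the ball we use normal coordinates $s\in T_x\M$ and three expansions:
\begin{itemize}
\item the chord versus geodesic distance, $\|x-y\|^2 = \|s\|^2 + O(\|s\|^4)$, with the quartic term governed by the second fundamental form;
\item the volume element, $dy = (1+O(\|s\|^2))\,ds$;
\item the Taylor expansion $f(y)=f(x)+\nabla f\cdot s+\tfrac12 s^T H s+O(\|s\|^3)$.
\end{itemize}
Odd moments of the Gaussian vanish by symmetry. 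This gives
\begin{align*}
G &= (\pi\varepsilon)^{d/2}\,p\,\bigl(1+O(\varepsilon)\bigr),\\
F &= (\pi\varepsilon)^{d/2}\,p\,\tfrac{\varepsilon}{4}\,\Delta_\M f(x) + O(\varepsilon^{d/2+2}).
\end{align*}
Note that $\Delta_\M=-\mathrm{div}\nabla$ has positive sign, matching $f(x)-f(y)$. The second moment $\int e^{-\|s\|^2/\varepsilon}s_k^2\,ds=(\pi\varepsilon)^{d/2}\varepsilon/2$ combines with the factor $\tfrac12$ from Taylor to give $\varepsilon/4$. The $O(\varepsilon^2)$ relative error in $F$ holds because every correction carries an extra $\|s\|^2$, and odd corrections vanish. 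Hence $\frac4\varepsilon F/G=\Delta_\M f+O(\varepsilon)$.

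\textbf{Variance.} The summands of $\hat F$ have variance
\begin{align*}
\mathbb{E}\bigl[K^2 (f(x)-f(y))^2\bigr] = O(\varepsilon^{d/2+1}).
\end{align*}
This holds because $K^2$ is a Gaussian with bandwidth $\varepsilon/2$, and $(f(x)-f(y))^2=O(\|s\|^2)$ contributes a factor $\varepsilon$. Therefore $\hat F - F = O_P(n^{-1/2}\varepsilon^{d/4+1/2})$. In the same way, $\hat G - G = O_P(n^{-1/2}\varepsilon^{d/4})$.

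\textbf{Combining.} Linearize the ratio:
\begin{align*}
\frac{\hat F}{\hat G} = \frac{F}{G} + \frac{\hat F - F}{G} - \frac{F(\hat G - G)}{G^2} + \text{higher order}.
\end{align*}
The first fluctuation term is $O_P(n^{-1/2}\varepsilon^{1/2-d/4})$. For the second, $F/G=O(\varepsilon)$, so it is $O_P(n^{-1/2}\varepsilon^{1-d/4})$, which is dominated by the first. Multiplying by $4/\varepsilon$ gives the stated rate $O_P(n^{-1/2}\varepsilon^{-1/2-d/4})$.

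We expect the main obstacle to be this variance step. The point is to obtain the sharp exponent $\varepsilon^{-1/2}$ rather than the cruder $\varepsilon^{-1}$. A naive bound treats numerator and denominator independently, or ignores the vanishing of $f(x)-f(y)$ at distance zero, and either way loses the improvement. One must also check that the higher-order terms of the ratio expansion are negligible. They are, because $n\varepsilon^{d/2}\to\infty$ is implicit in the regime where the statement is meaningful. Chebyshev's inequality suffices for the $O_P$ statements.
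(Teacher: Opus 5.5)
Your proposal is correct and follows essentially the same bias--variance argument as \citet{Singer2006}. The paper does not prove this theorem itself; it cites it and reproduces the same argument for the minimum kernel in Section~\ref{sec: proof min kernel}: Gaussian moment expansions in normal coordinates with odd-order terms cancelling to give an $O(\varepsilon)$ bias, a variance that gains a factor of $\varepsilon$ because $f(\x_i)-f(\x_j)$ vanishes on the diagonal, and a combination of the two. The only difference is cosmetic: you center the numerator and use Chebyshev plus a linearization of the ratio, whereas the paper uses the auxiliary variables $Y_j$ and Bernstein's inequality, where the same gain shows up as the cancellation $\mathbb{E}[Y_j^2]\propto \varepsilon\|\nabla_\M f(\x_i)\|^2$.
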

Here $O(\varepsilon)$ denotes a quantity less than $C \varepsilon$ 
where $C$ is a constant that depends on $\M$, $f$ and $x_i$ but not on  $\varepsilon$ or $n$, while $O_P(\cdot)$ denotes the order in probability.  The theorem can be extended to non-uniform sampling.
In that case, the random-walk graph Laplacian converges to a weighted Laplacian (or Fokker--Planck) operator on $\mathcal{M}$, which has an additional drift term. 
\begin{theorem}[\cite{COIFMAN20065}]\label{thm: convergence of standard graph Laplacian to Fokker-Planck operator}
Assume the setting of Theorem~\ref{thm: convergence of standard graph Laplacian to Laplace-Beltrami operator}, except that the points $\x_1, \dots, \x_n \in \M$ are now sampled from a probability density $\rho(x)$ with respect to the uniform measure on $\M$. Then for all sufficiently small $\varepsilon$ and as $n \to \infty$ we have that
\begin{align}
    \frac{4}{\varepsilon}
    (L_{RW} f)(\x_i)
    =
    \Delta_\M f(\x_i)
    -
    2\left\langle \nabla_\M \log \rho(\x_i),\, \nabla_\M f(\x_i) \right\rangle
    + O(\varepsilon)
+     O_P
    \left(
        \frac{1}{n^{1/2}\varepsilon^{1/2 + d/4}}
    \right).
\end{align}
\end{theorem}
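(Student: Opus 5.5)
The plan is the standard two-part bias/variance argument, organised around the kernel integral operator
\begin{equation*}
    (G_\varepsilon h)(x) := \int_\M \exp\left(-\|x-y\|^2/\varepsilon\right) h(y)\,\rho(y)\,dy .
\end{equation*}
Write the random-walk Laplacian as the ratio
\begin{equation*}
    (L_{RW} f)(\x_i) = f(\x_i) - \frac{\frac{1}{n}\sum_{j} K(\x_i,\x_j) f(\x_j)}{\frac{1}{n}\sum_j K(\x_i,\x_j)} .
\end{equation*}
Conditioning on $\x_i$, the numerator and denominator are empirical averages of i.i.d. terms. They concentrate around $G_\varepsilon(f)(\x_i)$ and $G_\varepsilon(1)(\x_i)$ respectively, and the diagonal term $j=i$ only contributes $O(1/n)$.

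\textbf{Bias.} The first step is a local expansion of $G_\varepsilon h$ for smooth $h$. Change to normal coordinates $s\in T_x\M$ at $x$ via the exponential map. Use $\|x-\exp_x(s)\|^2 = \|s\|^2 + O(\|s\|^4)$, where the quartic correction comes from the second fundamental form, and expand the volume element as $1+O(\|s\|^2)$. Truncate to a ball of radius $\varepsilon^{\gamma}$, $\gamma<1/2$, at exponentially small cost. Odd moments vanish, and this gives
\begin{equation*}
    G_\varepsilon h(x) = (\pi\varepsilon)^{d/2}\left[h\rho(x) + \tfrac{\varepsilon}{4}\bigl(E(x)\,h\rho(x) - \Delta_\M(h\rho)(x)\bigr) + O(\varepsilon^2)\right],
\end{equation*}
with $E$ a curvature term. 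Taking the ratio $G_\varepsilon(f\rho)/G_\varepsilon(\rho)$ (interpreted with the density inside), the curvature term $E$ cancels. Expanding $\Delta_\M(f\rho) = f\Delta_\M\rho + \rho\Delta_\M f - 2\langle\nabla\rho,\nabla f\rangle$ then gives
\begin{equation*}
    \frac{4}{\varepsilon}\left(f - \frac{G_\varepsilon(f)}{G_\varepsilon(1)}\right)(x) = \Delta_\M f - 2\langle \nabla_\M\log\rho,\nabla_\M f\rangle + O(\varepsilon).
\end{equation*}
The sign convention $\Delta_\M = -\mathrm{div}\nabla$ is what makes this come out with the stated signs.

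\textbf{Variance.} The second step bounds the fluctuation of the ratio. Write $N_n = \frac1n\sum_j K(\x_i,\x_j)(f(\x_i)-f(\x_j))$ and $D_n = \frac1n\sum_j K(\x_i,\x_j)$. The key observation is that the summands of $N_n$ are small, because $K$ localizes to $\|x-y\|\lesssim\sqrt\varepsilon$, where $|f(\x_i)-f(\x_j)| = O(\sqrt\varepsilon)$. Consequently
\begin{equation*}
    \mathrm{Var}\bigl(K(f(\x_i)-f(\x_j))\bigr) \asymp \varepsilon\cdot\varepsilon^{d/2},
\end{equation*}
while $\mathbb{E} D_n \asymp \varepsilon^{d/2}$ and $D_n$ has relative fluctuation $O_P(n^{-1/2}\varepsilon^{-d/4})$. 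A first-order (delta-method or Bernstein) expansion of the ratio then gives
\begin{equation*}
    \frac{N_n}{D_n} = \frac{\mathbb{E}N}{\mathbb{E}D} + O_P\!\left(\frac{\varepsilon^{1/2+d/4}}{n^{1/2}\varepsilon^{d/2}}\right).
\end{equation*}
Here the cross term from the denominator fluctuation is of lower order, since $\mathbb{E}N/\mathbb{E}D = O(\varepsilon)$. Multiplying by $4/\varepsilon$ yields $O_P(n^{-1/2}\varepsilon^{-1/2-d/4})$.

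\textbf{Main obstacle.} I expect the main obstacle to be the variance step: getting the sharp exponent requires exploiting the $O(\sqrt\varepsilon)$ smallness of $f(\x_i)-f(\x_j)$ in the numerator rather than bounding numerator and denominator separately. A separate crude treatment would lose a factor $\varepsilon^{-1/2}$. I would handle this by centering, i.e. analysing $N_n$ directly as a sum of differences, and controlling $D_n$ only through its relative error.
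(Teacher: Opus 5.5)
Your proposal is correct. The paper only cites this result from \citet{COIFMAN20065} and gives no proof of it, but your argument is the same bias/variance scheme the paper runs for the minimum kernel in Section~\ref{sec: proof min kernel}: a normal-coordinate expansion in which the curvature term $E$ cancels in the ratio, the product rule for $\Delta_\M$, and a variance bound that uses the $O(\sqrt{\varepsilon})$ cancellation in the numerator. The only cosmetic difference is that the paper, following \citet{Singer2006}, linearizes the ratio as the single variable $Y_j=\mathbb{E}[G]F(\x_j)-\mathbb{E}[F]G(\x_j)+\alpha\,\mathbb{E}[G](\mathbb{E}[G]-G(\x_j))$ and applies Bernstein's inequality to it directly, whereas you split the deviation into a centered-numerator term and a relative-error denominator term, which is equivalent.
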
  

These theorems have been generalized to prove spectral consistency, where the eigenvalues and eigenvectors of the graph Laplacian converge to the eigenvalues and eigenfunctions of the Laplace--Beltrami operator \citep{vonLuxburgBelkinBousquet2008, BELKIN20081289, GarciaTrillosSlepcevVariational, GarciaTrillos2019, calder2022improved, 
Cheng2021EigenconvergenceOG}. In this work, we focus on the pointwise consistency of our method. 

\section{Method and Main Results}\label{sec:method}
Here we present our approach to incorporating symmetry invariance into spectral embedding.  
We begin by defining $G$-invariant kernels and presenting three broadly applicable constructions (Section~\ref{sec:G-invariant kernels}). We then describe the G-invariant spectral embedding algorithm (Section~\ref{sec: G-invariant spectral embedding}), followed by our main theoretical result.  It is a pointwise convergence theorem showing that graph Laplacians built from $G$-invariant kernels converge to explicit second-order differential operators on the quotient manifold $\N = \M/G$ (Section~\ref{sec: convergence of invariant-kernel graph Laplacian}). We conclude by analyzing the special case of constant orbit-volume density (Section~\ref{sec: special case constant density}) and illustrate the improved rate with a numerical example (Section~\ref{sec: numerical example}).

\subsection{Group Invariant Kernels}\label{sec:G-invariant kernels}
The following definition formalizes the notion of a group invariant kernel.

\begin{definition}
Let $G$ be a compact Lie group acting on $\RR^D$, and let $K_G \colon \RR^D \times \RR^D \to \mathbb{R}$ be a symmetric  kernel.
We say that $K_G$ is a \textbf{$G$-invariant kernel} if
\[
K_G(\alpha\cdot x,\beta\cdot y)=K_G(x,y)
\]
for all $x,y\in\RR^D$ and $\alpha,\beta\in G$. 
\end{definition}
In particular,  we focus on the following three constructions of $G$-invariant kernels.
\begin{proposition}
Let $G$ be a compact Lie group acting by isometries on $\RR^D$. The following three kernels are $G$-invariant:
\begin{enumerate}
    \item The \textbf{minimum kernel}:
    \begin{equation}\label{eq: min Gaussian kernel}
        K_{\mathrm{min}}(\x,\y) 
        :=
        \max_{\alpha \in G}
        K(\x, \alpha \cdot \y) 
        =
        \exp
        \left(
            -\min_{\alpha \in G} \| \x - \alpha\cdot \y \|^2 /\varepsilon
        \right). 
    \end{equation}
    \item   The \textbf{integral kernel}:
    \begin{equation}\label{eq: integral Gaussian kernel}
        K_{\mathrm{int}}(\x,\y) 
        :=
        \int_G K(\x, \alpha \cdot \y) d\eta(\alpha)
        =
        \int_G
        \exp
        \left(
            -\| \x - \alpha \cdot \y \|^2 / \varepsilon
        \right)
        d\eta(\alpha),
\end{equation}
where $\eta$ denotes the Haar measure on $G$ normalized so that $\vol(G) = \int_G 1 d\eta(\alpha) = 1$.

    \item  The \textbf{invariant features kernel}: 
\begin{equation}\label{eq: invariant feature kernel}
    K_{\mathrm{IF}}(\x,\y)
    :=
    K(\phi(\x), \phi(\y))
    =
    \exp
    \left(
        -\| \phi(\x) - \phi(\y)\|^2 / \varepsilon
    \right),
\end{equation}
    where $\phi: \RR^D \to \mathbb{R}^E$ is a $G$-invariant map, i.e. $\phi(\alpha \cdot \x) = \phi(\x)$ for all $\alpha \in G, \x \in \RR^D$.
\end{enumerate}
\end{proposition}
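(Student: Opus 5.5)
The plan is to check the two defining properties of a $G$-invariant kernel for each construction: symmetry, $K_G(x,y)=K_G(y,x)$, and invariance, $K_G(\alpha\cdot x,\beta\cdot y)=K_G(x,y)$ for all $\alpha,\beta\in G$. Everything follows from three facts. First, $G$ acts by isometries, so $\|\gamma\cdot u-\gamma\cdot v\|=\|u-v\|$ for all $\gamma\in G$. Second, $G$ is a group, so left or right multiplication by a fixed element, and inversion, are bijections of $G$. Third, the normalized Haar measure $\eta$ on a compact group is both left- and right-invariant (compact groups are unimodular) and is invariant under inversion. In addition, for the minimum kernel, compactness of $G$ and continuity of $\alpha\mapsto\|x-\alpha\cdot y\|$ guarantee that the minimum in \eqref{eq: min Gaussian kernel} is attained. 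The equality between the $\max$ of $K$ and the $\exp(-\min)$ form is immediate, since $t\mapsto e^{-t/\varepsilon}$ is decreasing.

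\textbf{Minimum kernel.} For invariance, I will write
\[
\|\alpha\cdot x-\gamma\beta\cdot y\|=\|x-\alpha^{-1}\gamma\beta\cdot y\|.
\]
As $\gamma$ ranges over $G$, the element $\alpha^{-1}\gamma\beta$ also ranges over all of $G$. Hence the minimum over $\gamma$ equals $\min_{\alpha'\in G}\|x-\alpha'\cdot y\|$. For symmetry, I will use $\|x-\alpha\cdot y\|=\|\alpha^{-1}\cdot x-y\|=\|y-\alpha^{-1}\cdot x\|$ together with the fact that inversion is a bijection of $G$.

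\textbf{Integral kernel.} The same pointwise identity gives
\[
K_{\mathrm{int}}(\alpha\cdot x,\beta\cdot y)=\int_G K(x,\alpha^{-1}\gamma\beta\cdot y)\,d\eta(\gamma).
\]
I then substitute $\gamma'=\alpha^{-1}\gamma\beta$ and use the left and right invariance of $\eta$ to recover $K_{\mathrm{int}}(x,y)$. Symmetry follows from $K(x,\gamma\cdot y)=K(y,\gamma^{-1}\cdot x)$ combined with the inversion-invariance of Haar measure. This is the only step needing any care: it is where bi-invariance and inversion-invariance of $\eta$ are used, both of which hold because $G$ is compact. I will also note that the integrand is continuous and bounded by $1$, so the integral is finite.

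\textbf{Invariant features kernel.} This case is immediate. Since $\phi(\alpha\cdot x)=\phi(x)$ and $\phi(\beta\cdot y)=\phi(y)$, we get $K_{\mathrm{IF}}(\alpha\cdot x,\beta\cdot y)=K_{\mathrm{IF}}(x,y)$. Symmetry holds because the Euclidean norm satisfies $\|\phi(x)-\phi(y)\|=\|\phi(y)-\phi(x)\|$. Note that this case does not even require the isometry assumption.
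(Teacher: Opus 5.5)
Your proposal is correct and is exactly the direct verification the paper intends; the paper's proof is only the remark that the result ``follows directly from the definitions.'' Your write-up supplies the omitted details: the isometric action, reparametrizing $G$ by $\gamma\mapsto\alpha^{-1}\gamma\beta$, and the bi-invariance and inversion-invariance of the normalized Haar measure for the integral kernel.
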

\begin{proof}
    Follows directly from the definitions.
\end{proof}
The minimum and integral kernels are completely determined by the action of $G$ on $\RR^D$. The invariant features kernel, on the other hand, further depends on the choice of a $G$-invariant map $\phi$. 
The following examples illustrate each of the three $G$-invariant kernels in two different settings.
\begin{example}[Point lists in $\RR^d$, $G=\mathrm{O}(d)$]\label{ex: Gram matrix}
Let the points $\x_1, \ldots, \x_n \in \RR^d$ be given as rows of a matrix $X \in \RR^{n \times d}$. Any rotation (and rotoreflection) of the points can be written as $XR^\top$ where $R \in \mathrm{O}(d)$ is an orthogonal matrix. The minimum kernel then corresponds to the orthogonal Procrustes problem for $X$ and another point list $Y$, which has a closed-form solution \citep{Schonemann1966Procrustes}. 
Meanwhile for the integral kernel, one needs to approximate
\begin{align}
    \int_{\mathrm{O}(d)}
    \exp
    \left(
        -\|X - YR^T\|^2 / \varepsilon
    \right)
    d\eta(R).
\end{align}
Similar integrals without the exponential appear in steerable PCA works \citep{zhao2016fast, fraiman2026so}.

For an instance of the invariant features kernel, the Gram matrix of $X$ defines a map $\phi \colon \RR^{n \times d} \to \RR^{n \times n}$ via $\phi(X) = X X^\top$; this is $\mathrm{O}(d)$-invariant since for all $R \in \mathrm{O}(d), X \in \RR^{n \times d}$ we have $\phi(XR^\top) = (XR^\top)(XR^\top)^\top = X X ^\top= \phi(X)$. 
In fact, every $\mathrm{O}(d)$-invariant function factors through the Gram matrix, as the first fundamental theorem of invariant functions for the orthogonal group~{\cite[Theorem~2.9A]{Wey46}} shows: 
A function $f \colon \RR^{n \times d} \to \RR$ is invariant under the action of the orthogonal group $\mathrm{O}(d)$ (i.e., $f(X) = f(XR^\top)$ for all $R \in \mathrm{O}(d)$, $X \in \RR^{n\times d}$) if and only if there exists a function $g \colon \RR^{n \times n} \to \RR$ such that $f(X) = g(X X^\top).$
\end{example}

\begin{example}[2D images, $G = \mathrm{SO}(2)$]\label{ex:bispectrum}

Consider 2D images as square integrable functions $X \colon \RR^2 \to \RR$ and let $R \in \mathrm{SO}(2)$ act on  $X$ via $(R \cdot X)(x) = X(R^\top x)$. The minimum kernel corresponds to 2D alignment of two functions, while the integral kernel is given by 
\begin{align}
    \int_{\mathrm{SO}(2)}
    \exp
    \left(
        -\tfrac{\| X - R\cdot Y\|_{L^2}^2}{\varepsilon}
    \right)
    d\eta(R) =
    \frac{1}{2\pi}
    \exp
    \left(
        -
        \tfrac{\|X\|_{L^2}^2 + \|Y\|_{L^2}^2}{\varepsilon}
    \right)
\int_0^{2\pi}
\exp\!\left(
  \frac{2}{\varepsilon}
  \int_{\mathbb{R}^2} X(x)\, Y(R_\theta^\top x)\, dx
\right)
d\theta.
\end{align}

For the invariant features kernel, one can define a set of rotational invariants based on a Fourier--Bessel decomposition:
\begin{align}
    X(r, \theta) = \sum_{m=0}^{+\infty} \sum_{k=1}^{+\infty} a_{k,m} \psi_{k, m}(r, \theta),
\end{align}
where $\psi_{k, m}(r, \theta)$ are the basis functions~\citep{zhao2013}. The \emph{(rotational) bispectrum} of $X$ is given by $\phi(X)$ with elements
\(
   a_{k_1,m_1} a_{k_2,m_2} a_{k_3, m_1+m_2}^*
\)
where $k_1 = 1, \ldots, k_{\max}(m_1)$, $k_2 = 1, \ldots, k_{\max}(m_2)$, $k_3 = 1, \ldots, k_{\max}(m_1+m_2)$, $m_1 = 0, \ldots, \lfloor m_{\max} / 2 \rfloor$ and $m_2 = 0, \ldots, m_{\max} - m_1$.
Here $m_{\max}$ is an angular-frequency cutoff, and $k_{\max}(m)$ is the
number of radial Fourier--Bessel coefficients retained for angular frequency
$m$.
The bispectrum is an $\mathrm{SO}(2)$-invariant map \citep{zhao2014}. 
\end{example}

\subsubsection*{Computation of Invariant Kernels}

Computation of the minimum kernel requires evaluating $\min_{\alpha \in G} \|x - \alpha \cdot y\|^2$, which is a non-convex global optimization problem. 
For $G = \mathrm{SO}(n)$ and $G = \mathrm{O}(n)$ acting on $\RR^{n \times m}$ by left multiplication, there are closed-form solutions based on the SVD \citep{Kabsch1976Rotation, umeyama1991least, Schonemann1966Procrustes}.
When different actions of $G = \mathrm{SO}(n)$ or $G = \mathrm{O}(n)$ are considered, one obtains generally non-trivial alignment problems. For $G = \mathrm{SO}(2)$, one may compute the distance as a trigonometric polynomial and solve the minimization using univariate search or Fourier methods. For $G = \mathrm{SO}(3)$, deterministic low-discrepancy designs, such as Fibonacci and super-Fibonacci samplings \citep{Alexa2022SuperFibonacciSF}, provide efficient alternatives to random sampling. 
For general compact Lie groups, there exist branch-and-bound algorithms 
\citep{campbell2016gogma, Hartley2013}. Manifold optimization methods \citep{boumal2023intromanifolds}  include gradient descent and Newton's method over general Lie groups; however, these methods are not guaranteed to find a global minimum in general.

For the integral kernel, a naive quadrature rule requires $O(L^{p})$ operations per pair of data points, where $L$ represents a resolution parameter and $p$ is the dimension of $G$. For $G = \mathrm{SO}(2)$, uniform trapezoidal rules on the circle yield exponentially fast convergence for smooth integrands \citep{trefethen2014exponentially}.
For $G = \mathrm{SO}(3)$, one can apply Monte Carlo sampling with exact Haar draws from quaternion parameterizations \citep{Shoemake1992} or quasi–Monte Carlo rules using low-discrepancy point sets \citep{Alexa2022SuperFibonacciSF}. 
For a general compact Lie group, there exist quadrature schemes based on FFT-type algorithms \citep{maslen2004sampling,kostelec2008ffts, potts2009fast}.

 Finally, computation of the invariant features kernel depends on the choice of $G$-invariant map $\phi$. A generic cost includes two evaluations of $\phi$ plus $O(E)$ operations to evaluate $\| \phi(\x) - \phi(\y)\|^2$, where $E$ is the output dimension of $\phi$.
 Returning to the examples above, the computation of the Gram matrix in Example~\ref{ex: Gram matrix} requires $O(n^2d)$ operations. Moreover, the bispectrum of a 2D image in Example~\ref{ex:bispectrum} can be calculated in $O(L^3)$ time~\citep{zhao2013}, where each image is represented on a discrete $L \times L$ grid. 
In general, we also note it may sometimes be possible to evaluate $\| \phi(x) - \phi(y) \|^2$ without forming $\phi(x)$ and $\phi(y)$ explicitly, i.e. for some $G$-invariant maps there may be a kernel trick available.
 \subsection{Spectral Embedding with Group Invariant Kernels}\label{sec: G-invariant spectral embedding}
 Given a dataset with a known symmetry group $G$, our proposal is to replace the generic affinity kernel $K$ with a $G$-invariant kernel $K_G$ in the construction of the graph Laplacian $L_{RW}$ (Eqs.~\eqref{eq: weight matrix}-\eqref{eq: RW graph Laplacian acting on functions}). In principle, any $G$-invariant kernel may be used in this construction. The resulting graph Laplacian $L_{RW}$ is then used directly in Algorithm~\ref{alg:spectral embedding}, with no other modification to the spectral embedding procedure. Theoretical guarantees for the case of the minimum, integral, and invariant features kernels introduced in Section~\ref{sec:G-invariant kernels} are established in Theorem~\ref{thm: convergence of G-invariant graph Laplacian} below.

\subsection{Convergence of the Graph Laplacian with Group Invariant Kernels}\label{sec: convergence of invariant-kernel graph Laplacian}

Let $G$ be a compact Lie group acting by isometries on $\RR^D$, and $\M$  a compact $d$-dimensional Riemannian submanifold of $\RR^D$ that is $G$-invariant, i.e., $\alpha \cdot \M = \M$ for all $\alpha \in G$. It follows that the restricted action of $G$ on $\M$ is by isometries.  We further assume that this action is smooth and free.
Denote the corresponding quotient manifold by $\N = \M/G$. 
It inherits a Riemannian metric naturally.
See Appendix~\ref{app:quotient_manifolds} for details.
Our analysis focuses on group invariant functions, defined as follows.
\begin{definition}
We say that $f \colon \M \to \RR$ is a \textbf{$\mathbf{G}$-invariant function} if $f(\alpha \cdot x) = f(x)$ for all $\alpha \in G$, $x \in \M$. In this case, the unique function $\overline{f} \colon \N \to \RR$ such that $\overline{f}([x]) = f(x)$ for all $x \in \M$ is called the \textbf{induced function} on
$\N$.
\end{definition}
Restricting to $G$-invariant functions is a natural choice for datasets with known symmetries, since we aim to treat symmetry-related observations as being the same. For example, the underlying molecular structure of projection images in cryo-EM is unaffected by in-plane rotations, so meaningful quantities should be rotation invariant. 
Next, we follow \cite[Section II.3.3]{helgason2000groups} and introduce a function $\delta$ on $\M$ which measures the relative volume of the orbits of $G$. 

\begin{definition}\label{def: density function} 
Given $x \in \M$, let $dV_{G \cdot x}$ denote the Riemannian measure on the orbit $G \cdot x$ induced by the ambient metric on $\M$. Fix a left-invariant Haar measure $d\eta$ on $G$ such that $\vol(G) = \int_G 1 d\eta(\alpha) = 1$. 
The \textbf{density function} $\delta \colon \M \to \RR$ is defined by the formula
\begin{align}\label{eq: density function}
    dV_{G\cdot x}= \delta(x) d\eta, \quad x \in \M,
\end{align} 
\end{definition} 
\noindent where we identify $G\cdot x$ with $G$ via the orbit map $\theta^{(x)} \colon G \to G\cdot x$ given by $\theta^{(x)}(\alpha) = \alpha \cdot x$.
To see that $\delta$ is well-defined, note that the orbit map is a diffeomorphism since $G$ acts freely on $\M$, which provides an identification of $G$ with $G\cdot x$ for each $x \in \M$, and it follows that $dV_{G\cdot x}$ and $d\eta$ must be proportional by uniqueness of the Haar measure on $G$.
It is clear from the definition that $\delta$ is a $G$-invariant function, so it descends to a function $\overline{\delta}$ on the quotient manifold $\N$.

We are now ready to state the main theoretical result regarding the convergence of the graph Laplacian based on a group invariant kernel.
\begin{theorem}[Main theoretical result]\label{thm: convergence of G-invariant graph Laplacian}
Let $G$ be a compact Lie group of dimension $p$ acting isometrically on $\RR^D$. Let $\M$ be a connected compact $d$-dimensional Riemannian submanifold of $\RR^D$ without boundary. Assume that $\M$ is $G$-invariant and that $G$ acts smoothly and freely on $\M$. Denote the Riemannian quotient manifold $\M / G$ by $\N$. Let $\overline{\delta}$ denote the  function on $\N$ induced by the density function $\delta$ defined in Eq. \eqref{eq: density function}, which we assume to be smooth. Let data points $\x_1, \dots, \x_n$ be i.i.d. samples from  the uniform measure $dV_\M$ on $\M$. Let $f\colon \M \to \RR$ be a smooth $G$-invariant function and  $\overline{f} \colon \N \to \RR$ the corresponding induced function.   Let $L_{RW}$ be the normalized graph Laplacian defined in Eq.~\eqref{eq: RW graph Laplacian} using a $G$-invariant kernel. Then for all sufficiently small $\varepsilon$ and as $n \to \infty$  we  have
\begin{equation}\label{eq: convergence with improved rate}
    \frac{4}{\varepsilon} (L_{RW} f)(\x_i) = \D \overline{f}([\x_i]) + O(\varepsilon) + O_P\left(\frac{1}{n^{1/2}\varepsilon^{1/2 + (d-p)/4}}\right),
\end{equation}
where $\D$ is a second-order differential operator on $\N$ which depends on the choice of $G$-invariant kernel as follows:
    \begin{enumerate}
    \item For the minimum kernel in Eq. \eqref{eq: min Gaussian kernel},
    \begin{align}\label{eq: D min kernel}
    \D = \Delta_\N- 2\left\langle \nabla_\N \log\overline{\delta}, \nabla_\N(\cdot) \right\rangle.
    \end{align} 

    \item For the integral kernel in Eq. \eqref{eq: integral Gaussian kernel},
    \begin{align} \label{eq: D integral kernel}
    \D = \Delta_\N  - \left\langle \nabla_\N \log\overline{\delta}, \nabla_\N(\cdot) \right\rangle. \end{align} 

    \item For the invariant features kernel in Eq.~\eqref{eq: invariant feature kernel},
    \begin{align}\label{eq: D invariant features kernel}
        \D = \bar{\phi}^{\ast} \D_{\mathrm{im}\phi}.
    \end{align}
    Here $\phi \colon \M \to \mathbb{R}^E$ is a smooth $G$-invariant map such that
    the induced map $\bar{\phi} \colon \N \to \mathrm{im}\phi$ is a diffeomorphism.
    The operator $D_{\mathrm{im}\phi}$ is defined on $\mathrm{im}\phi$ by
    \begin{align}
        \D_{\mathrm{im}\phi}
        =
        \Delta_{\mathrm{im}\phi}
        -
        2\left\langle
        \nabla_{\mathrm{im}\phi}\log p_\phi,
        \nabla_{\mathrm{im}\phi}(\cdot)
        \right\rangle,
    \end{align}
    where $p_\phi$ is the density of the pushforward measure $\phi_\ast(dV_\M)$ with
    respect to the Riemannian volume measure $dV_{\mathrm{im}\phi}$ induced on
    $\mathrm{im}\phi$ by the ambient Euclidean metric, that is, $ \phi_\ast(dV_\M) = p_\phi\, dV_{\mathrm{im}\phi}.$
    
    \end{enumerate}
\end{theorem}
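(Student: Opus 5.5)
The plan is to reduce all three cases to the known non-uniform-sampling result (Theorem~\ref{thm: convergence of standard graph Laplacian to Fokker-Planck operator}), applied on a $(d-p)$-dimensional manifold. Since $f$ and $K_G$ are $G$-invariant, the sums in Eq.~\eqref{eq: RW graph Laplacian acting on functions} depend only on the orbits $[\x_j]$. The random variables $[\x_j]=\pi(\x_j)$ are i.i.d. on $\N$ with law $\pi_\ast(dV_\M)$. By the coarea/fiber-integration formula for the Riemannian submersion $\pi\colon\M\to\N$, this law is $\vol(G\cdot x)\,dV_\N=\overline{\delta}\,dV_\N$ up to normalization, using Definition~\ref{def: density function} with $\vol(G)=1$. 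So the graph Laplacian is exactly a graph Laplacian on i.i.d. samples from $\N$ with density $\rho\propto\overline{\delta}$, built from an induced symmetric kernel $\overline{K}$ on $\N\times\N$.

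The bias/variance split follows the standard argument. The variance term is controlled by a ratio-of-sums concentration argument, exactly as in \cite{Singer2006}. The only dimension-dependent input is the second moment of $\overline{K}([x],\cdot)$, which is of order $\varepsilon^{(d-p)/2}$ because the kernel localizes in a $(d-p)$-dimensional space. This gives the $O_P(n^{-1/2}\varepsilon^{-1/2-(d-p)/4})$ term. The remaining work is the bias: computing the expansions of $\int_\N \overline{K}([x],[y])\rho\,dV_\N$ and $\int_\N \overline{K}([x],[y])\overline{f}\rho\,dV_\N$ to order $\varepsilon^2$ relative to the leading term.

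\textbf{Case 1 (minimum kernel).} Here $\min_\alpha\|x-\alpha y\|$ is the ambient distance between orbits. I will show that for nearby orbits it equals the geodesic distance $d_\N([x],[y])$ up to a correction of order $d_\N^3$ or smaller. The squared-distance expansion matches the Gaussian setting on a submanifold: $\N$ is locally isometric to horizontal slices through $x$, which are embedded in $\RR^D$ with second fundamental form bounded. Hence $\overline{K}=\exp(-\psi([x],[y])^2/\varepsilon)$, where $\psi^2=d_\N^2+O(d_\N^4)$, which is exactly the structure used in \cite{Singer2006,COIFMAN20065}. Their expansion gives $\Delta_\N-2\langle\nabla\log\rho,\nabla\cdot\rangle$ with $\rho=\overline{\delta}$, which is Eq.~\eqref{eq: D min kernel}.

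\textbf{Case 2 (integral kernel).} Write the kernel as $\overline{K}([x],[y])=\int_G e^{-\|x-\alpha y\|^2/\varepsilon}d\eta$. I will apply Laplace's method in $\alpha$ around the minimizer $\alpha^\ast$. The Hessian of $\alpha\mapsto\|x-\alpha y\|^2$ in Haar coordinates is governed by the orbit metric, whose volume factor is $\delta$. This gives
\[
\overline{K}\approx \frac{(\pi\varepsilon)^{p/2}}{\overline{\delta}([y])}\,e^{-d_\N^2/\varepsilon}\,(1+O(\varepsilon)).
\]
The effective kernel is therefore the min-kernel divided by $\overline{\delta}([y])$, i.e.\ the sampling density becomes $\rho/\overline{\delta}\cdot$(a factor evaluated at the neighbor). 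I expect the net effect to be that only one factor of $\nabla\log\overline{\delta}$ survives, giving Eq.~\eqref{eq: D integral kernel}. The delicate part is that the $1/\overline{\delta}$ factor is evaluated at $[y]$ and must be averaged with the Gaussian's symmetric variation in both arguments. This is the step I expect to be the main obstacle: the $O(\varepsilon)$ corrections from Laplace's method must be shown to be smooth in $([x],[y])$ and to contribute only to the $O(\varepsilon)$ error after normalization. For this I would use the expansion $\delta(\alpha^\ast\cdot y)$ with $\alpha^\ast$ depending smoothly on $(x,y)$ near the diagonal, which holds because the action is free, so the horizontal slice theorem applies.

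\textbf{Case 3 (invariant features kernel).} Push forward via $\phi$. The points $\phi(\x_j)$ are i.i.d. on the compact submanifold $\mathrm{im}\phi\subset\RR^E$, which has dimension $d-p$ since $\bar\phi$ is a diffeomorphism, and they have density $p_\phi$. The kernel is the standard Gaussian on $\mathrm{im}\phi$, and $f=\tilde f\circ\phi$. Theorem~\ref{thm: convergence of standard graph Laplacian to Fokker-Planck operator} then applies verbatim on $\mathrm{im}\phi$, and pulling back by $\bar\phi$ yields Eq.~\eqref{eq: D invariant features kernel} with the stated rate.
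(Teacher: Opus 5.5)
Your orbit reduction (the $[x_j]$ are i.i.d.\ on $\N$ with density proportional to $\overline{\delta}$), the variance argument with $q=d-p$, and Cases 1 and 3 essentially follow the paper's proof. There is one fix needed in Case 1. Horizontal distributions of Riemannian submersions are generally not integrable (e.g.\ the Hopf fibration), so there are no horizontal slices isometric to $\N$. The estimate $\psi^2=d_\N^2+O(d_\N^4)$ comes instead, as in the paper, from minimizing $\|x-z\|^2\le d_\M^2(x,z)\le\|x-z\|^2+K_\M\|x-z\|^4$ over $z=\alpha\cdot y$ and using $d_\N([x],[y])=\min_\alpha d_\M(x,\alpha\cdot y)$. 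You also need $\psi^2$ to be smooth near the diagonal to get an $O(\varepsilon)$ rather than $O(\varepsilon^{1/2})$ bias.

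Case 2 has a genuine error: your effective kernel drops a term at exactly the order that determines the drift. The Hessian of $\alpha\mapsto\|x-\alpha\cdot y\|^2$ at the minimizer $\alpha^\ast$ is not governed by the orbit metric alone. In orbit coordinates it equals $2\bigl(g_{G\cdot y}-\langle x-\alpha^\ast\cdot y,\mathrm{II}_{G\cdot y}\rangle\bigr)$, where $\mathrm{II}_{G\cdot y}$ is the second fundamental form of the orbit in $\RR^D$. Laplace's method therefore gives
\[
K_{\mathrm{int}}(x,y)=\frac{(\pi\varepsilon)^{p/2}}{\overline{\delta}([y])}\,e^{-\psi^2/\varepsilon}\Bigl(1+\tfrac12\langle x-\alpha^\ast\cdot y,\,H_{G\cdot y}\rangle+O(d_\N^2)+O(\varepsilon)\Bigr),
\]
where $H_{G\cdot y}$ is the mean curvature vector of the orbit. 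This correction is $O(d_\N)=O(\varepsilon^{1/2})$ on the kernel's scale and odd in the displacement. Paired with the linear part of $\overline{f}$, it contributes at order $\varepsilon$, the same order as the operator.

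With your formula as written, the sampling weight $\overline{\delta}([y])$ cancels $1/\overline{\delta}([y])$ exactly, and you would get $D=\Delta_\N$ with no drift at all, not the single factor you anticipate. A quick sanity check shows something is missing: $K_{\mathrm{int}}$ is symmetric (Haar measure on compact $G$ is inversion-invariant), but your expression is not, since $\overline{\delta}([x])/\overline{\delta}([y])=1+O(d_\N)$. Your proposed remedy of expanding $\delta(\alpha^\ast\cdot y)$ cannot help, because $\delta$ is $G$-invariant, so $\delta(\alpha^\ast\cdot y)=\overline{\delta}([y])$.

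The missing idea is the first-variation-of-volume identity. The component of $H_{G\cdot y}$ tangent to $\M$ is the horizontal lift of $-\nabla_\N\log\overline{\delta}$; the component normal to $\M$ enters only at $O(d_\N^2)$, because $x-\alpha^\ast\cdot y$ is $O(d_\N^2)$ off $T\M$. This turns the prefactor into $(\overline{\delta}([x])\,\overline{\delta}([y]))^{-1/2}$ up to even corrections. The effective density is then $\sqrt{\overline{\delta}}$, and the Coifman--Lafon expansion yields $\Delta_\N-2\langle\nabla_\N\log\sqrt{\overline{\delta}},\nabla_\N(\cdot)\rangle$, which is Eq.~\eqref{eq: D integral kernel}.

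The paper avoids this computation altogether. For $G$-invariant $f$, $(L_{RW}f)(x_i)$ equals the $G$-invariant graph Laplacian of \citet{ROSEN2024} applied to $g(i,\alpha)=f(\alpha\cdot x_i)$. That operator converges to $\Delta_\M f$ with the improved variance. Lemma~\ref{lem: radial part of the Laplace-Beltrami operator}, which is the same mean-curvature identity in another guise, then projects $\Delta_\M$ to $\Delta_\N-\langle\nabla_\N\log\overline{\delta},\nabla_\N(\cdot)\rangle$. Once repaired, your route is self-contained and explains why the integral kernel carries half the drift of the minimum kernel, but as written it does not produce the stated operator.
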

Theorem~\ref{thm: convergence of G-invariant graph Laplacian} is proved in Section~\ref{sec: Proof of main thm}.

\begin{remark}
\label{rem:metric_distortion}
The invariant features map $\phi$ need not be an isometry. In that case, spectral embedding based on the invariant features kernel may distort distances between data points.  Our analysis takes this into account.
\end{remark} 

The theorem can be extended to non-uniform sampling of data points, where the operator $D$ acquires an additional first-order term depending on the average over $G$ of the sampling density.
\begin{corollary} \label{cor:improved_convergence_rate}
     Assume the conditions of Theorem~\ref{thm: convergence of G-invariant graph Laplacian}, but with data points $x_1, \dots, x_n$ sampled from a smooth probability density $\rho$ with respect to the uniform measure on $\M$. Let $\tilde{\rho}$ be the function on $\N$ defined by
     \begin{equation}
         \tilde{\rho}([x]) = \int_G \rho(\alpha \cdot x) d\eta(\alpha).
     \end{equation}
     Then for all sufficiently small $\varepsilon$ and as $n \to \infty$ we  have
     \begin{equation}\label{eq: convergence with improved rate and non-uniform density}
    \frac{4}{\varepsilon} (L_{RW} f)(\x_i) = \D_\rho \overline{f}([\x_i]) 
    + O(\varepsilon) + O_P\left(\frac{1}{n^{1/2}\varepsilon^{1/2 + (d-p)/4}}\right),
\end{equation}
    where $\D_\rho$ adds a first-order term to the operator $D$ in Theorem~\ref{thm: convergence of G-invariant graph Laplacian} as follows:
    \begin{enumerate}
    \item For the minimum kernel in Eq.~\eqref{eq: min Gaussian kernel} and $D$ in Eq.~\eqref{eq: D min kernel}, 
    \begin{equation}
        \D_\rho = \D -2\left\langle \nabla_\N \log \tilde{\rho}, \nabla_\N (\cdot) \right\rangle.
    \end{equation}
    
    \item For the integral kernel in Eq.~\eqref{eq: integral Gaussian kernel} and $D$ in Eq.~\eqref{eq: D integral kernel}, 
    \begin{equation}
        \D_\rho = \D  -2\left\langle \nabla_\N \log \tilde{\rho}, \nabla_\N (\cdot) \right\rangle.
    \end{equation}

    \item For the invariant features kernel in Eq.~\eqref{eq: invariant feature kernel} and $D$ in Eq.~\eqref{eq: D invariant features kernel}, 
    \begin{equation}
        \D_\rho = \D -2 \bar{\phi}^{\ast} \left\langle
        \nabla_{\mathrm{im}\phi}\log(\tilde{\rho}\circ \bar{\phi}^{-1}),
        \nabla_{\mathrm{im}\phi}(\cdot)
        \right\rangle.
    \end{equation}
    \end{enumerate}
\end{corollary}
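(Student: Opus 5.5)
The proof follows the proof of Theorem~\ref{thm: convergence of G-invariant graph Laplacian}. We change only the sampling measure, from $dV_\M$ to $\rho\, dV_\M$, and track how this modifies the limiting operator. The key observation is that every $G$-invariant kernel $K_G(x_i,\cdot)$ and every $G$-invariant function $f$ depends only on the orbit $[x]$. Hence the empirical averages in Eq.~\eqref{eq: RW graph Laplacian acting on functions} concentrate around integrals over $\M$ that reduce to integrals over $\N$. For this reduction I disintegrate $dV_\M$ along the fibres of $\pi\colon \M\to\N$: by Definition~\ref{def: density function} (and the co-area formula for the Riemannian submersion $\pi$), $dV_\M = \delta\, d\eta\, dV_\N$. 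Integrating $\rho$ over each fibre then gives
\begin{equation*}
\int_\M K_G(x_i,y)\,g(y)\,\rho(y)\,dV_\M(y) = \int_\N \overline{K_G}([x_i],[y])\,\overline{g}([y])\,\tilde\rho([y])\,\overline\delta([y])\,dV_\N([y])
\end{equation*}
for any $G$-invariant $g$. So, compared with the uniform case, the effective density on $\N$ is multiplied by $\tilde\rho$: it changes from $\overline\delta$ to $\tilde\rho\,\overline\delta$.

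\textbf{Step 1 (bias term).} In the proof of the main theorem, each kernel reduces to a local Gaussian-type kernel on $\N$ (or on $\mathrm{im}\phi$), integrated against a weight. For the minimum kernel, $\min_\alpha\|x-\alpha\cdot y\|$ agrees with the quotient distance up to higher-order terms. For the integral kernel, integrating over $G$ produces an extra factor $\overline\delta^{-1/2}$ times $\varepsilon^{p/2}$, since the Gaussian over the fibre has volume of order $\varepsilon^{p/2}/\delta$. This extra factor is why case 2 has coefficient $1$ rather than $2$. The weight now picks up the additional factor $\tilde\rho$. I then apply the standard expansion behind Theorem~\ref{thm: convergence of standard graph Laplacian to Fokker-Planck operator} on the $(d-p)$-dimensional manifold $\N$, or on $\mathrm{im}\phi$. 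That expansion says a weight $q$ yields the drift $-2\langle\nabla\log q,\nabla\cdot\rangle$ after the $4/\varepsilon$ normalization and the ratio in $L_{RW}$. Because the weight is a product, its logarithm splits, and the new factor adds exactly $-2\langle\nabla_\N\log\tilde\rho,\nabla_\N(\cdot)\rangle$ to the $\D$ obtained earlier. This holds in cases 1 and 2, since $\tilde\rho$ enters multiplicatively and is not affected by the fibre integration. In case 3, pushing forward $\rho\,dV_\M$ by $\phi$ gives density $p_\phi\cdot(\tilde\rho\circ\bar\phi^{-1})$. To check this, note that $\phi$ factors through $\N$, so the fibre average of $\rho$ appears. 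The extra term is then $-2\,\bar\phi^\ast\langle\nabla_{\mathrm{im}\phi}\log(\tilde\rho\circ\bar\phi^{-1}),\nabla_{\mathrm{im}\phi}(\cdot)\rangle$. The $O(\varepsilon)$ remainder is controlled as before, using smoothness of $\rho$; note that $\tilde\rho$ is smooth because $G$ is compact.

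\textbf{Step 2 (variance term).} I repeat the ratio-of-sums concentration argument of \cite{Singer2006}. The summands $K_G(x_i,x_j)(f(x_i)-f(x_j))$ have second moment of order $\varepsilon^{(d-p)/2+1}$ relative to the mean scale $\varepsilon^{(d-p)/2}$, because on $\N$ the kernel is effectively $(d-p)$-dimensional. Since $\rho$ is bounded above and below on compact $\M$, changing the density alters these moments only by constants. This gives $O_P(n^{-1/2}\varepsilon^{-1/2-(d-p)/4})$ unchanged.

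I expect the main obstacle to be verifying carefully that $\tilde\rho$ factors cleanly out of the kernel-specific fibre computations. This matters most for the minimum kernel, where the minimizing $\alpha$ depends on $y$. My first attempt is to write everything through the disintegration above before any expansion, so that $\rho$ appears only through $\tilde\rho$.
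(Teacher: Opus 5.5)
For the minimum and invariant-features kernels your argument is the paper's. The disintegration (Lemma~\ref{lem: Fubini formula for integration over M}) turns the weight $\overline{\delta}$ into $\overline{\delta}\tilde\rho$, the pushforward density becomes $p_\phi\cdot(\tilde\rho\circ\bar\phi^{-1})$, and additivity of $\log$ gives the extra drift. The obstacle you anticipate for the minimum kernel does not arise: $K_{\mathrm{min}}(x_i,\cdot)f$ is constant on orbits, so the $G$-average acts on $\rho$ alone.

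The gap is case 2. Your mechanism assumes the integral kernel has already been reduced to a Gaussian-type kernel on $\N$ against a weight $w$. It further assumes an expansion in which $w$ enters only through $-2\langle\nabla_\N\log w,\nabla_\N(\cdot)\rangle$, with a remainder ``controlled as before.'' No such expansion is available. The theorem's integral-kernel case is proved on $\M$, via the $G$-GL result of \citet{ROSEN2024} and Lemma~\ref{lem: radial part of the Laplace-Beltrami operator}. Lemma~\ref{lem: integration wrt minimum kernel} covers only $\overline{K}_{\mathrm{min}}$.

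Your heuristic for that expansion is also wrong as stated. A fibre Gaussian of volume $(\pi\varepsilon)^{p/2}/\delta$ gives the factor $\overline{\delta}([y])^{-1}$. That would make the effective weight $\overline{\delta}\cdot\overline{\delta}^{-1}=1$ and the limit $\Delta_\N$, contradicting the theorem. The exponent $-1/2$ is in fact right to first order, but it comes from a same-order term you omitted: the Hessian of $z\mapsto\|x-z\|^2$ along the curved orbit $G\cdot y$, i.e.\ the orbit's mean curvature. Equivalently, the symmetry $K_{\mathrm{int}}(x,y)=K_{\mathrm{int}}(y,x)$ forces the first-order $y$-dependence of the prefactor to be that of $(\overline{\delta}([x])\,\overline{\delta}([y]))^{-1/2}$. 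Your Step~2 needs the same missing local analysis to bound $\mathbb{E}[G^2]$ for this kernel.

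The paper avoids this by citing the non-uniform version of \citet{ROSEN2024} on $\M$. There the extra term is $-2\langle\nabla_\M\log\pi^\ast\tilde\rho,\nabla_\M(\cdot)\rangle$, which is then transferred to $\N$ via the Riemannian submersion. A self-contained repair of the bias term is available. The action is isometric, so $dV_\M$ is $G$-invariant, and Haar measure on compact $G$ is inversion invariant. Fubini on $G\times\M$ then gives
\[
\int_\M K_{\mathrm{int}}(x_i,y)\,f(y)\,\rho(y)\,dV_\M(y)=\int_\M e^{-\|x_i-z\|^2/\varepsilon}\,f(z)\,(\pi^\ast\tilde\rho)(z)\,dV_\M(z).
\]
So the expectations are exactly those of the ordinary Gaussian kernel under the density $\pi^\ast\tilde\rho$. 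The expansion behind Theorem~\ref{thm: convergence of standard graph Laplacian to Fokker-Planck operator} gives $\Delta_\M f-2\langle\nabla_\M\log\pi^\ast\tilde\rho,\nabla_\M f\rangle$. Lemma~\ref{lem: radial part of the Laplace-Beltrami operator}, together with $\langle\nabla_\M\pi^\ast a,\nabla_\M\pi^\ast b\rangle=\pi^\ast\langle\nabla_\N a,\nabla_\N b\rangle$ (gradients of $G$-invariant functions are horizontal lifts), then yields $D\overline{f}-2\langle\nabla_\N\log\tilde\rho,\nabla_\N\overline{f}\rangle$. The $(d-p)$-dimensional variance rate for this kernel still has to come from the fibre structure, as in \citet{ROSEN2024}.
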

\begin{proof}
For the minimum kernel,  Eq.~\eqref{eq: Fubini integral on M} below shows that $\bar{\delta} \mapsto \bar{\delta}\tilde{\rho}$. 
For the invariant features kernel,
given any measurable set $A \subseteq \mathrm{im}\phi$ we have
\begin{align}\label{eq: pushforward measure if kernel}
   \phi_\ast(\rho\, dV_\M)(A) = \int_{\pi^{-1}(\bar\phi^{-1}(A))} \rho\, dV_\M  = \int_{\bar\phi^{-1}(A)} \tilde\rho([x])\, \bar{\delta}([x])\, dV_\N([x]) = \int_A (\tilde\rho \circ \bar{\phi}^{-1})(y)d\mu(y),
\end{align}
where $\mu = \bar\phi_\ast(\bar \delta dV_\N)$ is the pushforward under $\bar\phi$ of the measure $\bar \delta dV_\N$.  Taking $\rho = 1$  gives $\phi_\ast(\rho\, dV_\M)(A) = \int_A d\mu$, which implies that $\mu = \phi_\ast(\rho\, dV_\M) = p_\phi dV_{\mathrm{im}\phi}$, so Eq.~\eqref{eq: pushforward measure if kernel} becomes
\begin{align}
   \phi_\ast(\rho\, dV_\M)(A)
   = \int_A  (\tilde\rho \circ \bar\phi^{-1})(y)\, p_\phi(y)\, dV_{\mathrm{im}\,\phi}(y).
\end{align}
Hence, the pushforward measure satisfies $\phi_\ast(\rho dV_\M) = p_\phi \cdot (\tilde{\rho} \circ \bar{\phi}^{-1}) dV_{\mathrm{im}\phi}$,
meaning that $p_\phi \mapsto p_\phi \cdot(\tilde{\rho} \circ \bar{\phi}^{-1})$. The result then follows for these two kernels by the additivity of $\log$ and the linearity of the inner product. 
For the integral kernel, \citet{ROSEN2024} show that $\Delta_\M$ picks up the extra term $- 2\langle\nabla_\M \log \pi^\ast \tilde{\rho}, \nabla_\M (\cdot)\rangle$, which equals $-2\left\langle \nabla_\N \log \tilde{\rho}, \nabla_\N (\cdot) \right\rangle$ since the quotient map $\pi \colon \M \to \N$ is a Riemannian submersion. 
\end{proof}

\begin{remark}
    If $\rho$ is a $G$-invariant function, then $\tilde{\rho}$ is precisely the induced function $\bar \rho$ on $\N$.
\end{remark}
   
\begin{remark}
   The error bound in Eqs.~\eqref{eq: convergence with improved rate} and~\eqref{eq: convergence with improved rate and non-uniform density} is equivalent to a sample complexity bound for spectral methods. Balancing the two error terms shows that to achieve a total error of order $\eta$
   it suffices to take $\varepsilon < \eta$ and 
   \begin{equation}
       n =  \Omega \left(\varepsilon^{-3 - (d-p)/2}\right).
   \end{equation}
    Hence the use of group invariant kernels improves the sample complexity for spectral embedding methods by the dimension of the Lie group $G$.
\end{remark}

\begin{remark}\label{rmk: improved convergence}
    If $\nabla_\N \bar{f} ([\x_i]) =0$ (equivalently, if  $\nabla_\M f (\x_i) =0$) one can show that the convergence rate in Eqs.~\eqref{eq: convergence with improved rate} and~\eqref{eq: convergence with improved rate and non-uniform density} improves by an extra factor of $\varepsilon^{1/2}$, see \citep[Eq. (3.17)]{Singer2006}. Indeed, the variance error term becomes
    \begin{equation}
        O_P\left(\frac{1}{n^{1/2}\varepsilon^{(d-p)/4}}\right).
    \end{equation}
    In this case, the sample complexity is $n=  \Omega \left(\varepsilon^{-2 - (d-p)/2}\right)$.
\end{remark}
\subsection{Special Case of Constant Density $\delta$}\label{sec: special case constant density}

In the special case that the density function $\delta$ defined by Eq.~\eqref{eq: density function} is constant, our main result simplifies considerably.  Eqs.~\eqref{eq: convergence with improved rate},~\eqref{eq: D min kernel} and~\eqref{eq: D integral kernel} show that in this case the graph Laplacian $L_{RW}$ based on the minimum kernel or the integral kernel converges to the Laplace--Beltrami operator $\Delta_\N$ on the quotient. 
Moreover, in this setting we obtain a further relation between the Laplace--Beltrami operators $\Delta_\M$ and $\Delta_\N$, and 
the $G$-invariant eigenfunctions of $\Delta_\M$ are in bijection with the eigenfunctions of $\Delta_\N$.
To make the discussion precise, define the \emph{projection} of $\Delta_\M$ as the differential operator $P(\Delta_\M)$ on $\N$ such that
    \begin{equation}\label{eq: def of projection of Laplace-Beltrami operator}
        \pi^\ast\left(P(\Delta_\M) h\right) = \Delta_\M ( \pi^\ast h )
    \end{equation}
 for any smooth function $h \colon \N \to \RR$, where $\pi \colon \M \to \N$ is the quotient map. 
 The following lemma gives an explicit formula for $P(\Delta_\M)$. See Proposition 3.1 and Remark (c)(ii) in \citep{Le2001} for a proof.

\begin{lemma}\label{lem: radial part of the Laplace-Beltrami operator}

The projection of $\Delta_\M$ onto $\N$ is given by the formula
\begin{equation}\label{eq:projection of Laplace-Beltrami operation}
    P(\Delta_\M) = \Delta_\N - \left\langle \nabla_\N \log\overline{\delta}, \nabla_\N(\cdot) \right\rangle.
\end{equation}
\end{lemma}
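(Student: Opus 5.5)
The approach is to prove the formula weakly, by testing against arbitrary smooth functions on $\N$. It rests on three facts: $\pi\colon\M\to\N$ is a Riemannian submersion, pullbacks of functions on $\N$ have gradients that are horizontal, and $dV_\M$ disintegrates over $\N$ with fiber measure $\delta\,d\eta$. The sign convention is $\Delta = -\mathrm{div}\,\nabla$, as in Theorem~\ref{thm: convergence of standard graph Laplacian to Laplace-Beltrami operator}.

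\emph{Step 1 (gradients).} For a smooth $h$ on $\N$, I will show that $\nabla_\M(\pi^\ast h)$ is horizontal and $\pi$-related to $\nabla_\N h$. Indeed, $d(\pi^\ast h)$ vanishes on vertical vectors, which are tangent to the orbits. On horizontal vectors, $d\pi$ is an isometry onto $T\N$ by the definition of the quotient metric (Appendix~\ref{app:quotient_manifolds}). It follows that
\[
\langle\nabla_\M \pi^\ast h,\nabla_\M \pi^\ast k\rangle=\pi^\ast\langle\nabla_\N h,\nabla_\N k\rangle .
\]

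\emph{Step 2 (coarea/Fubini formula).} I will establish
\[
\int_\M F\,dV_\M=\int_\N\Big(\int_G F(\alpha\cdot x)\,d\eta(\alpha)\Big)\,\overline\delta([x])\,dV_\N([x])
\]
for integrable $F$. Locally $\M$ is a product $U\times G$ over a trivializing neighborhood $U\subseteq\N$; the slice theorem for free actions provides this. Because $\pi$ is a Riemannian submersion, the volume form on $\M$ splits as (the pullback of $dV_\N$) $\wedge$ (the orbit volume $dV_{G\cdot x}$). Definition~\ref{def: density function} then gives $dV_{G\cdot x}=\delta(x)\,d\eta$, and a partition of unity on $\N$ globalizes the formula. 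This is the same identity the proof of Corollary~\ref{cor:improved_convergence_rate} cites.

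\emph{Step 3 (weak identity).} Take smooth $h,k$ on $\N$. Green's formula on the closed manifold $\M$, followed by Steps 1 and 2, gives
\[
\int_\M \pi^\ast k\,\Delta_\M(\pi^\ast h)\,dV_\M=\int_\M\langle\nabla\pi^\ast k,\nabla\pi^\ast h\rangle\, dV_\M=\int_\N \langle\nabla_\N k,\nabla_\N h\rangle\,\overline\delta\, dV_\N .
\]
Integrating by parts on $\N$ (closed, since it is compact without boundary) converts the right-hand side to
\[
\int_\N k\big(\Delta_\N h-\langle\nabla_\N\log\overline\delta,\nabla_\N h\rangle\big)\overline\delta\,dV_\N,
\]
using $-\mathrm{div}(\overline\delta\nabla h)=\overline\delta\,\Delta h-\langle\nabla\overline\delta,\nabla h\rangle$. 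Note that $\overline\delta>0$, since orbit volumes are positive.

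\emph{Step 4 (descent of $\Delta_\M\pi^\ast h$).} Because $G$ acts by isometries, $\Delta_\M$ commutes with the action, so $\Delta_\M(\pi^\ast h)$ is $G$-invariant and equals $\pi^\ast u$ for some smooth $u$ on $\N$. Applying Step 2 to the left-hand side of Step 3 yields $\int_\N k\,u\,\overline\delta\,dV_\N$. Since $k$ is arbitrary and $\overline\delta>0$,
\[
u=\Delta_\N h-\langle\nabla_\N\log\overline\delta,\nabla_\N h\rangle .
\]
This shows that $P(\Delta_\M)$ exists and is given by the stated formula.

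The main obstacle is Step 2. The delicate point is verifying that the Riemannian volume on $\M$ factors exactly as the horizontal volume times the orbit volume. This requires the horizontal and vertical spaces to be orthogonal with $d\pi$ isometric on the horizontal part; the cross terms vanish by orthogonality, so a local orthonormal frame adapted to the splitting should suffice. Smoothness of $\overline\delta$ is assumed in the paper's setting, but it also follows from this local computation.
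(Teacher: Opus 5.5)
Your proof is correct, and it takes a genuinely different route from the paper. The paper gives no argument of its own for this lemma; it only cites Proposition~3.1 and Remark~(c)(ii) of \citet{Le2001}.

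You instead derive the formula by duality. You test $\Delta_\M(\pi^\ast h)$ against $\pi^\ast k$, and use two facts: $\nabla_\M\pi^\ast h$ is the horizontal lift of $\nabla_\N h$, and $dV_\M$ disintegrates to $\overline{\delta}\,dV_\N$ on $G$-invariant functions. The $G$-invariance of $\Delta_\M(\pi^\ast h)$ then upgrades the weak identity to a pointwise one.

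Your Step~2 is exactly Lemma~\ref{lem: Fubini formula for integration over M}, which the paper already establishes via \citet{Sakai1996}, so you can cite it instead of re-deriving it. If you do re-derive it, "cross terms vanish by orthogonality" is not quite the right justification. The tangent vectors of a local section $s$ may have vertical components. The correct point is that, in an adapted orthonormal frame, the Jacobian of the trivialization $(y,\alpha)\mapsto\alpha\cdot s(y)$ is block triangular. Its determinant is therefore the product of the horizontal block, which is $1$ because $d\pi$ is an isometry on horizontal vectors, and the vertical block, which is the orbit volume density.

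What your route buys is self-containedness and transparency. It exhibits $P(\Delta_\M)=-\overline{\delta}^{-1}\mathrm{div}_\N(\overline{\delta}\,\nabla_\N\,\cdot\,)$ as the weighted Laplacian that is symmetric with respect to $\overline{\delta}\,dV_\N$. This is the same weight that drives the minimum-kernel computation, which explains why $\overline{\delta}$ appears in both limiting operators.

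The classical pointwise route for Riemannian submersions works differently: the first-order term is the projected mean curvature vector of the orbits, identified with $-\nabla_\M\log\delta$. That route needs no global integration by parts and interprets the drift geometrically, but it requires a separate computation of the orbits' mean curvature. Your argument sidesteps that computation. It can also be localized with compactly supported test functions $k$ on $\N$, since $\pi^\ast k$ stays compactly supported when $G$ is compact.
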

In the special case that $\delta$ is constant, the first-order term in Eq.~\eqref{eq:projection of Laplace-Beltrami operation} vanishes and the projection of $\Delta_\M$ is precisely the Laplace--Beltrami operator on the quotient.
\begin{corollary}\label{cor:clean_descent}
Assume the density function $\delta$ 
is constant. Then $P(\Delta_\M) = \Delta_\N.$
\end{corollary}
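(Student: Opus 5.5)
The plan is to derive this directly from Lemma~\ref{lem: radial part of the Laplace-Beltrami operator}, which states that
\[
P(\Delta_\M) = \Delta_\N - \left\langle \nabla_\N \log\overline{\delta}, \nabla_\N(\cdot) \right\rangle .
\]
It therefore suffices to show that the first-order term vanishes whenever $\delta$ is constant. Suppose $\delta \equiv c$ on $\M$. Then the induced function satisfies $\overline{\delta} \equiv c$ on $\N$, because $\overline{\delta}([x]) = \delta(x)$ by definition.

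The constant $c$ is strictly positive. By Definition~\ref{def: density function}, $c$ is the ratio of the Riemannian measure $dV_{G\cdot x}$ on a $p$-dimensional embedded orbit to the normalized Haar measure. This orbit is diffeomorphic to $G$ via the orbit map $\theta^{(x)}$, since the action is free, so it has positive volume. Hence $c = \vol(G\cdot x) > 0$. It follows that $\log\overline{\delta}$ is a well-defined smooth constant function on $\N$, and so $\nabla_\N \log \overline{\delta} = 0$.

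The inner product term in the formula therefore vanishes identically. For every smooth $h\colon\N\to\RR$ this gives $P(\Delta_\M)h = \Delta_\N h$. Equivalently, using the defining identity~\eqref{eq: def of projection of Laplace-Beltrami operator},
\[
\Delta_\M(\pi^\ast h) = \pi^\ast(\Delta_\N h).
\]

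The only point needing care is the positivity of $\delta$, which is required for the logarithm to make sense. As noted above, it comes directly from the orbit being a $p$-dimensional submanifold. With positivity in hand, there is no real obstacle and the corollary follows immediately.
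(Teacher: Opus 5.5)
Your proposal is correct and matches the paper's argument: the paper also reads the corollary off Lemma~\ref{lem: radial part of the Laplace-Beltrami operator}, noting that the first-order term $\langle \nabla_\N \log\overline{\delta}, \nabla_\N(\cdot)\rangle$ vanishes when $\delta$ is constant. Your check that the constant equals $\vol(G\cdot x) > 0$, so the logarithm is defined, is an accurate addition; the paper records the positivity of $\overline{\delta}$, which follows from freeness of the action, elsewhere in its proofs.
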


In this regime the spectral analysis of $G$-invariant functions on~$\M$ agrees \emph{exactly} with the spectral analysis of functions on the quotient~$\N$,
simplifying both theory and computation. Concretely, eigenpairs of $\Delta_\M$ restricted to $G$-invariant functions are in bijection with eigenpairs of $\Delta_\N$.

\begin{corollary}[Eigenfunction correspondence]
\label{cor:eigen_descend}
Assume that the density function $\delta$ is constant,
let $f\in C^\infty(\M)$ be $G$–invariant and write
$f=\pi^\ast\bar f$ for $\bar f\in C^\infty(\N)$. Then $\Delta_\M f= \lambda f$ if and only if
$\Delta_\N\bar f =\lambda\bar f$.
\end{corollary}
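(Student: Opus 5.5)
The plan is to read the statement directly off the defining identity of the projection, Eq.~\eqref{eq: def of projection of Laplace-Beltrami operator}, combined with Corollary~\ref{cor:clean_descent}. Since $\delta$ is constant, Corollary~\ref{cor:clean_descent} gives $P(\Delta_\M)=\Delta_\N$. Substituting this into Eq.~\eqref{eq: def of projection of Laplace-Beltrami operator} with $h=\bar f$ yields the key identity
\begin{equation*}
    \Delta_\M f \;=\; \Delta_\M(\pi^\ast \bar f) \;=\; \pi^\ast\bigl(\Delta_\N \bar f\bigr).
\end{equation*}

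Both directions then reduce to comparing $\pi^\ast(\Delta_\N\bar f)$ with $\lambda\,\pi^\ast\bar f=\pi^\ast(\lambda\bar f)$. For the ``if'' direction, assume $\Delta_\N\bar f=\lambda\bar f$ and apply $\pi^\ast$. The identity above then gives $\Delta_\M f=\pi^\ast(\lambda\bar f)=\lambda f$. For the ``only if'' direction, assume $\Delta_\M f=\lambda f$. Then $\pi^\ast(\Delta_\N\bar f-\lambda\bar f)=0$. Since $\pi\colon\M\to\N$ is surjective, $\pi^\ast$ is injective on functions: if $\pi^\ast h=0$ then $h([x])=h(\pi(x))=0$ for every $[x]\in\N$. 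Hence $\Delta_\N\bar f=\lambda\bar f$.

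A few routine points remain to check. First, $\bar f$ is smooth whenever $f$ is a smooth $G$-invariant function. This holds because $\pi$ is a smooth submersion, since the action is free and proper, so $\bar f$ is smooth via local sections of $\pi$. Second, Eq.~\eqref{eq: def of projection of Laplace-Beltrami operator} is applied only to smooth $h$, which is satisfied here. There is no real obstacle once Lemma~\ref{lem: radial part of the Laplace-Beltrami operator} is available; the only content is the injectivity of $\pi^\ast$ and the vanishing of $\nabla_\N\log\bar\delta$ when $\bar\delta$ is constant. One could also note that the correspondence of eigenpairs is a bijection, because $\bar f\mapsto\pi^\ast\bar f$ is a linear isomorphism from $C^\infty(\N)$ onto the $G$-invariant smooth functions on $\M$.
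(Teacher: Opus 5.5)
Your proof is correct and follows the paper's argument: you combine the defining identity of $P(\Delta_\M)$ with Corollary~\ref{cor:clean_descent} to get $\Delta_\M f = \pi^\ast(\Delta_\N \bar f)$, then use surjectivity of $\pi$ (injectivity of $\pi^\ast$) for the converse direction. Your remarks on the smoothness of $\bar f$ and on the bijection are harmless but not needed, since $\bar f \in C^\infty(\N)$ is already part of the hypotheses.
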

\begin{proof}
By Eq.~\eqref{eq: def of projection of Laplace-Beltrami operator} and Corollary~\ref{cor:clean_descent}, $\Delta_\M f= \lambda f$ if and only if $\pi^\ast(\Delta_\N\bar f) =\lambda~\pi^\ast \bar f $. The result follows from the surjectivity of $\pi$.
\end{proof}

We illustrate these results with the following example.

\begin{example}\label{ex:so2_on_so3}
Consider $\M = \mathrm{SO}(3) \subset \RR^{3\times 3}$ with the induced Frobenius metric\footnote{This metric differs from the more standard bi-invariant metric in \cite{chirikjian2016harmonic} by $ g_\mathrm{Frob} = 2 g_\mathrm{bi-inv}$, which implies $\Delta_{\mathrm{SO}(3)}^\mathrm{Frob} =  \Delta_{\mathrm{SO}(3)}^\mathrm{bi-inv}$/2.} from $\RR^{3\times 3}$. Let
$G = \mathrm{SO}(2)$ act on $SO(3)$  by $R \cdot A = AR^T$ for $R \in \mathrm{SO}(2)$ and $A \in \mathrm{SO}(3)$, where $\mathrm{SO}(2)$ is embedded in $\mathrm{SO}(3)$ as rotations about the $z$-axis. 
The quotient manifold $\N=\mathrm{SO}(3)/\mathrm{SO}(2)$ is diffeomorphic to the 2-sphere $\mathbb{S}^2$ \citep[Example 21.19(a)]{lee2003introduction}.
Moreover, all orbits of this action have volume $2\pi\sqrt{2}$ since they are all isometric to $\mathrm{SO}(2)$, so the density function $\delta$ is constant.
If we parametrize $R \in \mathrm{SO}(2)$ by the rotation angle $\theta \in [0,2\pi)$ and $A \in \mathrm{SO}(3)$ with $ZYZ$ Euler angles
$\alpha \in [0, 2\pi), \beta \in [0, \pi],  \gamma \in [0, 2\pi)$,
then $R$ acts on $A$ by $
    R(\theta)\cdot A(\alpha,\beta,\gamma)
    =
    A(\alpha,\beta,\gamma - \theta).$
It follows that the $\mathrm{SO}(2)$-invariant functions
$f:\mathrm{SO}(3)\to\RR$ are precisely those which do not depend on the Euler
angle $\gamma$.
The Laplace--Beltrami operator on $\mathrm{SO}(3)$ is \citep[Eq. (9.22)]{chirikjian2016harmonic}
\begin{align}
\Delta_{\mathrm{SO}(3)}
= \frac{1}{2}\left(
-\frac{\partial^2}{\partial\beta^2}
-\cot\beta \frac{\partial}{\partial \beta}
-\frac{1}{\sin^2\beta}
\left(
\frac{\partial^2}{\partial\alpha^2}
-2\cos\beta \frac{\partial^2}{\partial\alpha \partial\gamma}
+\frac{\partial^2}{\partial\gamma^2}
\right) \right).
\end{align}
Since a $\mathrm{SO}(2)$-invariant function $f$ does not depend on $\gamma$, the $\gamma$-derivatives vanish, and Corollary~\ref{cor:clean_descent} gives
\begin{align}
    \Delta_{\mathrm{SO}(3)}f
    =
    \frac{1}{2}
    \left(
        -\frac{\partial^2 f}{\partial\beta^2}
        -\cot\beta \frac{\partial f}{\partial \beta}
        -\frac{1}{\sin^2\beta} \frac{\partial^2 f}{\partial\alpha^2}
    \right)
    =
    \pi^\ast(\Delta_{\mathbb{S}^2}\overline{f})
\end{align}
where $\mathbb{S}^2$ is parameterized by the polar angle $\beta$ and the azimuthal angle $\alpha$. 
The eigenfunctions of $\Delta_{\mathrm{SO}(3)}$ are linear combinations of the
unitary irreducible representations
\begin{align}
U^\ell_{mn}(A(\alpha,\beta,\gamma))
=
 e^{-im\alpha}P^\ell_{mn}(\cos\beta)e^{-in\gamma},
\end{align}
with eigenvalue $\ell(\ell+1)/2$,
where $P^\ell_{mn}$ denotes the generalized associated Legendre polynomials \citep[Eq. (9.41)]{chirikjian2016harmonic}. The $\mathrm{SO}(2)$-invariant
eigenfunctions, those with $n=0$, are therefore linear combinations of
\begin{align}
U^\ell_{m0}\left(A(\alpha,\beta,\gamma)\right) 
= e^{-im\alpha}P^\ell_{m0}(\cos\beta)
= (-1)^m
\sqrt{\frac{4\pi}{2\ell+1}}\overline{Y^m_\ell}(\beta,\alpha),
\end{align}
where $Y^m_\ell$ denotes the spherical harmonic on the unit sphere \citep[Eq. (9.42)]{chirikjian2016harmonic}, in agreement with Corollary~\ref{cor:eigen_descend}.
In particular, for $\ell =1$ and $m=0$ we have $U^1_{00}\left(A(\alpha,\beta,\gamma)\right) = P^1_{00}(\cos\beta) = \cos\beta$, an eigenfunction of $\Delta_{\mathrm{SO}(3)}$ with eigenvalue $1(1 +1)/2 = 1$.
\end{example}

\subsection{A Numerical Example}\label{sec: numerical example}
We illustrate the improved convergence rate in Eq.~\eqref{eq: convergence with improved rate} compared to spectral embeddings that are not symmetry-aware with a numerical example in the special case of a constant density  $\delta$. Examples with non-constant $\delta$ that illustrate Theorem~\ref{thm: convergence of G-invariant graph Laplacian} are in Section~\ref{sec:experiments}.

Consider the action of $G = \mathrm{SO}(2)$ on $\M = \mathrm{SO}(3) \subseteq \RR^{3\times 3}$ from 
Example~\ref{ex:so2_on_so3} and the
$G$-invariant function $f(R) =  \cos\beta(R),$  where $0 \leq \beta(R) \leq \pi$ is the second Euler angle in the $ZYZ$ parametrization of $R \in \mathrm{SO}(3)$. The function $f$ satisfies $\Delta_\M f = f$ (Example~\ref{ex:so2_on_so3}), and since $\delta$ is constant, Corollary~\ref{cor:clean_descent} implies that $\Delta_\N \overline{f} =  \overline{f}$. Therefore, 
at $R_0 = I_3$, we have $\Delta_\M f( {I_3}) = 1$ and $\Delta_\N \overline{f}({[I_3]}) = 1$. By writing the Riemannian metric elements in terms of Euler angles and using the formula for the Riemannian gradient, a straightforward calculation shows that $\nabla_\M f({I_3}) = 0$.

We uniformly sample $n = 10000$ data points from the Haar distribution on $\mathrm{SO}(3)$ and $m = 200$ group elements from the Haar distribution on $\mathrm{SO}(2)$. We then use the Euclidean, minimum, and integral kernels to compute $(L_{RW} f)(I_3)$ for each kernel. By Theorem~\ref{thm: convergence of standard graph Laplacian to Laplace-Beltrami operator}, for the classical Euclidean kernel, $\frac{4}{\varepsilon} (L_{RW} f)(I_3)$ approximates $\Delta_\M f({I_3}) = 1$. Meanwhile, by Theorem~\ref{thm: convergence of G-invariant graph Laplacian}, for the minimum and integral kernels, $\frac{4}{\varepsilon}(L_{RW} f)(I_3)$ approximates $D \overline{f}([I_3]) = \Delta_\N \overline{f}({[I_3]}) = 1$ since $\delta$ is constant. 
Figure~\ref{fig:so3_singlepoint_results} shows a logarithmic plot of the mean absolute error over $T = 1000$ trials between $\frac{4}{\varepsilon} (L_{RW} f)(I_3)$ and $1$ versus $\varepsilon$ for each kernel.
As explained in Remark~\ref{rmk: improved convergence}, the condition $\nabla_\M f(I_3) = 0$ implies that the predicted slopes of the error in the variance-dominated region (small values of $\varepsilon$) are $-0.75$ for the Euclidean kernel and  $-0.5$ for the minimum and integral kernels. Linear fits give slopes of $-0.853$, $-0.503$ and $-0.505$ for the Euclidean, minimum and integral kernels, respectively. For the minimum and integral kernels, the fitted slopes are consistent with the corresponding predictions. By contrast, for the Euclidean kernel, approximately 20 times more data points are required for a linear fit to be correct to one decimal place. 
\begin{figure}[h!]
\centering
    \includegraphics[scale=0.4]{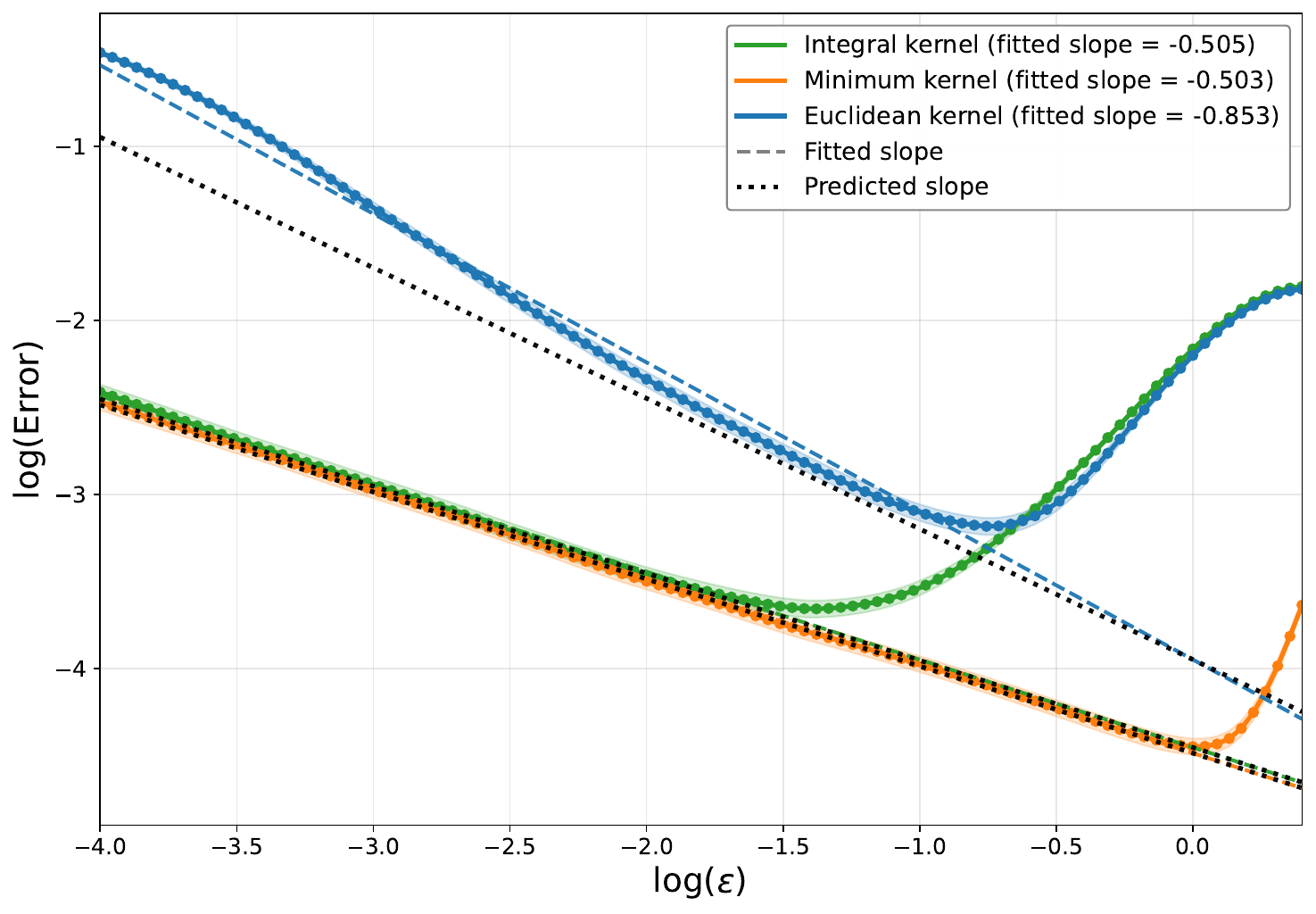}
    \caption{Error between $\frac{4}{\varepsilon} (L_{RW}f)(I_3)$ and $1$ versus $\varepsilon$ for the $G$-invariant function $f(R) = \cos\beta(R)$ using the Euclidean, minimum, and integral kernels. Shaded regions show the 95\% confidence intervals across 1000 trials.
    }
    \label{fig:so3_singlepoint_results}
\end{figure}

\section{Proof of Theorem~\ref{thm: convergence of G-invariant graph Laplacian}}
\label{sec: Proof of main thm}
We split the proof of Theorem~\ref{thm: convergence of G-invariant graph Laplacian} into three parts, corresponding to the three G-invariant kernels considered in the statement of the theorem: the minimum kernel (Section~\ref{sec: proof min kernel}), the integral kernel (Section~\ref{sec: proof integral kernel}), and the invariant features kernel (Section~\ref{sec: proof invariant features kernel}).
We note that the implicit constants in the $O$ notation below depend on $\mathcal{M}, G, \mathcal{N}$, $h$ and $x_i$ but not on $n$ or $\varepsilon$.

\subsection{Minimum Kernel}\label{sec: proof min kernel}
Fix a sufficiently small $\varepsilon > 0$ and a data point $\x_i \in \M$. Let $F(y) := K_{\min} (\x_i, y) f(y)$ and $G(y) := K_{\min} (\x_i, y)$ for all $y \in \M$.
We prove the desired result in three steps. First, we derive an asymptotic approximation for $\mathbb{E}[F]/\mathbb{E}[G]$ up to order $O(\varepsilon)$. Second, we use Bernstein's inequality to show that $\sum_{j \neq i} F(\x_j) / \sum_{j \neq i} G(\x_j)$ converges to $\mathbb{E}[F] / \mathbb{E}[G]$ at rate $O_P\left({n^{-1/2}\varepsilon^{1/2 - (d-p)/4}}\right)$.  Third, we combine steps 1 and 2  to obtain the stated error bound on $L_{RW}$. \\

\noindent {\textbf{Step 1}.}\\
 To compute the expectations of $F$ and $G$, we convert the expectations over $\M$ into integrals over the quotient $\N$ by using the following Fubini-type integration formula. This is a special case of a classical result for Riemannian submersions~\cite[Thm. 5.6]{Sakai1996}, which we adapt to our setting in Appendix~\ref{app: proof of Fubini formula}.

\begin{lemma}[Fubini-like quotient integration formula]\label{lem: Fubini formula for integration over M}
Let $(\M, g_\M)$ be a Riemannian manifold and let $G$ be a compact Lie group acting smoothly, freely, and isometrically on $\M$. Let $(\N, g_\N)$ denote the  Riemannian quotient manifold arising from the action of $G$ on $\M$. Let $\overline{\delta}$ denote the  function on $\N$ induced by the density function $\delta$ given in Definition \ref{def: density function}. Then given any integrable function $f \colon \M \to \RR$, we have that
\begin{align} \label{eq: Fubini integral on M}
    \int_\M f(x)dV_\M(x)
    =
    \int_{\N}\left( \int_{G} f(\alpha \cdot x) d\eta(\alpha) \right)\overline{\delta}([x])dV_\N([x]),
\end{align}
where $dV_\M$ and $dV_\N$ denote the Riemannian measures on $\M$ and $\N$ determined by the metrics $g_\M$ and $g_\N$, respectively, and $d\eta$ is the Haar measure on $G$ such that $\eta(G) = 1$. 
If, in addition, the function $f\colon \M \to \RR$ is G-invariant, this identity simplifies to
\begin{align} \label{eq: Fubini integral on M for G-invariant function}
    \int_\M f(x)dV_\M(x)
    =
    \int_{\N} \overline{f}([ x])  \overline{\delta}([x])dV_\N([x]).
\end{align}
\end{lemma}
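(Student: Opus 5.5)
We prove this Fubini-type formula by localizing with a partition of unity on $\N$. Over each chart the quotient map $\pi\colon \M \to \N$ is a trivial principal $G$-bundle, and in such a trivialization the Riemannian volume of $\M$ splits as the volume of $\N$ times the induced volume on the orbits. The density $\delta$ then converts the orbit volume into Haar measure.

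\textbf{Local trivialization.} Because the action is free and proper ($G$ compact), $\pi\colon\M\to\N$ is a smooth principal $G$-bundle. It is also a Riemannian submersion: $g_\N$ is defined so that $d\pi$ restricted to the horizontal space $H_x=(T_x(G\cdot x))^\perp$ is an isometry onto $T_{[x]}\N$. Cover $\N$ by open sets $U_k$ admitting local sections $s_k\colon U_k\to\M$, and choose a smooth partition of unity $\{\chi_k\}$ on $\N$ subordinate to this cover. Pulling back, $\{\chi_k\circ\pi\}$ is a $G$-invariant partition of unity on $\M$. It therefore suffices to prove the formula for $f$ supported in $\pi^{-1}(U_k)$, since $f=\sum_k (\chi_k\circ\pi) f$ and the factor $\chi_k\circ\pi$ passes unchanged through the inner integral over $G$. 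On $\pi^{-1}(U_k)$ use the diffeomorphism
\[
\Psi\colon U_k\times G\to\pi^{-1}(U_k),\qquad \Psi(u,\alpha)=\alpha\cdot s_k(u).
\]

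\textbf{Volume computation (the main obstacle).} We must show that $\Psi^\ast dV_\M = \delta(s_k(u))\, dV_\N(u)\, d\eta(\alpha)$. At a point $x=\Psi(u,\alpha)$, the tangent space decomposes orthogonally as $V_x\oplus H_x$, where $V_x=T_x(G\cdot x)$ is the vertical space. The differential $d\Psi$ maps $T_\alpha G$ onto $V_x$ via the orbit map $\theta^{(s_k(u))}$ composed with the isometry $\alpha\cdot$. It maps $T_uU_k$ into $T_x\M$ with some vertical component, but the horizontal component projects onto $T_u\N$ via $d\pi\circ d\Psi(\cdot,0)=\mathrm{id}$. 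In block form with respect to $V_x\oplus H_x$, the matrix of $d\Psi$ is therefore block triangular. Its Jacobian determinant, measured with respect to orthonormal frames, is the product of two factors:
\begin{enumerate}
\item the horizontal factor, which equals $1$ because $\pi$ is a Riemannian submersion;
\item the vertical factor, which equals the Jacobian of the orbit map $\alpha\mapsto\alpha\cdot s_k(u)$ from $(G,d\eta)$ to $(G\cdot x,dV_{G\cdot x})$, and this is $\delta(x)$ by Definition~\ref{def: density function}.
\end{enumerate}
This Jacobian bookkeeping is the step that needs care. Two points must be checked: that the vertical component of $d\Psi(\partial_u)$ does not affect the determinant (it does not, by triangularity), and that the Jacobian of the orbit map is constant along the orbit. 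The latter holds because $G$ acts by isometries and $\eta$ is left-invariant, which is exactly why $\delta$ is well defined and $G$-invariant. Since $\delta$ is $G$-invariant, we have $\delta(x)=\overline\delta(u)$.

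\textbf{Conclusion.} By the change of variables formula and Fubini's theorem on the product $U_k\times G$,
\[
\int_{\pi^{-1}(U_k)} f\,dV_\M=\int_{U_k}\Big(\int_G f(\alpha\cdot s_k(u))\,d\eta(\alpha)\Big)\overline\delta(u)\,dV_\N(u).
\]
The inner integral equals $\int_G f(\alpha\cdot x)\,d\eta(\alpha)$ for any representative $x$ of $u$, by left-invariance of Haar measure. Summing over $k$ gives Eq.~\eqref{eq: Fubini integral on M}. If $f$ is $G$-invariant, the inner integral reduces to $\overline f([x])\,\eta(G)=\overline f([x])$, which gives Eq.~\eqref{eq: Fubini integral on M for G-invariant function}. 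Integrability of $f$ justifies applying Fubini on each piece, and the sum over $k$ is finite because $\N$ is compact; for general $\N$, local finiteness of the cover suffices.
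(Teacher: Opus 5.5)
Your argument is correct, but it takes a different route from the paper. The paper's proof is essentially two lines. Since $\pi$ is a Riemannian submersion with fibers $G\cdot x$, it quotes the fiber-integration formula for Riemannian submersions \cite[Thm.~5.6]{Sakai1996}, namely $\int_\M f\,dV_\M=\int_\N\bigl(\int_{G\cdot x} f\,dV_{G\cdot x}\bigr)\,dV_\N$. It then substitutes $dV_{G\cdot x}=\delta(x)\,d\eta$ and pulls out $\overline{\delta}$.

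You instead prove that fiber formula directly in the principal-bundle setting, using local sections, a $G$-invariant partition of unity, and the block-triangular Jacobian of $\Psi(u,\alpha)=\alpha\cdot s_k(u)$. This merges the paper's two steps into one, because the vertical block is exactly the orbit-map Jacobian that defines $\delta$. The paper's route is shorter and rests on a theorem valid for arbitrary Riemannian submersions, with no group structure needed. Yours is self-contained and shows exactly where each hypothesis enters: freeness gives the local trivializations, the isometric action makes the orbit Jacobian constant in $\alpha$, and the submersion property makes the horizontal factor $1$.

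There are two small slips to fix:
\begin{enumerate}
\item Replacing $s_k(u)$ by another representative $x=\beta\cdot s_k(u)$ in the inner integral uses the substitution $\alpha\mapsto\alpha\beta^{-1}$. That is \emph{right}-invariance of $\eta$, not left-invariance, and the same is true for the $G$-invariance of $\delta$. Both hold because compact groups are unimodular.
\item For noncompact $\N$, local finiteness of the cover does not by itself justify swapping the countable sum over $k$ with the integrals. Apply monotone convergence to $|f|$ first, then dominated convergence to $f$.
\end{enumerate}
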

We now consider local smoothing with the minimum kernel.
Due to symmetry, $K_\mathrm{min}$ induces a kernel $\overline{K}_{\mathrm{min}}$ on the quotient manifold $\N$ given by
\begin{align}
    \overline{K}_{\mathrm{min}}([x],[y]) 
    =
    \exp
    \left(
        -\min_{\alpha \in G} \| x - \alpha \cdot y \|^2/\varepsilon
    \right), \quad \ [x],[y]\in\N.
\end{align}
The minimum value over $G$ is continuous since $G$ is a compact Lie group, so for a fixed $x_i$ the minimum kernel $K_{\mathrm{min}}(x_i, y)$ is a continuous function on $\M$ and thus integrable by the compactness of $\M$. 
By Eq.~\eqref{eq: Fubini integral on M for G-invariant function} we have that
\begin{align}\label{eq: integration of f against min kernel}
\begin{split}
    \mathbb{E}\left[F\right] 
    &= \frac{1}{\vol(\M)} \int_\M K_{\min}(\x_i, y) f(y)dV_\M(y) = \frac{1}{\vol(\M)} \int_\N \overline{K}_\mathrm{min}([\x_i], [y]) \overline{f}([y])\overline{\delta}([y])dV_\N([y]). 
\end{split}
\end{align}
The following lemma, proved in Appendix~\ref{app:integration wrt minimum kernel}, is needed to evaluate the integral in Equation~\eqref{eq: integration of f against min kernel}. 
\begin{lemma}[Asymptotics of local smoothing with the minimum kernel]\label{lem: integration wrt minimum kernel}
Let $q = \dim(\N)$. Given a smooth function $h \colon \N \to \RR$ and a fixed $[x] \in \N$, for all sufficiently small $\varepsilon$ we have that
    \begin{align}
    \frac{1}{(\pi\varepsilon)^{q /2}} \int_\N \overline{K}_{\min}([x], [y]) h([y])dV_\N([y] )= h([x]) + \frac{\varepsilon}{4}\left(E([x])h([x]) - \Delta_\N h([x])\right) + O(\varepsilon^2),
\end{align}
where $dV_\N$ denotes the Riemannian measure on $\N$ determined by the metric $g_\N$, and $E([x])$ is some function on $\N$ that depends on the curvature of $\N$ at $[x]$ and on the embedding of $\M$ into $\RR^D$. 
\end{lemma}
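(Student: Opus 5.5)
The idea is to reduce the lemma to the standard Gaussian-kernel smoothing expansion on a $q$-dimensional manifold. The key observation is that $\overline{K}_{\min}([x],[y]) = \exp(-\overline{d}([x],[y])^2/\varepsilon)$, where
\[
\overline{d}([x],[y]) := \min_{\alpha\in G}\|x-\alpha\cdot y\|
\]
is the Hausdorff distance between the orbits $G\cdot x$ and $G\cdot y$ viewed as subsets of $\RR^D$. Since the orbits are compact and disjoint for $[x]\neq[y]$, $\overline{d}$ is positive off the diagonal and bounded below away from it. By compactness, the region $\overline{d}([x],[y]) \ge \varepsilon^{1/2-\tau}$ therefore contributes only $O(\varepsilon^\infty)$, and we may restrict to a small geodesic ball around $[x]$ in $\N$.

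\textbf{Step 1: normal coordinates and expansion of the distance.} Choose normal coordinates $s\in\RR^q$ on $\N$ centred at $[x]$, with $\exp_{[x]}(s)=[y]$. Take the horizontal lift: the $\M$-geodesic $\gamma(t)=\exp_x(t\tilde s)$ with $\tilde s$ horizontal at $x$. Because $\pi$ is a Riemannian submersion, $\gamma$ stays horizontal and projects onto the $\N$-geodesic. I then claim that, near the diagonal,
\[
\overline{d}([x],[y])^2=\|x-\gamma(1)\|^2
\]
for the representative $y=\gamma(1)$. The argument goes as follows. The function $\alpha\mapsto\|x-\alpha\cdot y\|^2$ restricted to the orbit $G\cdot y$ is minimized at a point $y^\ast$ where $x-y^\ast$ is normal to $G\cdot y$. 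By $G$-equivariance and isometry, smoothness of the minimizer near the diagonal follows from a nondegenerate-critical-point argument: the Hessian along the orbit is approximately the induced metric, which is positive definite, and the implicit function theorem applies. Consequently $\overline{d}^2$ is a smooth function of $s$ near $0$. Using the standard expansion of chord versus geodesic length for a submanifold of $\RR^D$,
\[
\|x-\gamma(1)\|^2=\|s\|^2-\tfrac{1}{12}\|\ssf(\tilde s,\tilde s)\|^2+O(\|s\|^5),
\]
I expect $\overline{d}^2=\|s\|^2+Q_4(s)+O(\|s\|^5)$, with $Q_4$ quartic. Note that the minimizing representative may differ from $\gamma(1)$ at order $\|s\|^3$, but because we are at a critical point this only perturbs $\overline{d}^2$ at order $\|s\|^4$ or higher, and all such terms can be absorbed into $Q_4$.

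\textbf{Step 2: Laplace-method expansion.} Write $dV_\N = \bigl(1-\tfrac16\mathrm{Ric}(s,s)+O(\|s\|^3)\bigr)ds$ and
\[
h=h([x])+\nabla h\cdot s+\tfrac12 s^\top\nabla^2h\,s+O(\|s\|^3).
\]
Then expand
\[
e^{-\overline{d}^2/\varepsilon}=e^{-\|s\|^2/\varepsilon}\bigl(1-Q_4(s)/\varepsilon+\dots\bigr).
\]
Integrate against the Gaussian with the substitution $s=\sqrt\varepsilon u$. Odd moments vanish, so all cubic and quintic remainders contribute $O(\varepsilon^2)$. The Hessian term gives $\tfrac{\varepsilon}{4}\,\mathrm{tr}\nabla^2 h=-\tfrac{\varepsilon}{4}\Delta_\N h$ (with the paper's sign convention). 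The Ricci term and the $Q_4/\varepsilon$ term each contribute $\varepsilon\cdot h([x])$ times a constant depending only on curvature and on the second fundamental form. Collecting these constants defines $E([x])$. The normalization $(\pi\varepsilon)^{q/2}$ is exactly the Gaussian mass.

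The main obstacle is Step 1: rigorously establishing that $\overline{d}^2$ is smooth near the diagonal, with leading term $\|s\|^2$ in $\N$-normal coordinates and no cubic term. The absence of a cubic term is what guarantees the error is $O(\varepsilon^2)$ rather than $O(\varepsilon^{3/2})$. I would argue this by symmetry: $\overline{d}^2(\exp(s))$ and $\overline{d}^2(\exp(-s))$ agree up to order $4$, as they do for the chord length along a geodesic, since the cubic term of $\|x-\gamma(t)\|^2$ vanishes identically by the identity $\langle\gamma',\gamma''\rangle=0$.
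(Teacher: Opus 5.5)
Your argument is correct, but it reaches the $O(\varepsilon^2)$ expansion by a different mechanism than the paper.

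\textbf{The paper's route.} Its main argument never locates the Euclidean nearest point on the orbit. It combines the global chord--geodesic comparison $0\le d_\M^2(x,y)-\|x-y\|^2\le K_\M\|x-y\|^4$ with $d_\N([x],[y])=\min_{\alpha\in G}d_\M(x,\alpha\cdot y)$. This gives the sandwich $d_\N^2-K_\M d_\N^4\le\min_{\alpha}\|x-\alpha\cdot y\|^2\le d_\N^2$, hence $\overline{K}_{\min}=e^{-d_\N^2/\varepsilon}(1+O(d_\N^4/\varepsilon))$. It then runs the Gaussian computation in $\N$-normal coordinates. This route is elementary and global: the same comparison gives $d_\N\le K\min_\alpha\|x-\alpha\cdot y\|$, which is what localizes the integral. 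However, the remainder $O(d_\N^4/\varepsilon)$ is only a bound, not a polynomial, so odd-moment cancellation is unavailable. On its own it yields $O(\varepsilon^{3/2})$, with the embedding contribution to $E$ left as an unspecified ``$O(1)$''. The paper then appeals to smoothness of the squared orbit distance near the diagonal (tubular neighbourhood of $G\cdot x$) and to Singer's argument to reach $O(\varepsilon^2)$.

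\textbf{Your route.} You put smoothness first, via the implicit function theorem at a nondegenerate minimum along the orbit; this is where freeness enters. You then compute the Taylor coefficients through horizontal lifts of $\N$-geodesics. This delivers $O(\varepsilon^2)$ in one pass. It also makes $E$ an explicit, $\varepsilon$-independent function: the scalar curvature of $\N$ plus a quadratic expression in $\ssf$ on horizontal vectors. That independence is exactly what is needed for $E$ to cancel in $\mathbb{E}[F]/\mathbb{E}[G]$.

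\textbf{Points to make explicit.} Your route needs some extra geometric input.
\begin{enumerate}
\item Your initial claim $\overline{d}^2=\|x-\gamma(1)\|^2$ is not exact, because $x-\gamma(1)$ has an $O(\|s\|^3)$ component tangent to the orbit, coming from the tangential part of $\gamma'''$. Your later correction is the right statement. In fact the discrepancy is $O(\|s\|^6)$, since an $O(\|s\|^3)$ gradient meets a uniformly positive Hessian. Hence $Q_4(s)=-\tfrac{1}{12}\|\ssf(\tilde s,\tilde s)\|^2$ exactly.
\item The implicit function theorem only produces a local critical point near $\alpha=e$. To see that it is the global minimizer over $G$, use freeness plus compactness: $\|z-\alpha\cdot z\|$ is bounded below uniformly in $z\in\M$ for $\alpha$ outside a fixed neighbourhood of $e$.
\item The remainders $O(\|s\|^3)$ and $O(\|s\|^5)$ should be expanded one order further, into an odd homogeneous term plus a higher remainder, before you invoke odd-moment cancellation. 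Your smoothness result permits this.
\end{enumerate}
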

The explicit form of $E$ is not needed later; only the fact that it is a scalar function multiplying $h$.
Under the assumption that both $\delta:\M \to \RR$ and $f:\M \to \RR$ are smooth it follows that the projections $\overline{\delta}, \overline{f}:\N \to \RR$ are both smooth by the characteristic property of surjective smooth submersions [Theorem 4.29, \cite{lee2003introduction}].
Using Lemma~\ref{lem: integration wrt minimum kernel} with $h = \overline{f}\overline{\delta}$ gives
\begin{align}\label{eq: E[F]}
\mathbb{E}\left[F\right] = \frac{(\pi\varepsilon)^{q/2}}{\vol(\M)}
\left[ \overline{f}([\x_i])\overline{\delta}([\x_i]) + \frac{\varepsilon}{4}\left( E([\x_i])\overline{f}([\x_i])\overline{\delta}([\x_i]) - \Delta_\N (\overline{f}~\overline{\delta})([\x_i]) \right) + O(\varepsilon^2) \right].
\end{align}
By taking $f =1$, we obtain
\begin{align} \label{E[G]}
\mathbb{E}\left[G\right] = \frac{(\pi\varepsilon)^{q/2}}{\vol(\M)}
\left[ \overline{\delta}([\x_i]) + \frac{\varepsilon}{4}\left( E([\x_i])\overline{\delta}([\x_i]) - \Delta_\N \overline{\delta}([\x_i]) \right) + O(\varepsilon^2) \right].\end{align}
Using the Taylor expansion $1/ (1 + a\varepsilon) = 1 - a\varepsilon + O(\varepsilon^2)$, a straightforward calculation shows that the ratio is
\begin{align}\label{eq: ratio EF/EG}
\begin{split}
    \frac{\mathbb{E}\left[F\right]}{\mathbb{E}\left[G\right]} 
    &= \overline{f}([\x_i]) + \frac{\varepsilon}{4} \left[ \frac{\Delta_\N \overline{\delta}([\x_i])}{\overline{\delta}([\x_i])} \overline{f}([\x_i]) - \frac{\Delta_\N(\overline{f}~\overline{\delta})([\x_i])}{\overline{\delta}([\x_i])} \right]+ O(\varepsilon^2) \\
    &= \overline{f}([\x_i]) + \frac{\varepsilon}{4} \left[ -\Delta_\N \overline{f}([\x_i]) + \frac{2}{\overline{\delta}([\x_i])}\langle \nabla_\N \overline{\delta}([\x_i]), \nabla_\N \overline{f}([\x_i])\rangle \right]+ O(\varepsilon^2),
 \end{split}   
\end{align}
where the second line follows from applying the product rule for the Laplace--Beltrami operator: $\Delta_\N ( \overline{f}\,\overline{\delta}) = \overline{f} \Delta_\N \overline{\delta}  - 2\langle \nabla_\N \overline{f}, \nabla_\N \overline{\delta} \rangle + \overline{\delta} \Delta_\N \overline{f}$. Note also that $\overline{\delta}([x]) > 0$ for all $[x] \in \N$ since the action of $G$ on $\M$ is free, so all orbits have a positive volume.

\vspace{3mm}
\noindent \textbf{Step 2.}\\
Let $p_+(n, \alpha)$ and $p_-(n, \alpha)$ denote the probability of having an error greater than $\alpha$ or less than $-\alpha$, respectively, i.e., 
\begin{align}\label{eq: p_+N, alpha}
    p_+ (n, \alpha) = \mathbb{P}\left[  \frac{ \sum_{j \neq i} F(\x_j) }{\sum_{j \neq i}  G(\x_j) } - \frac{\mathbb{E}[F]}{\mathbb{E}[G]} >   \alpha \right], 
    \quad 
    p_- (n, \alpha) = \mathbb{P}\left[  \frac{ \sum_{j \neq i}  F(\x_j) }{\sum_{j \neq i} G(\x_j) } - \frac{\mathbb{E}[F]}{\mathbb{E}[G]} <  -\alpha \right].
\end{align}
Define $Y_j = \mathbb{E}[G]F(\x_j) - \mathbb{E}[F]G(\x_j) + \alpha \mathbb{E}[G](\mathbb{E}[G] - G(\x_j))$ for all $j \neq i$. Multiplying both sides of Eq.~\eqref{eq: p_+N, alpha} by $\mathbb{E}[G] \sum_{j\neq i} G(\x_j)$ and rearranging gives 
\begin{align}
    p_+ (n, \alpha) =  \mathbb{P}\left[ \sum_{j \neq i} Y_j >  (n-1)\alpha (\mathbb{E}[G])^2 \right].
\end{align}
Note that $Y_j$ are i.i.d. random variables with zero mean and variance given by
\begin{align}
    \mathbb{E}[Y_j^2] 
    = \mathbb{E}[G]^2\mathbb{E}[F^2] - 2\mathbb{E}[F]\mathbb{E}[G]\mathbb{E}[&FG]  +   \mathbb{E}[F]^2\mathbb{E}[G^2] + O(\alpha). \numberthis \label{eq: expectation of Yj}
\end{align}
Next we use Lemma~\ref{lem: Fubini formula for integration over M} and Lemma~\ref{lem: integration wrt minimum kernel} to compute the second moments of $F$ and $G$. We use the fact that $\overline{K}_{\mathrm{min}}^2([x_i], [y]; \varepsilon) = \overline{K}_{\mathrm{min}}([x_i], [y]; \varepsilon/2)$ for the integral in Lemma~\ref{lem: integration wrt minimum kernel}. We obtain
\begin{align}\label{eq: E[F^2]}
    \mathbb{E}[F^2] = \frac{(\pi\varepsilon)^{q/2}}{2^{q/2}\vol(\M)}
\left[ \overline{f}^2([\x_i])\overline{\delta}([\x_i]) + \frac{\varepsilon}{8}\left( E([\x_i])\overline{f}^2([\x_i])\overline{\delta}([\x_i]) - \Delta_\N (\overline{f}^2~\overline{\delta})([\x_i]) \right) + O(\varepsilon^2) \right],
\end{align}

\begin{align}\label{eq:E[G^2]}
\mathbb{E}\left[G^2\right] = \frac{(\pi\varepsilon)^{q/2}}{2^{q/2}\vol(\M)}
\left[ \overline{\delta}([\x_i]) + \frac{\varepsilon}{8}\left( E([\x_i])\overline{\delta}([\x_i]) - \Delta_\N \overline{\delta}([\x_i]) \right) + O(\varepsilon^2) \right],\end{align}
and
\begin{align}\label{E[FG]}
    \mathbb{E}[FG] = \frac{(\pi\varepsilon)^{q/2}}{2^{q/2}\vol(\M)}
\left[ \overline{f}([\x_i])\overline{\delta}([\x_i]) + \frac{\varepsilon}{8}\left( E([\x_i])\overline{f}([\x_i])\overline{\delta}([\x_i]) - \Delta_\N (\overline{f}~\overline{\delta})([\x_i]) \right) + O(\varepsilon^2) \right].
\end{align}
Substituting Eqs.~\eqref{eq: E[F]},~\eqref{E[G]},~\eqref{eq: E[F^2]},~\eqref{eq:E[G^2]}, and~\eqref{E[FG]} into Eq.~\eqref{eq: expectation of Yj} gives
\begin{align}
    \mathbb{E}[Y_j^2] = 
\frac{(\pi\varepsilon)^{3q/2}\overline{\delta}^3([\x_i])}{2^{q/2}\vol(\M)^3} \frac{\varepsilon}{8} \left[ -\Delta_\N \overline{f}^2([\x_i]) + 2\overline{f}([\x_i]) \Delta_\N \overline{f}([\x_i]) + O(\varepsilon) \right] + O(\alpha), 
\end{align}
and after applying the identity $-\Delta_\N \overline{f}^2([x_i]) + 2\overline{f}([x_i]) \Delta_\N \overline{f}([x_i]) = 2 \| \nabla_\N \overline{f}([x_i]) \|^2$, this simplifies to
\begin{align}
    \mathbb{E}[Y_j^2] =
\frac{(\pi\varepsilon)^{3q/2}\overline{\delta}^3([x_i])}{2^{q/2}\vol(\M)^3} \frac{\varepsilon}{4} \left[ \|\nabla_\N \overline{f}([x_i])\|^2 + O(\varepsilon) \right] + O(\alpha).    
\end{align}
We have that $F$ and $G$ are bounded random variables since $\M$ is compact, so by construction the random variables $Y_j$ are also bounded, say $|Y_j| < 3C$ for all $j \neq i$ for some $C > 0$. It follows from Bernstein's inequality that
\begin{align}\label{eq: p(n, alpha) after Bernstein}
\begin{split}
    p_+(n, \alpha) 
    &\leq \exp\left( -\frac{(n-1)^2\alpha^2\mathbb{E}[G]^4}{2(n-1)\mathbb{E}[Y_j^2] + 2C (n-1)\alpha \mathbb{E}[G]^2  }\right) \\
    &=\exp\left( - \frac{(n-1)\alpha^2 \bar{\delta}([x_i])2^{q/2 + 1} (\pi\varepsilon)^{q/2}}{\vol(\M)\varepsilon(\|\nabla_\N \overline{f}([x_i])\|^2 + O(\varepsilon) ) + O(\alpha)}\right). 
    \end{split}
\end{align}
 Assume $\nabla_\N \overline{f}([x_i]) \neq 0$; the case $\nabla_\N \overline{f}([x_i]) = 0$ is covered in Remark~\ref{rmk: improved convergence}. Fix a constant $M >0$ and set $\alpha = M n^{-1/2}\varepsilon^{1/2 - q/4}$. Then Eq.~\eqref{eq: p(n, alpha) after Bernstein} becomes
\begin{align*}
    p_+(n, \alpha) 
\leq \exp\left( -\frac{C_0 M^2}{1 +  C_1 M n^{-1/2}\varepsilon^{-1/2 - q/4} }\right) \numberthis
\end{align*}
for appropriate constants $C_0$ and $C_1$. 
We obtain the same bound for $p_-(n,\alpha)$ by changing the sign of $\alpha$ in the definition of $Y_j$, which does not affect the approximation of $\mathbb{E}[Y_j^2]$ in Eq.~\eqref{eq: expectation of Yj}.
Note that there exists a positive integer $N$ such that $C_1 n^{-1/2}\varepsilon^{-1/2 - q/4} < 1$ for all $n > N$, and in this case $p_\pm(n, \alpha) \leq \exp( - C_0 M^2/(1+M))$.
Thus for any $\delta > 0$ we can choose sufficiently large constants $M$ and $N$ such that  $p_{\pm}(n, \alpha)< \delta$ for all $n > N$. Recalling that $\alpha = M n^{-1/2}\varepsilon^{1/2 - q/4}$ and the definition of $p_{\pm}(n,a)$ in Eq.~\eqref{eq: p_+N, alpha} gives the error 
\begin{align}\label{eq: error bound on EF/EG - sums}
   \Bigg\vert\frac{ \sum_{j \neq i} F(\x_j) }{\sum_{j \neq i}G(\x_j) } - \frac{\mathbb{E}[F]}{\mathbb{E}[G]}\Bigg\vert = O_P\left(\frac{1}{n^{1/2}\varepsilon^{-1/2+q/4}}\right).
\end{align}

\vspace{3mm}
\noindent \textbf{Step 3.}\\
With the notation $F(y) = K_{\min} (\x_i, y) f(y)$ and $G(y) = K_{\min} (\x_i, y)$, $y \in \M$, we can rewrite Eq.~\eqref{eq: RW graph Laplacian acting on functions} as 
\begin{align}\label{eq: graph Laplacian with min kernel}
    \frac{4}{\varepsilon }L_{RW}f(\x_i)
    =\frac{4}{\varepsilon}\left[ f(\x_i) - \frac{ \sum_{j=1}^n F(\x_j)}{\sum_{j=1}^n G(\x_j)}\right].
\end{align}
As shown by \cite{SteerablePaper}, removing the diagonal terms from the sums introduces an $O\left({1}/{(n \varepsilon^{(d - p)/2}})\right)$ error, which is negligible compared to the variance error term in Eq.~\eqref{eq: error bound on EF/EG - sums}. Then from Eqs.~\eqref{eq: ratio EF/EG},~\eqref{eq: error bound on EF/EG - sums} and~\eqref{eq: graph Laplacian with min kernel} we see that
\begin{align}
\begin{split}
    &\Bigg\vert \frac{4}{\varepsilon} L_{RW}f(x_i) - \Big(\Delta_\N \overline{f}([\x_i]) - 2\langle \nabla_\N \log \overline{\delta}([\x_i]), \nabla_\N \overline{f}([\x_i])\rangle + O(\varepsilon)\Big)\Bigg\vert \\
  &=\Bigg\vert \frac{4}{\varepsilon}\left( \bar f([\x_i]) - \frac{ \sum_{j\neq i}F(\x_j) }{\sum_{j\neq i} G(\x_j) } \right) + \frac{4}{\varepsilon} \left( \frac{\mathbb{E}[F]}{\mathbb{E}[G]} - \bar{f}([x_i]) \right) \Bigg\vert = O_P\left(\frac{1}{n^{1/2}\varepsilon^{1/2 + q/4 }}\right). 
\end{split}
\end{align}
Recalling that $q = \mathrm{dim}(\N) = d - p$ gives the desired result.

\subsection{Integral Kernel}\label{sec: proof integral kernel}
Our proof for the integral kernel is based on the results of \citet{ROSEN2024}, which constructed a $G$-invariant graph Laplacian (G-GL) by considering the distances between
all the pairs of points generated by the action of $G$ on the dataset, followed by integration over $G$. The key result is their Theorem 11, which shows that the normalized $G$-GL converges to the Laplace--Beltrami operator $\Delta_\M$ on $\M$ with an improved variance error of $O_P(1/(n^{1/2}\varepsilon^{1/2 + (d - p)/4 }))$. We then use the projection of $\Delta_\M$ onto the quotient manifold $\N$ to obtain a differential operator on $\N$, as given in Lemma~\ref{lem: radial part of the Laplace-Beltrami operator}.

Given data points $\x_1, \dots, \x_n \in \M$, let $\tilde{L}$ denote the normalized $G$-GL Laplacian introduced by \cite{ROSEN2024}, which acts on functions $g \colon \{1, \dots, n\}\times G \to \RR$ by
\begin{align}\label{eq: G-invariant graph Laplacian}
   \tilde{L} g(i, \alpha) 
   =  g(i, \alpha) - \frac{\sum_{j=1}^n \int_G \exp\left(-\frac{\| \alpha\cdot\x_i - \beta \cdot \x_j \|^2}{\varepsilon}\right) g(j, \beta) d\eta(\beta)}{\sum_{j=1}^n \int_G \exp\left(-\frac{\| \x_i - \beta \cdot \x_j \|^2}{\varepsilon}\right) d\eta(\beta)}.
\end{align}
Now, given a $G$-invariant smooth function $f \colon \M \to \RR$, we obtain a function 
$g \colon \{1, \dots, n\}\times G \to \RR$ by defining $g(i, \alpha) = f(\alpha \cdot \x_i)$. Note that $g$ is independent of $G$, i.e. $g(i, \alpha) = g(i, e)$ for all $\alpha \in G$, due to the $G$-invariance of $f$. It follows immediately from the above definitions that $(L_{RW}f)(\x_i)$ coincides with $(\tilde{L}g)(i, e)$ for all $\x_1, \dots, \x_n$.
Therefore, by the convergence of the $G$-GL operator $\tilde{L}$  \cite[Thm 11]{ROSEN2024}, we obtain
\begin{equation}
    \frac{4}{\varepsilon} (L_{RW} f) (\x_i) = \frac{4}{\varepsilon} (\tilde{L}g)(i,e) = \Delta_\M f(\x_i) + O(\varepsilon) + O_P\left(\frac{1}{n^{1/2}\varepsilon^{1/2 + (d - p)/4 }}\right).
\end{equation}
Since $f$ is $G$-invariant, we have that $f = \pi^\ast \bar{f}$ with $\bar f$ the induced function on $\N$, so $\Delta_\M f(x_i) = P(\Delta_\M \bar{f})([x_i])$ by the definition of $P(\Delta_\M)$ in Eq.~\eqref{eq: def of projection of Laplace-Beltrami operator}. Applying  Lemma~\ref{lem: radial part of the Laplace-Beltrami operator} then gives
\begin{align}
    \Delta_\M f(\x_i) = \Delta_\N \overline{f} ([\x_i]) - \langle \nabla_\N \log \overline{\delta}([\x_i]), \nabla_\N \overline{f} ([\x_i]) \rangle.
\end{align}
Hence,
\begin{align}
    \frac{4}{\varepsilon}
    (L_{RW} f)(\x_i)
    =
    \Delta_\N \overline{f}([\x_i])
    -
    \langle \nabla_\N \log \overline{\delta}([\x_i]), \nabla_\N \overline{f}([\x_i]) \rangle+ O(\varepsilon)
    +
    O_P\left(
        \frac{1}{n^{1/2}\varepsilon^{1/2 + (d - p)/4 }}
    \right),
\end{align}
as we wished to prove.

\subsection{Invariant Features Kernel}\label{sec: proof invariant features kernel}
Let $\phi$ be the $G$-invariant map as defined in Eq.~\eqref{eq: invariant feature kernel}. By assumption, $\mathrm{im}\phi$ is a submanifold of $\RR^E$ diffeomorphic to $\N$ and therefore it has dimension $q = d - p$. Given data points $\x_1, \dots, \x_n$ sampled from the uniform measure $dV_\M$ on $\M$, then $\phi(\x_1), \dots, \phi(\x_n)$ are sampled from the pushforward measure $\phi_\ast(dV_\M)$ on $\mathrm{im}\phi$. Let $p_\phi$ denote the PDF of $\phi_\ast (dV_\M)$.
Let $\tilde{L}_{RW}$ denote the normalized graph Laplacian constructed from the standard Gaussian kernel on $\mathrm{im}(\phi)$ and let $\bar{\phi}$ be the projection of $\phi$ onto $\N$. Consider the smooth function $\overline{f} \circ \bar{\phi}^{-1} \colon \mathrm{im}\phi \to \RR$ and note that $f(x) = \overline{f}\circ \bar{\phi}^{-1}(\phi(x))$ for all $x \in \M$, so it follows that $L_{RW} f(\x_i) = \tilde{L}_{RW} \overline{f} \circ \bar{\phi} ^{-1} (\phi(\x_i))$ for all data points $\x_i$.
Denoting $\y_i := \phi(\x_i)$, Theorem \ref{thm: convergence of standard graph Laplacian to Fokker-Planck operator} implies
\begin{align}
\begin{split}
    &\frac{4}{\varepsilon}
    L_{RW} f(\x_i)
    =
    \frac{4}{\varepsilon} \widetilde{L}_{RW} \overline{f} \circ \bar{\phi} ^{-1} (\y_i) \\
    &=
    \Delta_{\mathrm{im}\phi}\overline{f} \circ \bar{\phi}^{-1} (\y_i)
    -
    2\,\bigl\langle \nabla_{\mathrm{im}\phi} \log p_\phi\left(\y_i\right), \nabla_{\mathrm{im}\phi} \overline{f} \circ \bar{\phi} ^{-1} (\y_i) \bigr\rangle
    +
    O_P\!\left(\varepsilon + \frac{1}{n^{1/2}\,\varepsilon^{1/2+(d-p)/4}}\right).
\end{split}
\label{eq:LN-expansion}
\end{align}
\noindent To conclude the proof, let us write 
\begin{align}
    \D_{\mathrm{im}\phi}(\overline{f}\circ\bar{\phi}^{-1})\left(\y_i\right)
    =
    \Delta_{\mathrm{im}\phi}(\overline{f} \circ \bar{\phi}^{-1} )(\y_i) 
    -
    2\langle \nabla_{\mathrm{im}\phi}\log p_\phi(\y_i), \nabla_{\mathrm{im}\phi} (\overline{f} \circ \bar{\phi}^{-1}) (\y_i)\rangle,
\end{align}
and note that 
\begin{align}
    \D_{\mathrm{im}\phi}\left(\overline{f}\circ\bar{\phi}^{-1}\right)
    \left( \phi(\x_i) \right)
    =
    \D_{\mathrm{im}\phi} ( \overline{f} \circ \bar{\phi}^{-1})
    \left( \bar{\phi}([\x_i]) \right)
    =
    (\bar{\phi}^\ast \D_{\mathrm{im}\phi})\overline{f}([\x_i]),
\end{align}
where $\bar{\phi}^\ast \D_{\mathrm{im}\phi}$ denotes the pullback of the operator $\D_{\mathrm{im} \phi}$ to $\N$.

\section{Experiments}\label{sec:experiments}
In this section, we demonstrate our framework through several numerical experiments, performing spectral embedding with each invariant kernel and comparing to the standard Euclidean kernel $\exp(-\|x - y\|^2/\varepsilon)$. The code for reproducing these results is available at:
\url{https://github.com/yearivig/G_invariant_spectral_embedding}

\subsection{3D Point Clouds}

We obtained synthetic point clouds from the molecular structure of the Glucagon polypeptide, entry 1GCN \citep{Sasaki1975} in the Protein Data Bank (PDB) \citep{Rose2017}, each consisting of $2080$ 3D points.  
We generated $n = 800$ point clouds by rotating the $\psi$-torsion angle of the 19th amino acid residue within the Glucagon over a full circle, see Figure~\ref{fig:overlay_glucagon}.  The motion of the residue comprises the intrinsic geometry in this experiment.
\begin{figure}[h]
    \centering
    \begin{subfigure}[b]{0.4\textwidth}
        \centering
        \includegraphics[width=\textwidth]{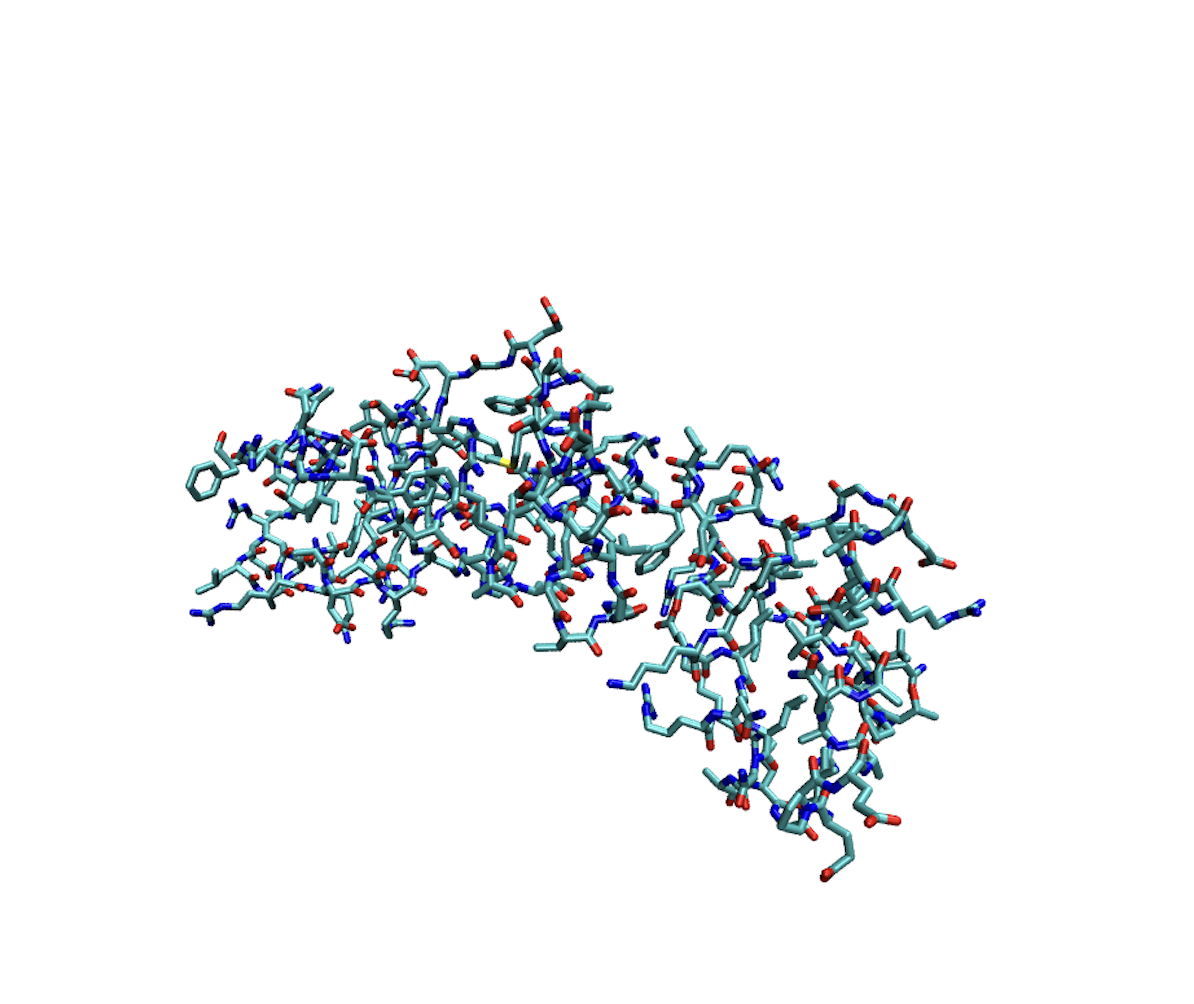}
        \caption{A single conformation.}
        \label{fig:overlay_glucagon_single}
    \end{subfigure}%
    \hspace{0.03\textwidth}%
    \begin{subfigure}[b]{0.4\textwidth}
        \centering
        \includegraphics[width=\textwidth]{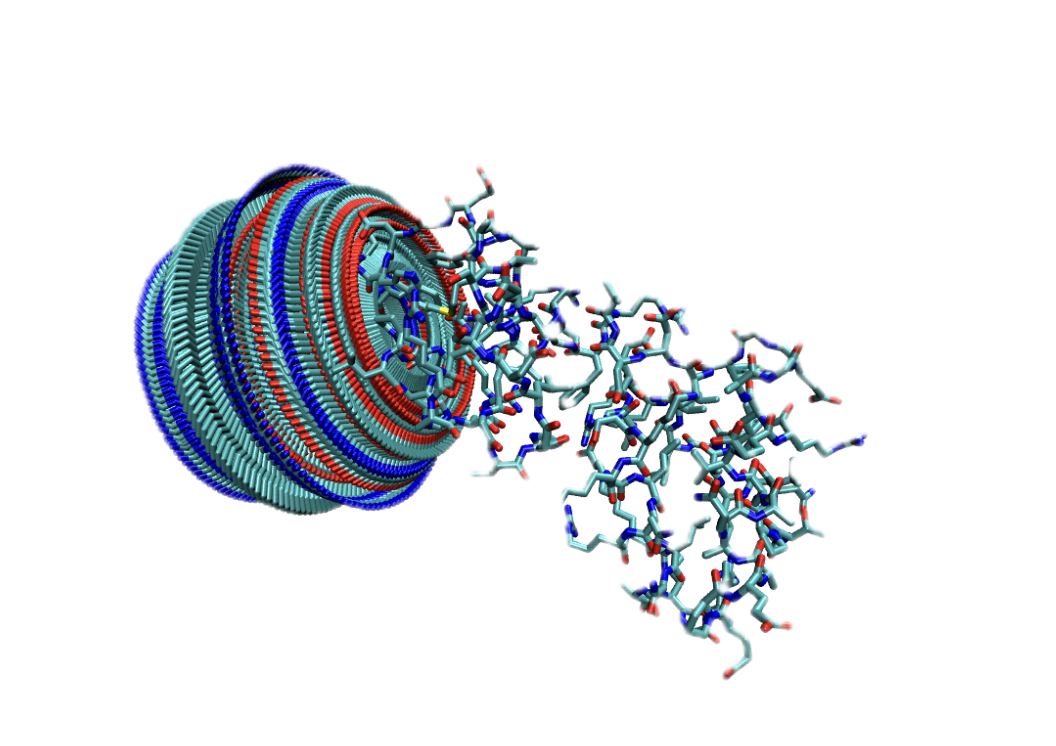}
        \caption{Overlay of all frames.}
        \label{fig:overlay_glucagon_all}
    \end{subfigure}
    \caption{The Glucagon molecule as a point cloud (the displayed sticks are for visualization purposes): a single conformation (left) and the overlay of all frames showing the full rotation movement of the $\psi$-torsion angle (right), shown at the same viewing angle. In our experiments each conformation is treated as a point cloud of the $2080$ atomic coordinates.}
    \label{fig:overlay_glucagon}
\end{figure}

We then centered each point cloud and applied a random global rotation sampled uniformly from $\mathrm{SO}(3)$. The global rotations represent nuisance parameters and extrinsic motion.  
The generated point clouds constituted the clean dataset. The noisy dataset was created by adding i.i.d. Gaussian noise with mean zero and variance given by 1/10 the Frobenius norm of each point cloud.
The point clouds lie (approximately) on a manifold $\M$ of dimension $d = 4$, and their intrinsic geometry is a circle, i.e., $\N \cong \mathbb{S}^1$. 

We performed two-dimensional spectral embeddings of the data using Algorithm~\ref{alg:spectral embedding} with each of the following kernels: Euclidean, minimum kernel, integral kernel, and invariant features kernel. For the three invariant kernels, the symmetry group is $G = \mathrm{SO}(3)$, acting on a point cloud $X \in \RR^{2080 \times 3}$ via $XR^\top$ for a $3 \times 3$ rotation matrix $R$.
For invariant features, we use the map given by the Gram matrix\footnote{Since $\mathrm{SO}(3) \subset \mathrm{O}(3)$, the Gram matrix $XX^\top$ is in particular $\mathrm{SO}(3)$-invariant.} (Example~\ref{ex: Gram matrix} in Section~\ref{sec:G-invariant kernels}). The bandwidth parameter $\varepsilon$ was chosen from a fixed default grid: $\varepsilon = 47$ for the minimum and integral kernels, and $\varepsilon = 3000$ for the Euclidean and invariant features kernels. Figure~\ref{fig: point clouds experiment} shows the resulting embeddings for both the clean and noisy datasets. The key finding is that the embeddings based on each of the three $\mathrm{SO}(3)$-invariant kernels recover the correct intrinsic geometry of the data: a circle. In contrast, the embedding based on the Euclidean kernel fails to capture this circular geometry even for a large number of samples. The Euclidean embedding is also visibly noisier, consistent with its slower convergence rate.  

\begin{figure}[h!]
\centering
\setlength{\tabcolsep}{3pt}
\renewcommand{\arraystretch}{1.15}
\begin{tabular}{@{}c c cccc@{}}
\toprule
Noise & $n$ & Minimum & Integral & \makecell{Invariant\\Features} & Euclidean \\
\midrule
  & 200 & \makecell{\expfig{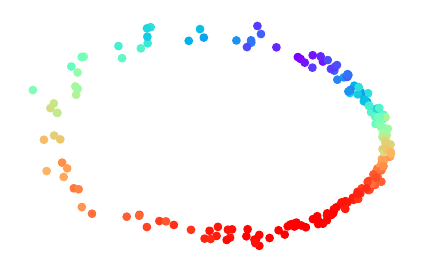}}
        & \makecell{\expfig{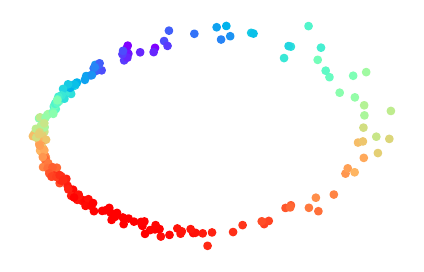}}
        & \makecell{\expfig{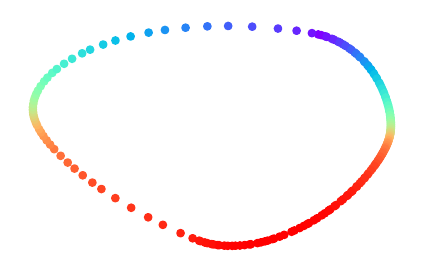}}
        & \makecell{\expfig{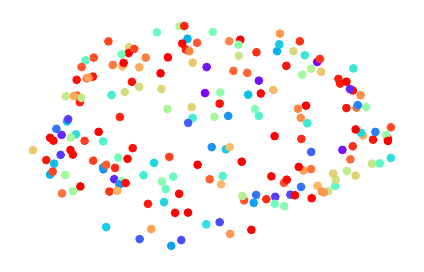}} \\
Clean & 400 & \makecell{\expfig{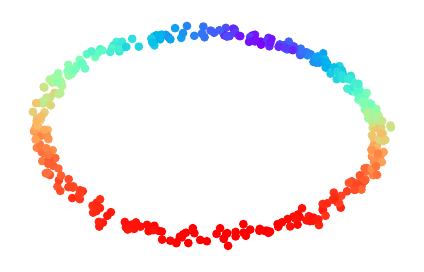}}
        & \makecell{\expfig{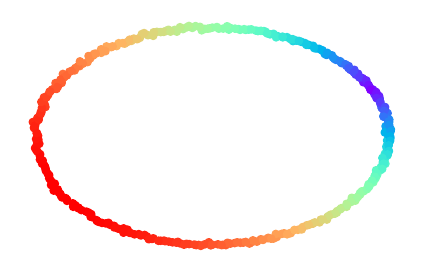}}
        & \makecell{\expfig{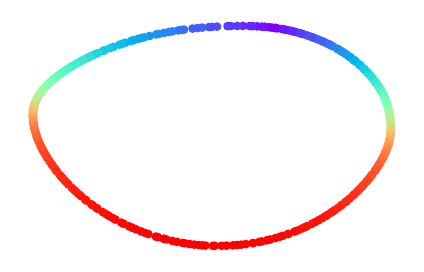}}
        & \makecell{\expfig{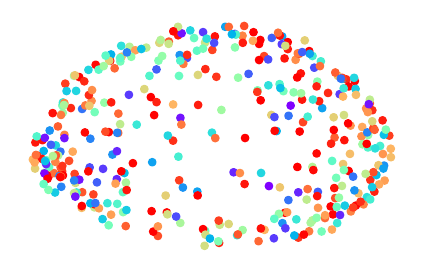} }\\
  & 800 & \makecell{\expfig{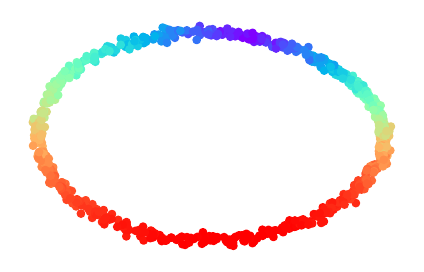}}
        & \makecell{\expfig{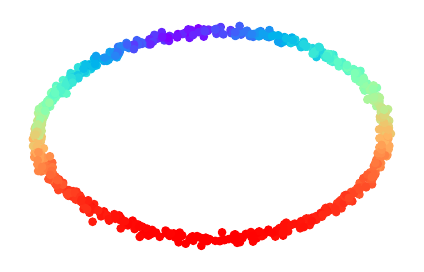}}
        & \makecell{\expfig{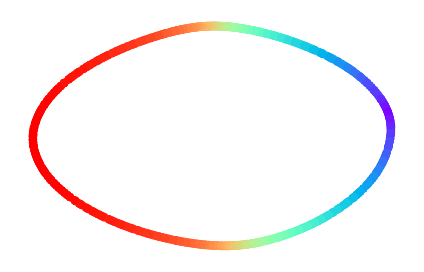}}
        & \makecell{\expfig{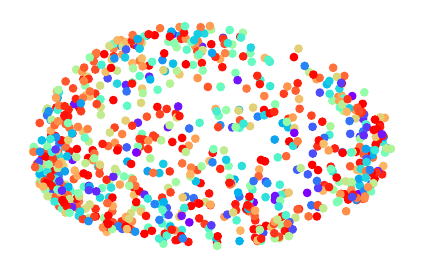} }\\
\midrule
  & 200 & \makecell{\expfig{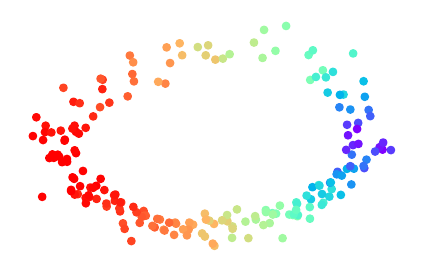}}
        & \makecell{\expfig{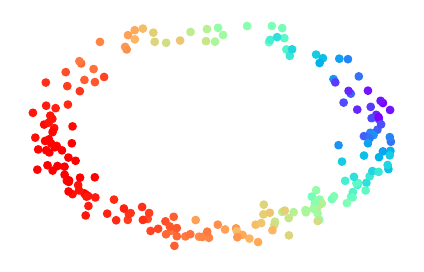}}
        & \makecell{\expfig{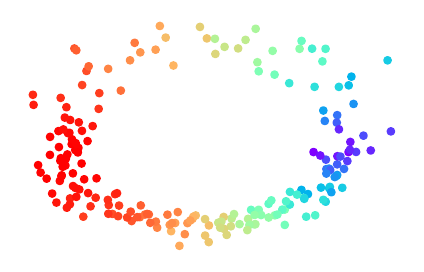}}
        & \makecell{\expfig{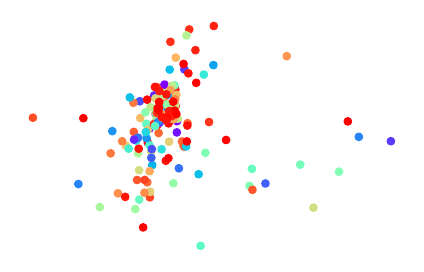}} \\
 Noisy & 400 & \makecell{\expfig{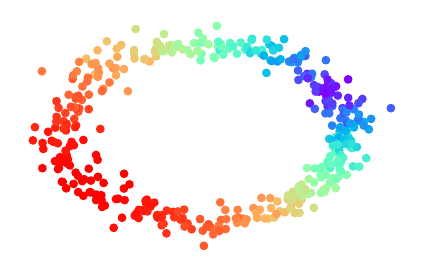}}
        & \makecell{\expfig{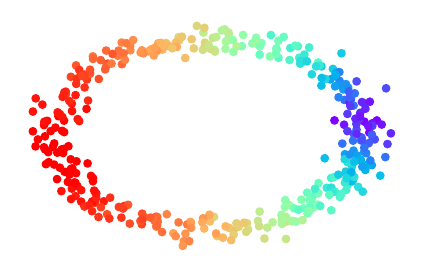}}
        & \makecell{\expfig{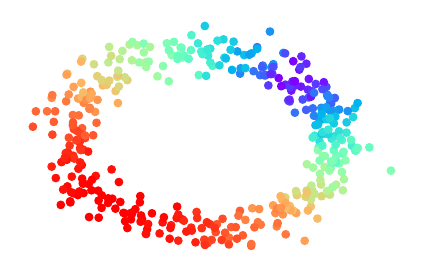}}
        &\makecell{ \expfig{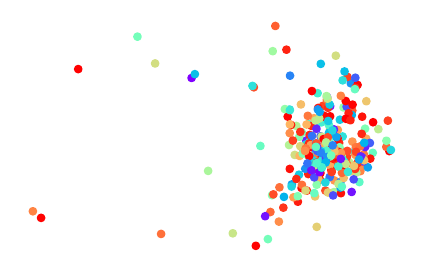} }\\
  & 800 & \makecell{\expfig{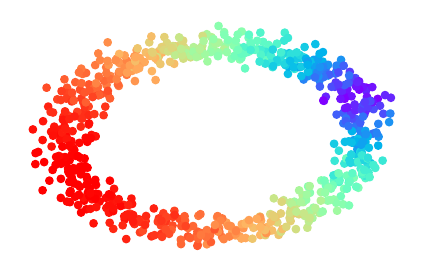}}
        & \makecell{\expfig{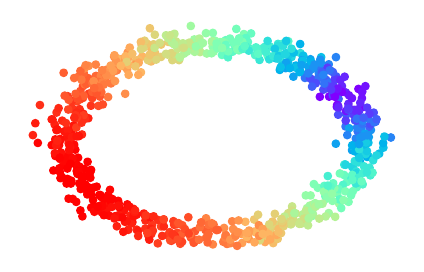}}
        &\makecell{ \expfig{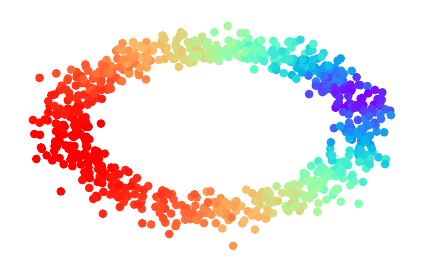}}
        & \makecell{\expfig{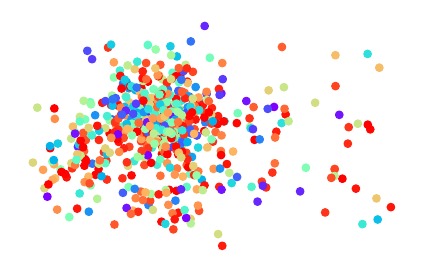} }\\
\bottomrule
\end{tabular}
\caption{Spectral embedding of clean and noisy 3D point clouds of the Glucagon molecule with rotational torsion based on the Euclidean vs.\ $\mathrm{SO}(3)$-invariant kernels for $n \in \{200, 400, 800\}$ data points.}
\label{fig: point clouds experiment}
\end{figure}

\subsection{Tomographic Images}
In this test, we generated 2D projection images from the 3D molecular structure of the Glucagon polypeptide (PDB ID 1GCN \citep{Sasaki1975}). Similarly to the previous experiment, we created $n = 1000$ volumes by rotating the $\psi$-torsion angle of the 19th amino acid residue in the Glucagon molecule, but in this case the motion followed a half-circle trajectory only. This constitutes the intrinsic motion in our data.  

Using the ASPIRE package \citep{https://doi.org/10.5281/zenodo.5657281}, each volume was projected to a 2D image using tomographic projection from a fixed point of view. Finally, we rotated each image by a random element in $\mathrm{SO}(2)$, which gives nuisance parameters and extrinsic motion in the data. This constituted the clean dataset. The noisy dataset was created by adding i.i.d. Gaussian noise with mean zero and variance given by 0.79 times the Frobenius norm of each image. These tomographic images lie (approximately) on a manifold $\M$ of dimension $d = 2$, and their intrinsic geometry is a half-circle. The group $G = \mathrm{SO}(2)$ acts by in-plane rotation of the 2D images.

As in the previous experiment, we performed different two-dimensional spectral embeddings of the data by applying Algorithm~\ref{alg:spectral embedding} with each of the following kernels: Euclidean, minimum, integral, and invariant features. In this case, the symmetry group is $G = \mathrm{SO}(2)$ and the invariant features map is given by the rotational bispectrum of the 2D images (see Example~\ref{ex:bispectrum} in Section~\ref{sec:G-invariant kernels}). The bandwidth parameter was set to $\varepsilon = 0.005$ for the minimum, integral, and Euclidean kernels, and to $\varepsilon = 3 \times 10^{-8}$ for the bispectrum invariant features kernel. Figure~\ref{fig:exp2_fig} shows the resulting embeddings for both the clean and noisy datasets. We observe that the embeddings based on each of the three $\mathrm{SO}(2)$-invariant kernels recover the correct intrinsic geometry of the data, a half-circle, while the embedding based on the Euclidean kernel fails to capture this geometry.

\begin{figure}[h!]
\centering
\setlength{\tabcolsep}{3pt}
\renewcommand{\arraystretch}{2}
\begin{tabular}{@{} >{\centering\arraybackslash}m{4em} ccccc@{}}
\toprule
Noise & Minimum & Integral & \makecell{Invariant\\Features}
      & \makecell{Euclidean (intrinsic\\ rotation color)}
      & \makecell{Euclidean (extrinsic\\ rotation color)} \\
\midrule
Clean  & \makecell{\expfig{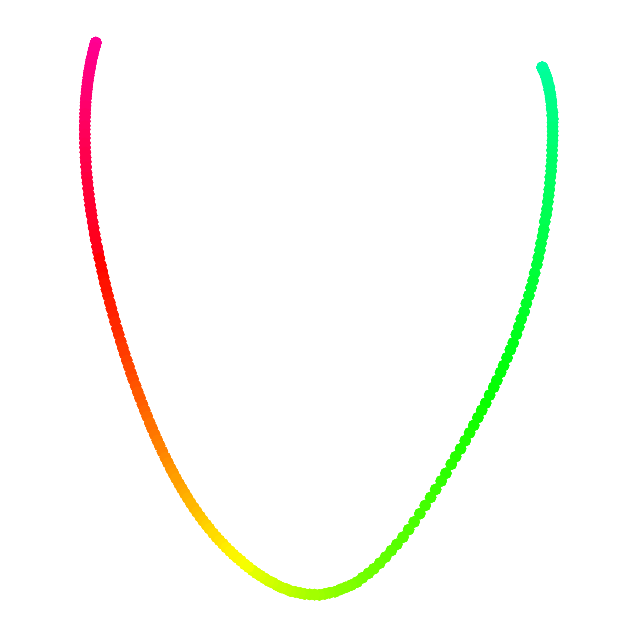}}
    & \makecell{\expfig{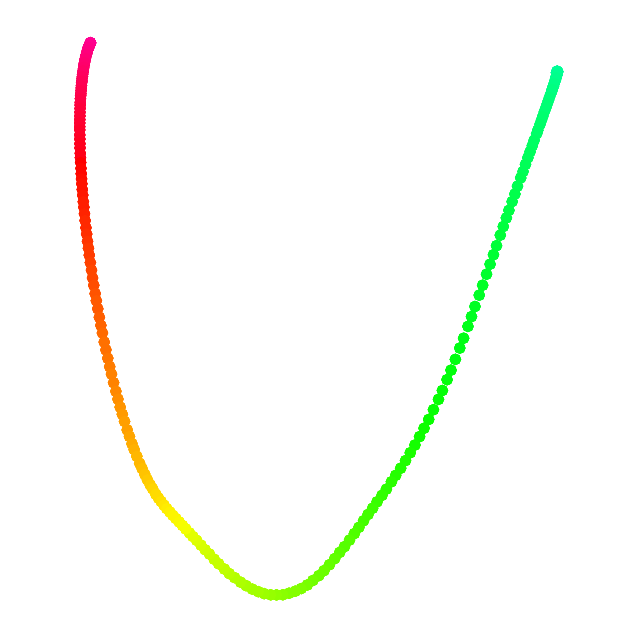}}
    & \makecell{\expfig{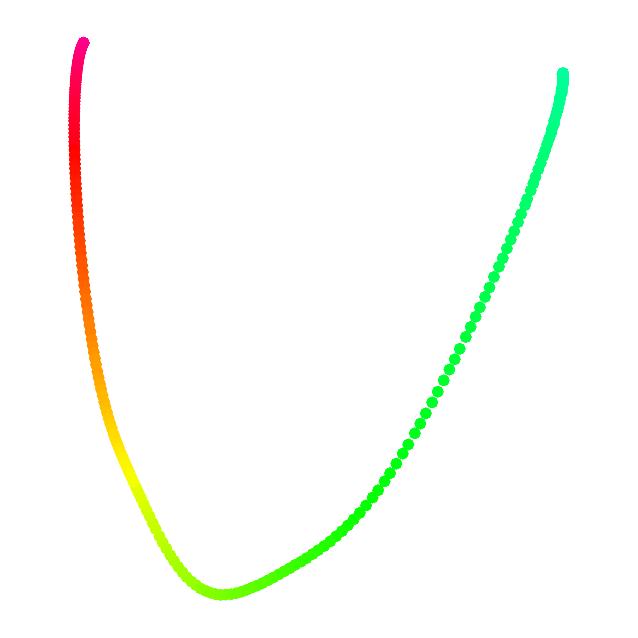}}
    & \makecell{\expfig{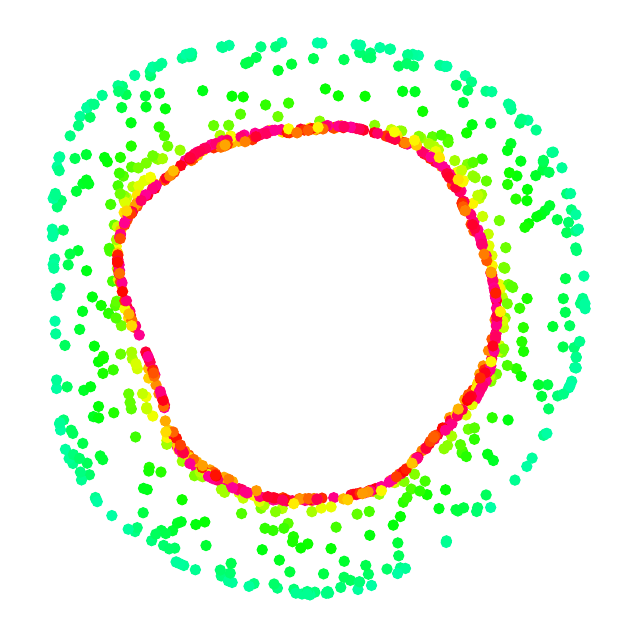}}
    & \makecell{\expfig{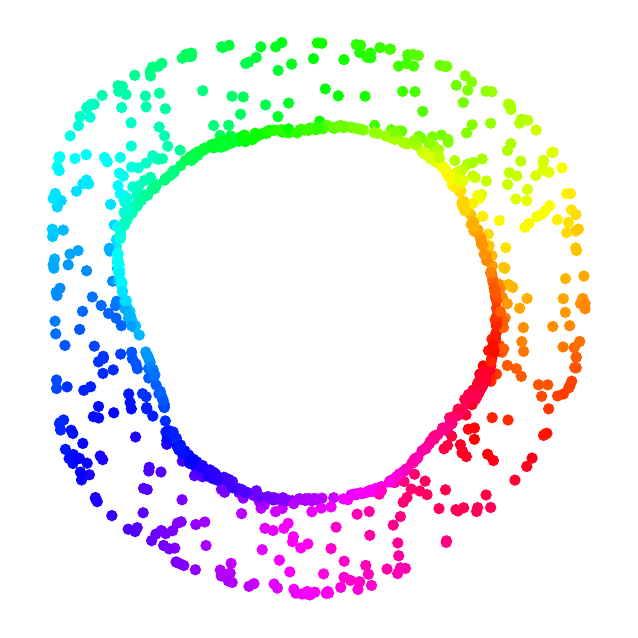}} \\
Noisy &\makecell{ \expfig{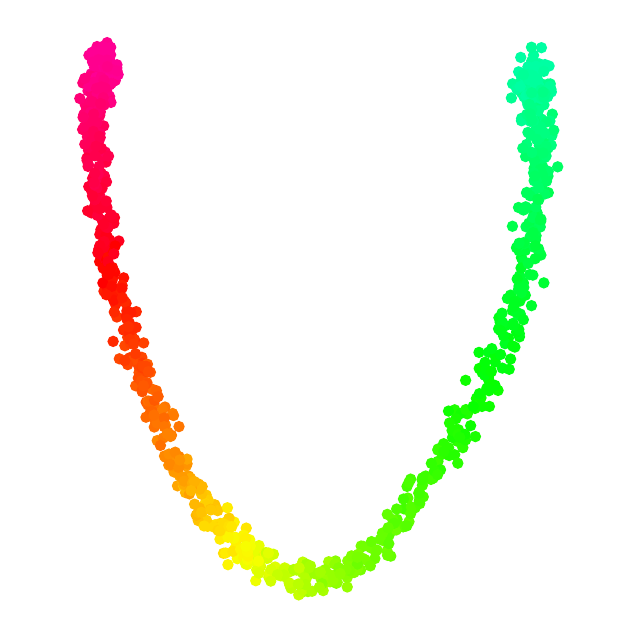}}
    &\makecell{ \expfig{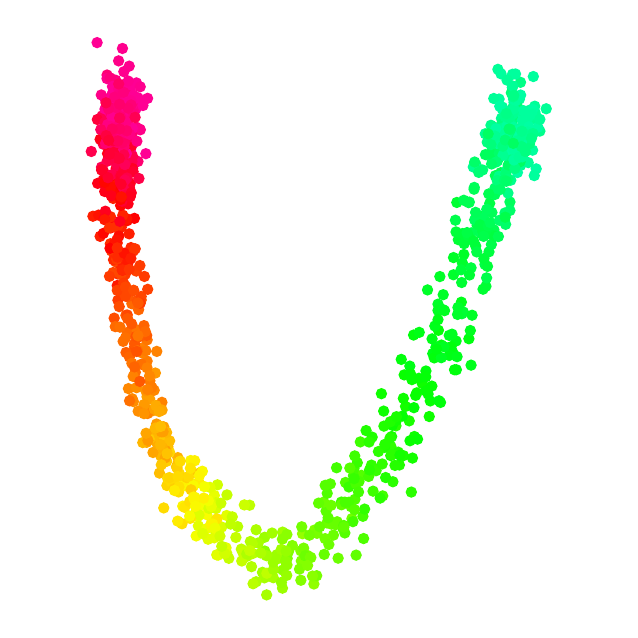}}
    & \makecell{\expfig{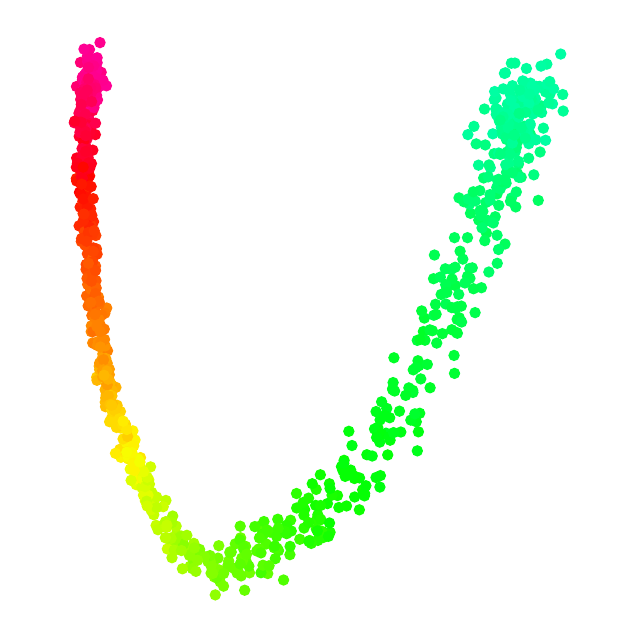}}
    &\makecell{ \expfig{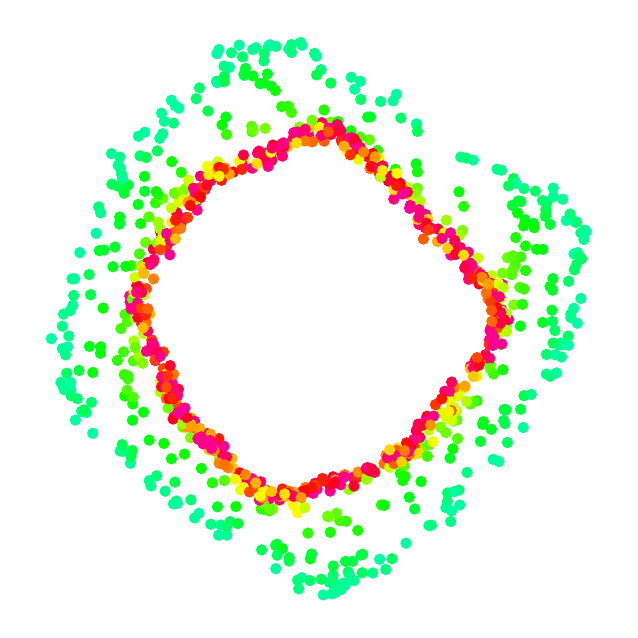}}
    &\makecell{ \expfig{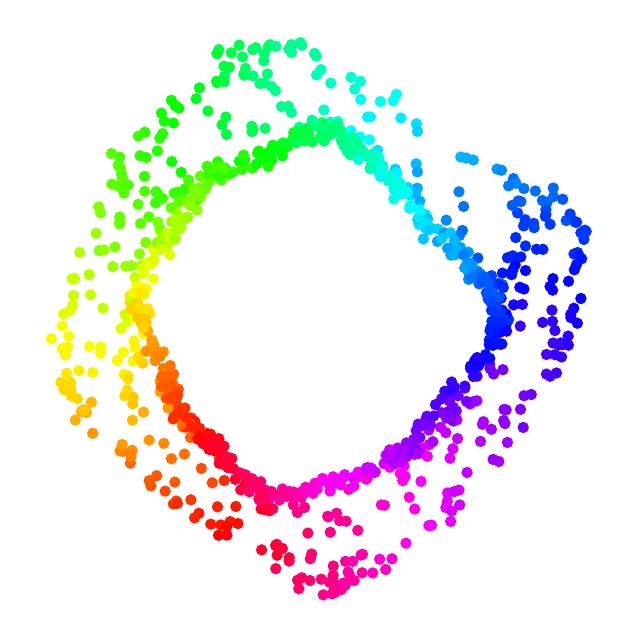} }\\
\bottomrule
\end{tabular}
\caption{Spectral embedding of clean and noisy tomographic images based on the Euclidean vs.\ $\mathrm{SO}(2)$-invariant kernels for $n=1000$ data points. The last two columns show the Euclidean kernel embedding colored according to the intrinsic and extrinsic rotation angles, respectively.}
\label{fig:exp2_fig}
\end{figure}
\begin{figure}[h!]
    \centering
    \vspace{1em}
    \begin{subfigure}[t]{0.28\textwidth}
        \centering
        \includegraphics[width=\textwidth]{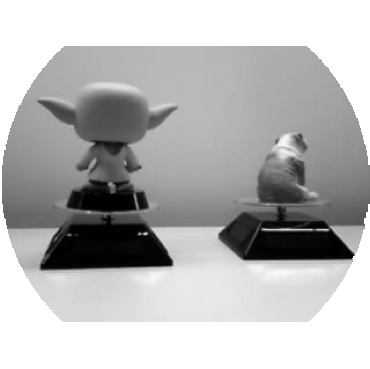}
        \vspace{0.5em}
        \caption{Original image}
        \label{fig:original_image}
    \end{subfigure}%
    \hfil 
    \begin{subfigure}[t]{0.28\textwidth}
        \centering
        \includegraphics[width=\textwidth]{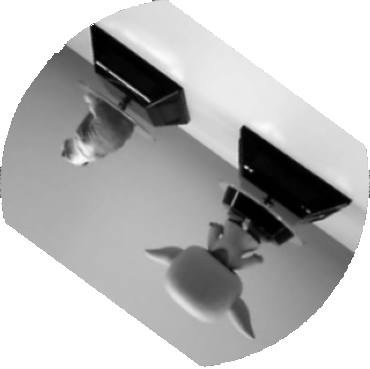}
        \vspace{0.5em}
        \caption{After a random in-plane rotation}
        \label{fig:rotated_image}
    \end{subfigure}

    \vspace{0em}

    \begin{subfigure}[t]{0.50\textwidth}
        \centering
        \includegraphics[width=\textwidth]{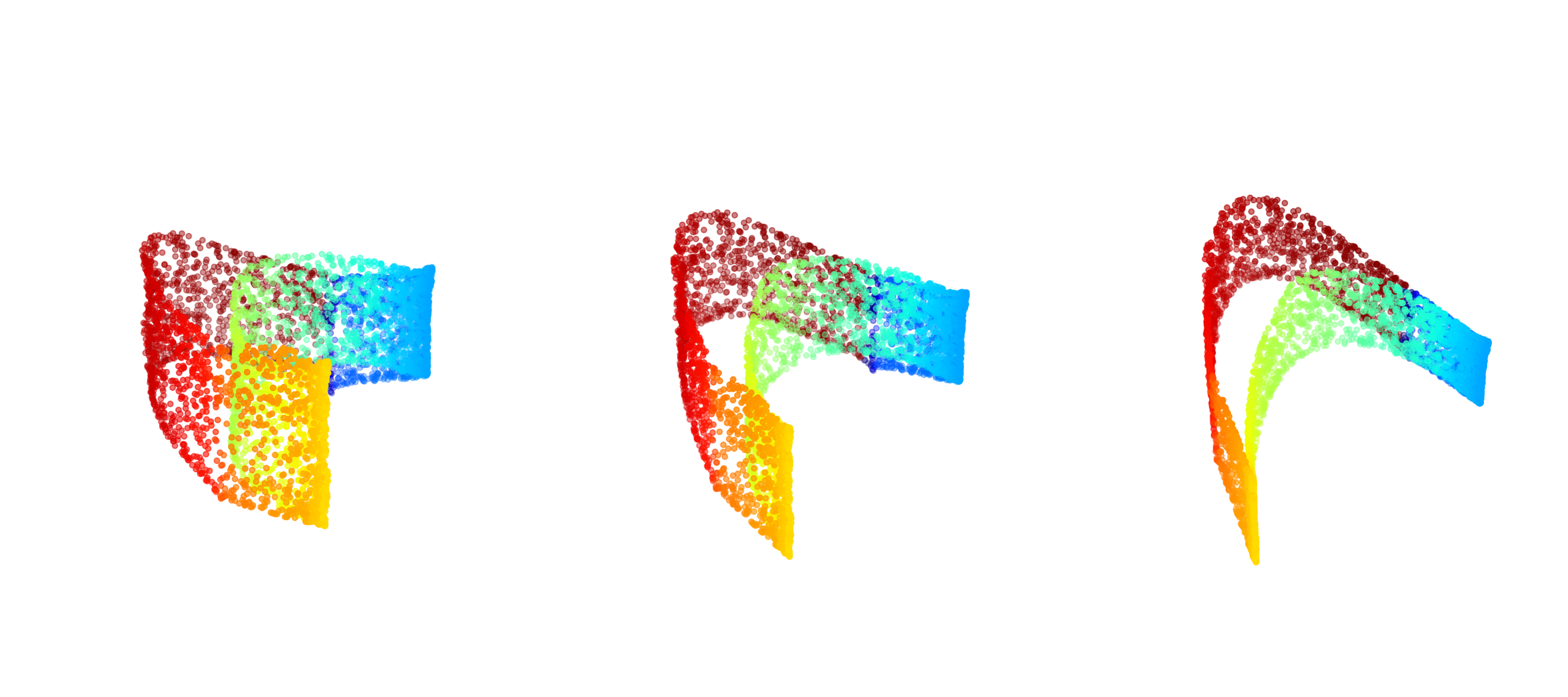}
        \caption{Embedding colored by $\theta_1$}
        \label{fig:embedding_color_left}
    \end{subfigure}%
    \hfill
    \begin{subfigure}[t]{0.50\textwidth}
        \centering
        \includegraphics[width=\textwidth]{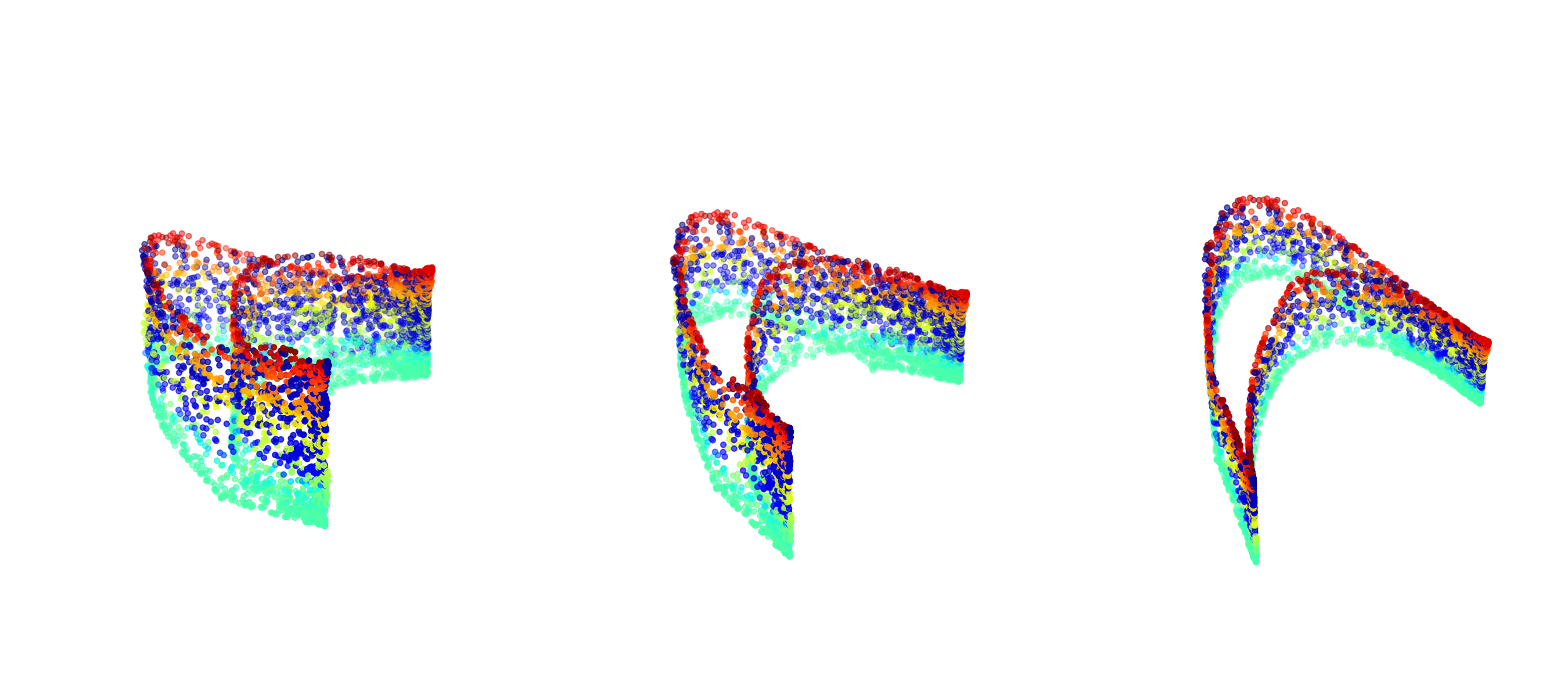}
        \caption{Embedding colored by $\theta_2$}
        \label{fig:embedding_color_right}
        \vspace{1em}
    \end{subfigure}
    \caption{Spinning toys experiment. (a) A 2D image of two independently rotating 3D toy figurines. (b) The same image after a random in-plane $\mathrm{SO}(2)$ rotation. (c)--(d) Three-dimensional spectral embeddings based on the $\mathrm{SO}(2)$-minimum kernel, showing the same set of embedded points from the same viewpoint, colored by the first intrinsic rotation angle $\theta_1$ in (c) and by the second intrinsic rotation angle $\theta_2$ in (d). Both intrinsic degrees of freedom are smoothly parameterized, and the torus-like geometry of $\mathbb{S}^1 \times \mathbb{S}^1$ is recovered.}
    \label{fig:spinning_toys}
\end{figure}

\subsection{Spinning Toys}
We use a dataset of \cite{LEDERMAN2018509} which consists of 2D images obtained by taking snapshots of two spinning 3D toy figurines, where each toy was spinning independently with a non-constant rotational velocity. We then rotated each 2D image by a uniformly-drawn angle. Figure~\ref{fig:spinning_toys}(a)--(b) shows two example images from this dataset.
Given the two independent rotations of the 3D objects, the intrinsic geometry of the data is topologically equivalent to a torus $\mathbb{S}^1 \times \mathbb{S}^1$. The group $G = \mathrm{SO}(2)$ acts on each image by in-plane rotations.

We embedded the data in three-dimensional space using Algorithm~\ref{alg:spectral embedding} with the minimum kernel. The bandwidth parameter was set to $\varepsilon = 0.005$. Figure~\ref{fig:spinning_toys} presents example images and the resulting embedding. The embedded data, colored according to each of the two intrinsic rotation angles, shows that both degrees of freedom are adequately captured and that the embedding recovers the correct torus-like intrinsic geometry.
This experiment shows the success of the method in a case where the intrinsic geometry is two-dimensional rather than one-dimensional.

\section{Conclusion}
We have proposed a framework for the spectral embedding of datasets with known symmetries, 
by incorporating group invariance directly into the affinity kernels.
Our theoretical analysis focused on the setting of a Riemannian data manifold $\M$, with symmetries given by a compact Lie group $G$.  We proved that graph Laplacians constructed from three broadly applicable classes of $G$-invariant kernels converge pointwise to explicit second-order differential operators on the quotient manifold $\N = \M / G$. In all three cases, the differential operators include a first-order term related to the action of $G$ on $\M$.
The utility of the framework was demonstrated through experiments on datasets with $\mathrm{SO}(2)$ and $\mathrm{SO}(3)$ symmetries. Our experiments showed that  $G$-invariant affinity kernels recover the correct intrinsic geometry of the data, while the standard Euclidean kernel fails to do so even at large sample sizes.  The resulting symmetry-aware embeddings faithfully parameterize the degrees of freedom of the quotient manifold.
A key feature of our results is the improved convergence rate: the variance error drops from $n^{-1/2}\varepsilon^{-1/2-d/4}$ for the standard graph Laplacian to $n^{-1/2}\varepsilon^{-1/2-(d - p)/4}$ for the graph Laplacian constructed from a group invariant kernel,  effectively reducing the dimension of the problem by $p = \dim(G)$.
This is similar to the known improved convergence rates when using symmetry augmentations in supervised learning \citep{chen2020group,tahmasebi2023exact}.

Several directions remain open for future work. 
First, while we establish pointwise convergence of the graph Laplacian, convergence of the eigenvalues and eigenvectors to those of the limiting operator remains to be proven.
Second, our analysis assumes a free action of a compact Lie group, so that $\N = \M/G$ is itself a smooth manifold.  
Extending the framework to non-free actions, would broaden its applicability to cases such as $\mathrm{SO}(3)$ acting on symmetric molecules.
For non-free actions the orbit space is an orbifold with singular strata of varying isotropy.  
Third, it would be useful to consider situations where the data manifold has only approximate symmetries \cite{tahmasebi2025achieving}.

\subsection*{Acknowledgments}

We thank Nicolas Boumal, Yu-Fang Hsieh, Ofir Karin, Yael Karshon, Roy Lederman, Ofir Lindenbaum, Haggai Maron, Keren Mor Waknin, Amitai Netser Zenik,  Eitan Rosen, Yoel Shkolnisky, Amit Singer, and Nir Sochen for helpful discussions.
J.K. is partially supported by the United States National Science Foundation (Grant Nos. 2309782, 2436499 and 2312746), the Department of Energy (Grant No. SC0025312), and the Sloan Foundation.
A.M. is partially supported by the United States-Israel Binational Science Foundation
(Grant No. 2022778) and by the Israel Science Foundation (Grant No. 1662/22).
D.T. is partially supported by the Knut and Alice Wallenberg Foundation through a joint WASP--DDLS grant.
J.A. is partially supported by the Swedish Research Council (Grant No. 2023-04143).

\bibliographystyle{elsarticle-num-names}
\bibliography{main}

\appendix
\renewcommand{\thesection}{\Alph{section}}
\renewcommand{\thetheorem}{\Alph{section}.\arabic{theorem}}

\section{Riemannian Quotient Manifold}\label{app:quotient_manifolds}
This appendix recalls some basic notation and results related to group actions and quotient manifolds.
For more on these topics, refer to the textbooks by \citet{lee2003introduction,lee2018Riemannian}.

Let $G$ be a compact Lie group with identity element denoted by $e$. A  \emph{left action} of $G$ on the Riemannian manifold $\M$  is a map $\theta: G \times \M \to \M$, denoted as $\alpha \cdot x = \theta(\alpha,x)$, such that $\alpha \cdot (\beta \cdot x) = (\alpha \beta) \cdot x$ and $e \cdot x = x$ for all $\alpha, \beta \in G$ and $x \in \M$. The group action of $G$ on $\M$ is said to be \emph{free} if
\begin{align}
    \exists x \in \M: \theta(\alpha,x) = x \quad \Longrightarrow \quad \alpha = e. 
\end{align}
i.e., non-identity actions have no fixed points. The action is called \emph{smooth} if $\theta(\alpha,x)$ is a smooth map with respect to both $\alpha$ and $\x$.
In that case, it induces a mapping of the tangent bundle $d\theta_{\alpha,x}: T_x \M \to T_{\theta(\alpha,x)} \M$ known as the \emph{differential} of $\theta_\alpha$ at $x$.
We say $G$ \emph{acts by isometries} if for every $\alpha \in G$ the diffeomorphism given by $\theta_\alpha$ preserves the Riemannian metric,
\begin{align}
    \langle u, v \rangle_x
    =
    \langle d\theta_{\alpha,x}(u), d\theta_{\alpha,x}(v) \rangle_{\theta(\alpha,x)}
   \!  \quad
    \text{for all } x \in \M \text{ and all } u, v \in T_x.
\end{align}

The \emph{orbit} of a point $x \in \M$ under the action of $G$ is the set
\begin{equation}
    [x] = G \cdot x = \{ \alpha \cdot x \mid \alpha \in G \} \subseteq \M.
\end{equation}
The orbits are the equivalence classes of the $G$-action, i.e. $x \sim y$ if and only if there exists $ \alpha \in G$ such that $y = \alpha \cdot x$.
As a set, the quotient $\M/G$ is the set of orbits of $G$, also known as the \emph{orbit space} of the action. The map $\pi: \M \to \M / G$ defined by $\pi(x) = [x]$ is called the \emph{quotient map}. 
We say that $\pi$ is a Riemannian submersion if it is a smooth map, its differential $d\pi$ is surjective at each point, and  
\begin{equation}
    g_\M(u,v) = g_\N(d\pi_x(u), d\pi_x(v))
\end{equation}
for all $u, v$ in the orthogonal complement to $\ker(d\pi_x)$. The following theorem gives sufficient conditions for $\pi$ to be a Riemannian submersion.
\begin{theorem}[Riemannian Quotient Manifold]\label{thm: Riemannian quotient manifold}
    If a compact Lie group $G$ acts smoothly,  freely and isometrically on $\M$, then the orbit space $\N \colonequals \M/G$ is a smooth manifold of dimension $\dim(\M)- \dim(G)$ with a unique Riemannian metric $g_\N$ such that the quotient map $\pi \colon \M \to \N$ is a Riemannian submersion.
\end{theorem}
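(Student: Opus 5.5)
This is the standard Riemannian quotient theorem, so I would prove it in three stages.

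\emph{Stage 1: smooth structure on $\N$.} Since $G$ is compact, the action is automatically proper. The map $G\times\M\to\M\times\M$, $(\alpha,x)\mapsto(\alpha\cdot x,x)$, is proper because preimages of compact sets lie in $G\times$(compact). With freeness, the quotient manifold theorem \citep[Theorem 21.10]{lee2003introduction} then says that $\N=\M/G$ is a topological manifold of dimension $\dim\M-\dim G$. It carries a unique smooth structure making $\pi$ a smooth submersion. I would recall the idea: at each $x$ one builds a slice, namely the image under the exponential map of a small ball in the normal space to the orbit $G\cdot x$. Freeness together with properness makes $G\times\text{slice}\to\M$ a diffeomorphism onto a $G$-invariant neighborhood, and the slices serve as charts for $\N$.

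\emph{Stage 2: the metric $g_\N$.} At each $x$, split $T_x\M=V_x\oplus H_x$. The vertical space is $V_x=\ker d\pi_x=T_x(G\cdot x)$, and the horizontal space is $H_x=V_x^{\perp}$. Because $d\pi_x$ is surjective with kernel $V_x$, the restriction $d\pi_x|_{H_x}\colon H_x\to T_{[x]}\N$ is a linear isomorphism. Set
\[
g_\N(d\pi_x u,d\pi_x v):=g_\M(u,v),\qquad u,v\in H_x.
\]
To show this is well defined, I would check independence of the representative $x$ of the orbit. Since $\pi\circ\theta_\alpha=\pi$, we have $d\pi_{\alpha x}\circ d\theta_{\alpha,x}=d\pi_x$. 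The isometry $d\theta_{\alpha,x}$ maps $V_x$ onto $V_{\alpha x}$, because it carries the orbit into itself. It therefore maps $H_x$ isometrically onto $H_{\alpha x}$. So the two candidate inner products at $[x]=[\alpha x]$ agree.

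\emph{Stage 3: smoothness and uniqueness.} Smoothness of $g_\N$ is the step I expect to be the main obstacle, and it is where the argument needs actual care. I would take a local section $\sigma\colon U\to\M$ of $\pi$, which exists by Stage 1. For smooth vector fields $X,Y$ on $U$, form their horizontal lifts along $\sigma$, given by $\tilde X=(d\pi|_{H})^{-1}X$. These are smooth because the orthogonal projection onto $H$ varies smoothly: $V$ is spanned by the fundamental vector fields of a basis of $\mathfrak g$, which are linearly independent by freeness, and Gram--Schmidt applied to them gives a smooth projection. Hence $g_\N(X,Y)=g_\M(\tilde X,\tilde Y)\circ\sigma$ is smooth. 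Uniqueness is immediate, since the submersion condition forces the defining formula. Finally, $\pi$ is a Riemannian submersion by construction.
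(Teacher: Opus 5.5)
Your proposal is correct and follows the same route as the paper. The paper's proof is just a citation of \citet[Cor.~2.29]{lee2018Riemannian}, together with the observation, which you also make, that an action of a compact group is automatically proper. That corollary is proved as you outline: the quotient manifold theorem supplies the smooth structure, the metric is transported from the horizontal spaces and is well defined because the isometric action is transitive on fibers, and smoothness comes from smooth horizontal lifts composed with a local section.
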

A proof can be found in \citet[Cor 2.29]{lee2018Riemannian}, noting that the action of a compact group is always proper.
\noindent We call $(\N, g_\N)$ the \emph{Riemannian quotient manifold} coming from the action of $G$ on $\M$.

\section{Proof of Lemma~\ref{lem: Fubini formula for integration over M}}\label{app: proof of Fubini formula}
By Theorem~\ref{thm: Riemannian quotient manifold}, the quotient map $\pi \colon \M \to \N$ is a Riemannian submersion, and its fibers are given by $\pi^{-1}([x]) = G \cdot x$. Applying~\cite[Thm.\ 5.6]{Sakai1996} then gives
\begin{align}\label{eq: inner integral in terms of dV_Gx}
    \int_\M f(x)\,dV_\M(x) 
    = \int_{\N} \left(\int_{G\cdot x} f(\alpha \cdot x) dV_{G\cdot x}(\alpha \cdot x)\right) dV_\N([x]),
\end{align}
where $dV_{G\cdot x}$ is the Riemannian measure on $G \cdot x$ induced by the ambient metric on $\M$, which is related to the Haar measure $d\eta$ on $G$ by $dV_{G\cdot x}(\alpha \cdot x) = \delta(x)d\eta(\alpha)$ (Definition~\ref{def: density function}).
Thus Eq.~\eqref{eq: inner integral in terms of dV_Gx} becomes
\begin{align}\label{eq: inner integral in terms of d eta}
    \int_\M f(x)\,dV_\M(x) 
    = \int_{\N} \left(\int_{G} f(\alpha \cdot x) \delta(x) d\eta(\alpha)\right) dV_\N([x]). 
\end{align}
Since $\delta(x) = \bar\delta([x])$ depends only on the orbit $G\cdot x$, this 
gives Eq.~\eqref{eq: Fubini integral on M}.
To conclude the proof, let us further assume that $f \colon \M \to \RR$ is a $G$-invariant function. Let $\bar{f}$ denote the induced function on $\N$. By substituting  $f(\alpha \cdot x) = \overline{f}([x])$ into Eq. \eqref{eq: Fubini integral on M},
and since the Haar measure $d\eta$ is chosen so that $\eta(G) = 1$, we obtain Eq.~\eqref{eq: Fubini integral on M for G-invariant function} as desired.

\section{Proof of Lemma \ref{lem: integration wrt minimum kernel}}\label{app:integration wrt minimum kernel}
We derive an asymptotic expansion of the integral
\begin{equation}\label{eq: integral minimum kernel against smooth h}
    I([x]) = \frac{1}{(\pi\varepsilon)^{q/2}}\int_\N \overline{K}_{\mathrm{min}}([x],[y])h([y])dV_\N([y]),
\end{equation}
with
\begin{equation}
     \overline{K}_{\mathrm{min}}([x],[y])
     =
     \exp
     \left(
        -\min_{\alpha \in G} \| x - \alpha \cdot y \|^2/\varepsilon
    \right),
\end{equation}
The exponential map then lets us rewrite the approximation of $I([x])$ as an integral over $q$-dimensional Euclidean space. Finally, standard properties of  the Gaussian distribution in Euclidean space are used to conclude the proof.

\subsection*{Step 1: Comparison of Riemannian and Euclidean Distances on $\M$}
In this step, we prove three inequalities that relate the Riemannian distance $d_\M(x,y)$ and the Euclidean distance $\|x-y\|$ for any two points $x,y \in \M$. These inequalities imply that the Riemannian and Euclidean distances are uniformly equivalent metrics on $\M$.
\begin{lemma}\label{lem: inequality geodesic vs chordal distance}
Let $(\M, g_\M)$ be a compact and connected Riemannian submanifold of $\RR^D$. Then there exist positive constants $C_\M$ and $K_\M$ such that 
\begin{align}
    0 \leq d^2_\M (x,y) - \|x - y\|^2 \leq C_\M d^4_\M(x,y) \leq K_\M \|x - y\|^4
\end{align}
for all $x,y \in \M$.
\end{lemma}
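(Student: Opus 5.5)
The plan is to prove the three inequalities of Lemma~\ref{lem: inequality geodesic vs chordal distance} in order: first the trivial left one, then a local estimate for the middle one, then a compactness argument to make it global, and finally the comparison of $d_\M$ with $\|x-y\|$ for the right one.

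\textbf{Left inequality.} The first inequality $\|x-y\|\le d_\M(x,y)$ is immediate. Any piecewise smooth curve in $\M$ joining $x$ to $y$ is also a curve in $\RR^D$. Its Riemannian length equals its Euclidean length, since $g_\M$ is induced from $\RR^D$. That length is at least the length of the straight segment. Taking the infimum over curves gives $\|x-y\|\le d_\M(x,y)$.

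\textbf{Middle inequality, local case.} Since $\M$ is compact, it has a positive injectivity radius $r_0>0$. Fix $x,y$ with $d_\M(x,y)=s<r_0$ and let $\gamma\colon[0,s]\to\M$ be the unit-speed minimizing geodesic from $x$ to $y$. Viewed as a curve in $\RR^D$, $\gamma$ has $|\gamma'|=1$. Its acceleration $\gamma''=\ssf(\gamma',\gamma')$ is the second fundamental form, which is normal to $\M$. By compactness, $|\gamma''|\le B:=\sup_{\M}\|\ssf\|$. Taylor expansion then gives
\[
\gamma(s)-\gamma(0)=s\gamma'(0)+\tfrac{s^2}{2}\gamma''(0)+R,\qquad |R|\le C s^3 .
\]
For the squared norm, the term $s^3\langle\gamma'(0),\gamma''(0)\rangle$ vanishes because $\gamma''\perp\gamma'$. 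The cleaner route is therefore to write $\|y-x\|^2 = \int_0^s\int_0^s\langle\gamma'(t),\gamma'(u)\rangle\,dt\,du$. Then use
\[
\langle\gamma'(t),\gamma'(u)\rangle = 1-\tfrac12|\gamma'(t)-\gamma'(u)|^2 \ge 1-\tfrac12 B^2(t-u)^2 .
\]
Integrating yields $\|x-y\|^2\ge s^2-\tfrac{B^2}{12}s^4$, so $d_\M^2-\|x-y\|^2\le \tfrac{B^2}{12}d_\M^4$. This holds for $d_\M<r_0$.

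\textbf{Middle inequality, global case.} For $d_\M(x,y)\ge r_0$, use compactness. The trivial bound $d_\M^2-\|x-y\|^2\le d_\M^2\le \operatorname{diam}(\M)^2$ holds. Also $d_\M^4\ge r_0^4$. So the constant $\operatorname{diam}(\M)^2/r_0^4$ works in this case. Taking $C_\M$ as the maximum of the two constants gives the middle inequality everywhere.

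\textbf{Right inequality.} It suffices to show $d_\M(x,y)\le c\|x-y\|$ for a uniform constant $c$; then $K_\M=C_\M c^4$.
\begin{itemize}
\item Locally, for $d_\M\le s_0:=\min(r_0,\sqrt{6}/B)$, the bound from the previous step gives $\|x-y\|^2\ge s^2(1-B^2s^2/12)\ge s^2/2$.
\item For $d_\M\ge s_0$, argue by contradiction. The set $\{(x,y): d_\M(x,y)\ge s_0\}$ is compact and does not meet the diagonal. The function $\|x-y\|$ is continuous and positive there, so it has a positive minimum $m$. Hence $d_\M\le(\operatorname{diam}(\M)/m)\|x-y\|$ on this set.
\end{itemize}

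The main obstacle is the local curvature estimate in the middle inequality. It requires identifying the ambient acceleration of a geodesic with $\ssf(\gamma',\gamma')$ and bounding it uniformly. The rest is bookkeeping with compactness.
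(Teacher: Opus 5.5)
Your proof is correct, but it proves the middle inequality by a different and more elementary route than the paper.

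The paper gets the local estimate by citing a uniform asymptotic result of Smolyanov et al. (their Proposition 6): $(d_\M^2-\|x-y\|^2)/d_\M^4 \to \frac{1}{12}\|\ssf(\dot\gamma,\dot\gamma)\|^2$ uniformly. From this it extracts a non-explicit threshold $\eta$ and a padded constant, and it treats $d_\M\ge\eta$ separately.

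You instead use the identity $\|x-y\|^2=\int_0^s\int_0^s\langle\gamma'(t),\gamma'(u)\rangle\,dt\,du$ together with the bound $|\gamma''|=|\ssf(\gamma',\gamma')|\le B$. This gives the explicit inequality $\|x-y\|^2\ge d_\M^2-\frac{B^2}{12}d_\M^4$, whose constant is exactly the sharp one in that limit, and the argument is fully self-contained. Your computation needs only the existence of a unit-speed minimizing geodesic from $x$ to $y$, not its uniqueness. By Hopf--Rinow such a geodesic exists for every pair of points in compact $\M$, so the bound holds globally. The injectivity radius $r_0$ and your separate far-field case can therefore be dropped.

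For the right inequality, you split into near and far pairs. Below $s_0$ you get the explicit local factor $\sqrt{2}$. Above $s_0$ you use the positive minimum of $\|x-y\|$ on the compact set $\{d_\M\ge s_0\}$, which avoids the diagonal. This is a direct version of the paper's sequential contradiction argument. Although you wrote ``argue by contradiction,'' what you actually give is a direct compactness argument, and it is valid.

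The paper's route is shorter once the citation is accepted, and it identifies the exact limiting behavior. Yours gives explicit constants and does not depend on that external result.
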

\begin{proof}
The first inequality follows directly from the definition of the Riemannian and Euclidean distances on $\M$. 
To prove the second inequality, note that, since $\M$ is compact and connected, the global injectivity radius $r_{\M}$ is a positive constant  such that for all  $x, y \in \M$ with $d_{\M}(x, y) < r_{\M}$ 
there is a unique minimizing geodesic $\gamma_{xy}^\M$ in $\M$ starting at $x$ and ending at $y$. 
Assume that $\gamma_{xy}^\M$ is parametrized by arc-length.  As  proved in \citep[Proposition 6]{Smolyanov2007}, the following limit holds uniformly with respect to $x, y \in \M$: 
\begin{align}
    \lim_{d_\M(x,y) \to 0} \frac{d_\M^2(x,y) - \|x-y\|^2 }{d_\M^4(x,y)} = \frac{1}{12} \left\| \ssf(\dot{\gamma}^{\M}_{xy}(0), \dot{\gamma}^{\M}_{xy}(0)) \right\|^2,
\end{align}
where $\ssf$ denotes the second fundamental form of $\M$. Let
\begin{align} C = \frac{1}{12} \sup_{v \in U\M} \left\| \ssf(v, v) \right\|^2 + 1,\end{align}
where $U\M$ denotes the unit tangent bundle of $\M$; the supremum is finite since $\ssf$ is continuous and $U\M$ is compact. By the uniform convergence of the limit above, there exists a constant $\eta \in (0, r_{\M})$ such that if $0 < d_\M(x,y) < \eta$ then
\begin{align}
        \frac{d_\M^2(x,y) - \|x-y\|^2}{d_\M^4(x,y)} \leq \frac{1}{12}\left\| \ssf(\dot{\gamma}^{\M}_{xy}(0), \dot{\gamma}^{\M}_{xy}(0)) \right\|^2 + 1 \leq C.
\end{align}
On the other hand, if $\eta \leq d_\M(x,y)$, then
\begin{align}
    d_\M(x,y)^2 - \|x-y\|^2 \leq d_\M(x,y)^2 \leq \frac{1}{\eta^2}d_\M(x,y)^4.
\end{align}
In any case, for all pairs $x,y \in \M$ we have 
\begin{align}\label{eq: bound1}
    d_\M(x,y)^2 - \|x-y\|^2 \leq C_\M d_\M(x,y)^4,
\end{align}
where $C_\M = \max\{1/\eta^2, C\}$. Finally, we use inequality~\eqref{eq: bound1} to prove that there exists a nonzero constant $K$ such that
\begin{align}\label{eq: bound2}
    d_\M(x,y) \leq K \|x - y\|
\end{align}
for all $x,y \in \M$. The last inequality then follows by taking $K_\M =  C_\M K^4$. 

Assume, for the sake of contradiction, that for all $N \in \mathbb{N}$ there exist $\x_N, \y_N \in \M$ such that
\begin{align}\label{eq: contradiction hypothesis}
    d_\M^2(\x_N, \y_N) > N \|\x_N-\y_N\|^2.
\end{align}
Due to the compactness of $\M$, and by passing to a convergent subsequence if necessary, we may assume that $(\x_N, \y_N) \to (x^\ast,y^\ast) \in \M \times \M$ as $N \to \infty$. Eq.~\eqref{eq: contradiction hypothesis} implies $\|\x_N - \y_N\| < d_\M(\x_N, \y_N)/ \sqrt{N} < D/\sqrt{N} \to 0$, where $D =\mathrm{diam}(\M) < \infty$ by compactness of $\M$, so it follows that $x^\ast = y^\ast$. Combining Eqs.~\eqref{eq: bound1} and~\eqref{eq: contradiction hypothesis} gives
\begin{align}
    d_\M^2(\x_N,\y_N)  > N\|\x_N-\y_N\|^2 \geq Nd_\M^2(\x_N,\y_N) - NC_\M d_\M^4(\x_N,\y_N).
\end{align}
For all $N > 1$, this implies
\begin{align}
    \frac{1}{d_\M(\x_N, \y_N)^2} < \frac{NC_\M}{N - 1}.
\end{align}
This gives a contradiction as $N \to \infty$, since $x^\ast = y^\ast$ implies $1/d_\M^2(\x_N, \y_N) \to \infty$ but $N C_\M /(N-1) \to C_\M$.
\end{proof}

\subsection*{Step 2: Approximation of the Minimum Kernel
}
We now use Lemma \ref{lem: inequality geodesic vs chordal distance} to show that $\min_{\alpha \in G}\|x-\alpha\cdot y\|^2$ approximates the Riemannian distance $d_\N([x], [y])$ up to order three.  We then use this approximation to obtain an asymptotic expansion of the minimum kernel as $\varepsilon \to 0$. 
Let us start by noting that Lemma \ref{lem: inequality geodesic vs chordal distance} gives
\begin{align}\label{eq: inequality 1}
    \|x - \alpha \cdot y\|^2 \leq d_\M^2(x, \alpha \cdot y)
\end{align}
and
\begin{align}\label{eq: inequality 2}
d_\M^2(x,\alpha \cdot y) \leq \|x - \alpha \cdot y\|^2 + K_\M  \|x- \alpha \cdot y\|^4 
\end{align}
for all $x, y \in \M$ and all $\alpha \in G$. On the other hand,
the Riemannian distances on $\M$ and $\N$ are related by the formula
\begin{equation}\label{eq: relation between d_M and d_N}
    d_\N([x],[y]) = \min_{\alpha \in G} d_\M(x, \alpha \cdot y)
\end{equation}
for all $[x], [y] \in\N$ (see \cite[Exercise 10.15]{boumal2023intromanifolds}). 
Although the minimizing group elements for the Euclidean and Riemannian distances need not coincide, the inequalities in Lemma~\ref{lem: inequality geodesic vs chordal distance} hold uniformly for every $\alpha\in G$. Consequently,
taking the minimum over $\alpha \in G$  in inequalities \eqref{eq: inequality 1} and \eqref{eq: inequality 2}, using relation~\eqref{eq: relation between d_M and d_N}, and combining the results yields
\begin{align}\label{eq: bound on min||x - alpha y||^2}
- d^2_\N([x], [y]) \leq - \min_{\alpha \in G} \|x - \alpha\cdot y\|^2 
\leq - d^2_\N([x], [y])  + K_\M d^4_\N([x], [y])
\end{align}
for all $[x], [y] \in \N$. 
It follows that, when $ d_\N([x],[y])<\varepsilon^{\gamma}$ with $1/4 < \gamma$, the minimum kernel satisfies
\begin{equation}
\begin{aligned}
    \overline{K}_{\mathrm{min}}([x],[y])
    =
    \exp\!\left(
        - \min_{\alpha \in G}\|x - \alpha \cdot y\|^2 / \varepsilon
    \right)
    &=
    \exp\!\left(
        -d^2_\N([x],[y]) / \varepsilon
    \right)
    \left(
        1
        +
        O\!\left(
            d_\N^4([x],[y])/\varepsilon
        \right)
    \right) 
\end{aligned}
\label{eq: approximation of the minimum kernel}
\end{equation}
for all sufficiently small $\varepsilon$, where the implied constant is $K'_\M = \exp(K_\M) -1$.

\subsection*{Step 3: Reduction of the Domain of Integration}
 As proved in Lemma~\ref{lem: inequality geodesic vs chordal distance}, there exists a positive constant $K > 0$
such that $ d_\M(x, \alpha \cdot y) \leq K \|x - \alpha\cdot y\|$  for all $x,y \in \M, \alpha \in G$. Since  this inequality holds uniformly for every $\alpha \in G$, taking the minimum over $\alpha \in G$ gives 
\begin{equation}\label{eq: lower bound on min |x - alpha y|}
    d_\N([x],[y]) = \min_{\alpha \in G} d_\M(x, \alpha \cdot y) \leq K \min_{\alpha \in G} \|x - \alpha\cdot y\|.
\end{equation}
Let $B$ denote the geodesic ball in $\N$ around $[x]$ of radius $\varepsilon^\gamma$ with $1/4 < \gamma < 1/2$. Then Eq.~\eqref{eq: lower bound on min |x - alpha y|} implies  that for any $[y] \in \N \setminus B$ we have
\begin{equation}
     \min_{\alpha \in G} \|x - \alpha\cdot y\|^2 \geq \frac{1}{K^2} d^2_\N([x], [y])   \geq \frac{1}{K^2} \varepsilon^{2\gamma}.
\end{equation}
It follows that restricting the domain of integration of $I([x])$ in Eq.~\eqref{eq: integral minimum kernel against smooth h} to  $B$ generates an error of order
\begin{equation}
\begin{aligned}
    \left\lvert
        \frac{1}{(\pi\varepsilon)^{q/2}}
        \int_{\N \setminus B}
        \exp\!\left(
            - \min_{\alpha \in G}\|x - \alpha \cdot y\|^2/\varepsilon
        \right)
        h([y])\, dV_\N ([y])
    \right\rvert 
    \leq
    \frac{\vol(\N \setminus B)}{(\pi\varepsilon)^{q/2}}   \,\|h\|_\infty\, 
    \exp\left(
        -\varepsilon^{2\gamma-1}/K^2
    \right),
\end{aligned}
\label{eq:kernel-bound}
\end{equation}
where $\|h\|_\infty <\infty$ since $h$ is a continuous function on a compact space. 
Given that $2\gamma -1 < 0$, this error is exponentially small as $\varepsilon$ approaches $0$. Therefore, for any $\eta> 0$ and for all sufficiently small $\varepsilon$, we have
\begin{align}\label{eq:  I([x]) restricted to U}
\begin{split}
    I([x]) 
    &=
    \frac{1}{(\pi\varepsilon)^{q /2}}
    \int_B
    \exp\!
    \left(
        - \min_{\alpha \in G}\|x - \alpha \cdot y\|^2/\varepsilon
    \right)
    h([y])dV_\N([y]) + o(\varepsilon^\eta). 
\end{split}
\end{align} 
Finally, since $1/4 < \gamma$,  the approximation of the minimum kernel in Eq.~\eqref{eq: approximation of the minimum kernel} applies to all $[y] \in B$. Substituting Eq.~\eqref{eq: approximation of the minimum kernel} into Eq.~\eqref{eq:  I([x]) restricted to U} thus gives, for any $\eta> 0$ and all sufficiently small $\varepsilon$, 
\begin{align}\label{eq:  I([x]) restricted to U and approx of min kernel}
    I([x])
    =
    \frac{1}{(\pi\varepsilon)^{q /2}}
    \int_B
    \exp\!
    \left(
        -d^2_\N([x],[y])/\varepsilon
    \right)
    \left(
        1
        +
        O\!\left(
            d_\N^4([x],[y])/\varepsilon
        \right)
    \right)
    h([y])dV_\N([y])
    +
    o(\varepsilon^\eta).
\end{align}

\subsection*{Step 4: Change to Geodesic Normal Coordinates}
In this step, we use the exponential map and the corresponding geodesic normal coordinates to express $I([x])$ as an integral over $q$-dimensional Euclidean space. 
The exponential map at $[x] \in \N$, denoted $\exp_{[x]}$, 
provides a canonical smooth map from the tangent space at $[x]$ into the manifold $\N$. It maps straight lines through the origin in $T_{[x]}\N$ to geodesics in $\N$ through the point $[x]$. 
Moreover, $\exp_{[x]}$ is a local diffeomorphism, meaning that there exist open neighborhoods $U$ of the origin in $T_{[x]}\N$ and $V$ of $[x]$ in $\N$ such that $\exp_{[x]} \colon U \to V$ is a diffeomorphism.
By fixing an orthonormal basis on $T_{[x]}\N$ and letting $u = (u_1, \dots, u_q)^T$ denote coordinates on $U$ with respect to this basis, we obtain the  
\emph{geodesic normal coordinates} for $\N$ centered at $[x]$. In these coordinates, for each $[y] \in V$, there exists $u \in \RR^q$ such that 
\begin{equation}\label{eq: distance in geodesic coordinates}
    d^2_\N([x], [y]) = \|u\|^2.
\end{equation}
In addition, given any function $f \colon \N \to \RR$, we let $\tilde{f}(u)$ denote the local coordinate expression for $f([y])$. 

Note that there exists some fixed $\varepsilon_0 >0$ such that for all $\varepsilon < \varepsilon_0$ the open ball $\tilde B$ of radius $\varepsilon^{\gamma}$ with $1/4 < \gamma < 1/2$ around the origin in $T_{[x]}\N$ is contained in $U$. In this case, $\exp_{[x]}$ maps $\tilde B$ diffeomorphically to $B$, the geodesic ball around $[x]$ of radius $\varepsilon^\gamma$. 
From Eqs.~\eqref{eq:  I([x]) restricted to U and approx of min kernel} and~\eqref{eq: distance in geodesic coordinates},  we see that $I([x])$ is given in geodesic normal coordinates by
\begin{align}\label{eq: int exponential coordinates}
    I([x])
    =
    \frac{1}{(\pi\varepsilon)^{q /2}}
    \int_{\tilde B}
    \exp
    \left(
        -\|u\|^2/\varepsilon
    \right)
    \left(
        1
        +
        O
        \left(
            \|u\|^4/\varepsilon
        \right)
    \right)
    \tilde{h}(u)
    \sqrt{\det(g_{ij})}du
    +
    o(\varepsilon^\eta),
\end{align}
where $\sqrt{\det(g_{ij})}du$ is the volume form for $\N$. Following~\cite[Corollary 3]{Smolyanov2007},  we have the Taylor expansion
\begin{align} \label{eq: expansion of volume form in normal coordinates}
\sqrt{\det(g_{ij})} = 1 - \frac{1}{6} u^T \tilde{R}(0) u + O(\|u\|^3), 
\end{align}
where $R$ denotes the Ricci curvature tensor of $\N$. Since $\N$ is a compact manifold, the elements of $R$ are bounded, so that we can further approximate
\begin{align} \label{eq: further expansion of volume form in normal coordinates}
\sqrt{\det(g_{ij})} = 1 + O(\|u\|^2).
\end{align}
We will use both expressions for $\sqrt{\det(g_{ij})}$ to find the desired approximation for $I([x])$. 

\subsection*{Step 5: Analysis in Euclidean Space}
Next, consider the Taylor expansion of $\tilde{h}(u)$ around $0$, which is given by
\begin{equation}\label{eq: Taylor expansion of h in normal coordinates}
    \tilde{h}(u) = \tilde{h}(0) + \sum_{i=1}^q u_i \frac{\partial \tilde{h}(0)}{\partial u_i} + \frac{1}{2}\sum_{i=1}^q \sum_{j=1}^q u_i u_j \frac{\partial^2 \tilde{h}(0)}{\partial u_i \partial u_j} +  O(\|u\|^3).
\end{equation}
Putting Eqs.~\eqref{eq: int exponential coordinates}-\eqref{eq: Taylor expansion of h in normal coordinates} together, integral $I([x])$ becomes
\begin{equation}
\begin{aligned}
I([x]) &= \tilde{h}(0)\,\frac{1}{(\pi\varepsilon)^{q/2}}
   \int_{\tilde B} \exp\!\left(-\frac{\|u\|^2}{\varepsilon}\right) \left( 1 + O\left(\frac{\|u\|^4}{\varepsilon} \right)\right) 
   \left(1 - \frac{1}{6}u^T \tilde{R}(0)u + O(\|u\|^3)\right)du \\[1ex]   
 &\quad+ \sum_{i=1}^q \frac{\partial \tilde{h}(0)}{\partial u_i} \frac{1}{(\pi\varepsilon)^{q/2}}
   \int_{\tilde B} u_i \exp\!\left(-\frac{\|u\|^2}{\varepsilon}\right)\left( 1 + O\left(\frac{\|u\|^4}{\varepsilon} \right)\right)  
   \left(1 + O(\|u\|^2)\right) du \\[1ex]
 &\quad+ \frac{1}{2}\sum_{i=1}^q \sum_{j=1}^q \frac{\partial^2 \tilde{h}(0)}{\partial u_i \partial u_j} \frac{1}{(\pi\varepsilon)^{q/2}}
   \int_{\tilde B} u_i u_j \exp\!\left(-\frac{\|u\|^2}{\varepsilon}\right) \left( 1 + O\left(\frac{\|u\|^4}{\varepsilon} \right)\right) 
   \left(1 + O(\|u\|^2) \right)du \\[1ex]
   &\quad+ \frac{1}{(\pi\varepsilon)^{q/2}}
   \int_{\tilde B} O({\|u\|^3})\exp\!\left(-\frac{\|u\|^2}{\varepsilon}\right) \left( 1 + O\left(\frac{\|u\|^4}{\varepsilon} \right)\right)  \left(1 + O(\|u\|^2)\right) du + o(\varepsilon^\eta).
\end{aligned}
\label{eq:I-expansion}
\end{equation}
By the properties of a zero-mean Gaussian distribution with diagonal covariance matrix, all terms of the form $\int_{\tilde B} u_i \exp(-\|u\|^2/\varepsilon)du$,  $\int_{\tilde B} u_i u_j u_k \exp(-\|u\|^2/\varepsilon)du$ and $\int_{\tilde B} u_iu_j \exp(-\|u\|^2/\varepsilon)du$ with $i \neq j$ vanish. Therefore, $I([x])$ simplifies to
\begin{equation}\label{eq: I([x]) after symmetry argument}
\begin{aligned}
I([x]) &= \tilde{h}(0)\,\frac{1}{(\pi\varepsilon)^{q/2}}
   \int_{\tilde B} \exp\!\left(-\frac{\|u\|^2}{\varepsilon}\right) \left( 1 + O\left(\frac{\|u\|^4}{\varepsilon} \right)\right) 
   \left(1 - \frac{1}{6} \sum_{i=1}^q \tilde{R}_{ii}(0) u_i^2
    + O(\|u\|^3)\right)du \\[1ex]
&\quad+ \frac{1}{2}\sum_{i=1}^q \frac{\partial^2 \tilde{h}(0)}{\partial u_i^2} \frac{1}{(\pi\varepsilon)^{q/2}}
   \int_{\tilde B} u_i^2  \exp\!\left(-\frac{\|u\|^2}{\varepsilon}\right)
  \left( 1 + O\left(\frac{\|u\|^4}{\varepsilon} \right)\right) 
   \left(1 + O(\|u\|^2) \right)du \\[1ex]
    &\quad+ \frac{1}{(\pi\varepsilon)^{q/2}}
   \int_{\tilde B} O({\|u\|^3})\exp\!\left(-\frac{\|u\|^2}{\varepsilon}\right) \left( 1 + O\left(\frac{\|u\|^4}{\varepsilon} \right)\right)  \left(1 + O(\|u\|^2)\right) du + o(\varepsilon^\eta).
\end{aligned}
\end{equation}
Due to the exponential decay of $\exp(-\|u\|^2/\varepsilon)$, the domains of integration 
can be extended to $\RR^q$ due to bounds similar to \eqref{eq:kernel-bound}. This allows us to use the following properties of the $q$-dimensional Gaussian distribution:
\begin{align}\label{eq: Gaussian integral 1}
    \frac{1}{(\pi\varepsilon)^{q /2}} \int_{\RR^{q}} \exp\left( -\frac{\|u\|^2 }{\varepsilon}\right)  du = 1, \quad \frac{1}{(\pi\varepsilon)^{q/2}} \int_{\mathbb{R}^q} \exp\!\left(-\frac{\|u\|^2}{\varepsilon}\right) \|u\|^m \, du = \varepsilon^{m/2}\, \frac{\Gamma\!\left(\frac{q+m}{2}\right)}{\Gamma\!\left(\frac{q}{2}\right)},
\end{align}
and, for each $u_i$,
\begin{align}\label{eq: Gaussian integral 2}
    \frac{1}{(\pi\varepsilon)^{q /2}} \int_{\RR^{q}} \exp\left( -\frac{\|u\|^2 }{\varepsilon}\right)  u_i^2 du = \frac{\varepsilon}{2}, 
    \quad \frac{1}{(\pi\varepsilon)^{q/2}} \int_{\mathbb{R}^q} u_i^2 \exp\!\left(-\frac{\|u\|^2}{\varepsilon}\right) \,\|u\|^m \, du = \frac{\varepsilon^{m/2 +1}}{q}\, \frac{\Gamma\!\left(\frac{q+m+2}{2}\right)}{\Gamma\!\left(\frac{q}{2}\right)}.
\end{align}
From Eqs.~\eqref{eq: Gaussian integral 1} and~\eqref{eq: Gaussian integral 2},
we approximate Eq.~\eqref{eq: I([x]) after symmetry argument} (choosing $\eta = 3/2$) up to order $\varepsilon^{3/2}$ 
 as follows:
\begin{align}\label{eq: final I([x]) in normal coordinates}
    I([x]) = \tilde{h}(0) + \frac{\varepsilon}{4}\left( \left( -\frac{1}{3}\sum_{i=1}^q \tilde{R}_{ii}(0) + 
    O(1) \right)\tilde{h}(0) 
    + \sum_{i=1}^q \frac{\partial^2 \tilde{h}(0)}{\partial u_i^2}\right)+ O(\varepsilon^{3/2}).
\end{align}

We briefly explain why the error term of $O(\epsilon^{3/2})$ in Eq.~\eqref{eq: final I([x]) in normal coordinates} can be improved,
following \citet{Singer2006}.
First note that the minimum kernel is smooth near the diagonal.  
Indeed, since $G$ acts freely the orbit $G\cdot x$ is a compact embedded submanifold of $\mathbb{R}^D$. 
By the tubular neighborhood theorem \cite{lee2003introduction}, $G \cdot x$ has an open neighborhood in $\mathbb{R}^D$ on which every point has a unique nearest point in $G \cdot x$ where the nearest-point projection to $G \cdot x$ is smooth \cite{foote1984regularity};
hence $y \mapsto \min_{\alpha \in G} \|x - \alpha \cdot y\|^2$ is smooth there. 
From
$$
 \overline{K}_{\mathrm{min}}([x],[y])
     =
     \exp
     \left(
        -\min_{\alpha \in G} \| x - \alpha \cdot y \|^2/\varepsilon
    \right),
$$
and compactness, $\overline{K}_{\min}$ is smooth in an open neighborhood of the diagonal of $\mathcal N\times\mathcal N$.  
This implies that the integrand in Eq.~\eqref{eq: int exponential coordinates} is smooth in geodesic normal coordinates, and proceeding as in~\citep[Section 2]{Singer2006} only integer powers of $\varepsilon$ survive. 
Therefore, the error term in Eq.~\eqref{eq: final I([x]) in normal coordinates} can be improved to $O(\varepsilon^2)$.

Finally, as shown by \citet{Rosenberg_1997}, we have the relation
\begin{align}
\sum_{i=1}^q \frac{\partial^2 \tilde{h}(0)}{\partial u_i^2} = - \Delta_\N h([x]),
\end{align}
where $\Delta_\N$ is the Laplace--Beltrami operator on $\N$. 
We conclude that
\begin{align}\label{eq: final I([x]) in terms of [x]}
    I([x]) = h([x]) + \frac{\varepsilon}{4}\left(E([x]) h([x]) - \Delta_\N h([x]) \right) + O(\varepsilon^{2}),
\end{align}
where $E([x]) = -\frac{1}{3} \sum_{i=1}^q R_{ii}([x]) + O(1)$ is some function that depends on the curvature of $\N$ at $[x]$ and on the embedding of $\M$ into $\RR^D$.

\newpage
\section{Table of Notation} \label{sec:notation}

\small
\begin{longtable}{@{}p{0.27\textwidth}p{0.67\textwidth}@{}}
\toprule
Notation & Meaning \\
\midrule
\endfirsthead
\toprule
Notation & Meaning \\
\midrule
\endhead
\midrule
\endfoot
\bottomrule
\endlastfoot
\multicolumn{2}{@{}l}{\textit{Spectral embedding}}\\[0.2em]
$\X=\{\x_1,\dots,\x_n\}$ & Sampled dataset, with $\x_i \in \M \subseteq \RR^D$. \\
$K,K_G, K_\mathrm{min}, K_\mathrm{int}, K_\mathrm{IF}$ & Base affinity kernel (e.g.\ $\exp(-\|x-y\|^2/\varepsilon)$), generic $G$-invariant kernel, minimum kernel, integral kernel, and invariant-features kernel (Section~\ref{sec:G-invariant kernels}).\\
$\varepsilon$ & Kernel bandwidth parameter. \\
$W, D$ & Weight and degree matrices used to define the graph Laplacian. \\
$L_{RW}$ & Random-walk normalized graph Laplacian, $L_{RW}=I-D^{-1}W$. \\
$\lambda_i,\varphi_i$ & Eigenvalues and eigenvectors of the graph Laplacian (used as embedding coordinates). \\
$\y_i$ & Embedded coordinate vector of $\x_i$ in spectral embedding. \\
\midrule
\multicolumn{2}{@{}l}{\textit{Spaces, geometry, and group actions}}\\[0.2em]
$\M \subseteq \RR^D$ & Data manifold. A compact Riemannian submanifold without boundary. \\
$d = \dim(\M)$ & Intrinsic dimension of the data manifold. \\
$G$ & Compact Lie group acting on $\M$. \\
$p = \dim(G)$ & Dimension of the Lie group. \\
$\alpha,\beta \in G$ & Generic group elements. \\
$\alpha \cdot x$ & Action of $\alpha \in G$ on a point $x \in \M$. \\
$\theta \colon G \times \M \to \M$ & Group action map, with $\theta(\alpha,x)=\alpha\cdot x$. \\
$[x] = G\cdot x = \{ \alpha \cdot x \mid \alpha \in G \}$ & Orbit of $x$ under the action of $G$; equivalently, the quotient point represented by $x$. \\
$\N=\M/G$ & Quotient manifold, or orbit space, obtained by identifying points in the same $G$-orbit. \\
$q = \dim(\N) = d-p$ & Dimension of the quotient manifold. \\
$\pi \colon \M \to \N$ & Quotient map, $\pi(x)=[x]$. \\
$dV_\M,\ dV_\N,\ dV_{G \cdot x}$ & Riemannian volume measures on $\M$, $\N$ and the orbit $G \cdot x$. \\
$d\eta$ & Haar measure on $G$, normalized so that $\int_G 1\,d\eta=1$. \\
$\delta,\overline{\delta}$ & Orbit-volume density on $\M$ (defined by $dV_{G\cdot x}=\delta\,d\eta$) and its induced function on $\N$. \\

\midrule
\multicolumn{2}{@{}l}{\textit{Functions and operators}}\\[0.2em]
$f,\overline{f}$ & Smooth $G$-invariant function on $\M$ and its induced function on $\N$. \\
$h \colon \N \to \RR$ & Generic smooth test function on $\N$. \\
$\rho,\tilde{\rho}$ & Sampling density on $\M$ for non-uniform data, and its orbit average on $\N$. \\
$\phi \colon \RR^D \to \RR^E$, $\bar{\phi} \colon \N \to \mathrm{im}\phi$ & $G$-invariant feature map and its induced map on $\N$. \\
$p_\phi$ & Density of the pushforward measure $\phi_\ast(dV_\M)$ on $\mathrm{im}\phi$. \\
$\Delta_\M,\Delta_\N,\Delta_{\mathrm{im}\phi}$ & Laplace--Beltrami operators on $\M$, $\N$, and $\mathrm{im}\phi$. \\
$\nabla_\M,\nabla_\N,\nabla_{\mathrm{im}\phi}$ & Riemannian gradients on $\M$, $\N$, and $\mathrm{im}\phi$. \\
$P(\Delta_\M)$ & Projection of the Laplace--Beltrami operator on $\M$ to an operator on $\N$. \\
$\D$ & Limiting second-order differential operator on $\N$. \\
$\D_\rho$ & Non-uniform-sampling version of the limiting operator $\D$. \\
$\pi^\ast,\bar{\phi}^\ast,\phi_\ast$ & Pullbacks by $\pi$ and $\bar{\phi}$, and pushforward by $\phi$. \\
\end{longtable}
\normalsize

\end{document}

%% file: torus_orbit_intro_figure_tikz.tex
\begin{tikzpicture}[x=1cm,y=1cm,line cap=round,line join=round]

\draw[gray!15,line width=0.45pt] (0,0) rectangle (4.8,4.8);
\draw[CRed,edge id] (0,4.8) -- (4.8,4.8);
\draw[CRed,edge id] (0,0) -- (4.8,0);
\draw[CBlue,edge id] (0,0) -- (0,4.8);
\draw[CBlue,edge id] (4.8,0) -- (4.8,4.8);

\node[CRed,font=\small] at (2.4,5.08) {$a$};
\node[CRed,font=\small] at (2.4,-0.28) {$a$};
\node[CBlue,font=\small] at (-0.28,2.4) {$b$};
\node[CBlue,font=\small] at (5.08,2.4) {$b$};
\node[CDark,font=\small] at (2.4,-0.68) {$\M=\mathbb{T}^2$};

\draw[COrange,orbit] (0.45,3.25) -- (4.35,3.25);
\draw[CGreen,orbit] (0.45,1.45) -- (4.35,1.45);

\node[pt,fill=COrange!85!black] at (1.05,3.25) {};
\node[pt,fill=COrange!85!black] at (3.55,3.25) {};
\node[pt,fill=CGreen!60!black] at (2.40,1.45) {};

\draw[->,COrange!75!black,line width=0.75pt,densely dashed]
  (1.22,3.25) -- (3.38,3.25);

\node[font=\small,anchor=south] at (1.05,3.50) {$x$};
\node[font=\small,anchor=south] at (3.55,3.50) {$x'=g\cdot x$};
\node[font=\small,anchor=north] at (2.40,1.20) {$y$};

\draw[map] (5.65,2.40) -- (7.28,2.40);
\node[font=\small,anchor=south,CDark] at (6.46,2.53) {$\pi$};

\def\xq{8.55}
\def\ybot{0.00}
\def\ytop{4.80}
\coordinate (Qx) at (\xq,3.25);
\coordinate (Qy) at (\xq,1.45);

\draw[CBlue,line width=1.15pt] (\xq,\ybot) -- (\xq,\ytop);
\draw[CBlue,line width=0.9pt] (\xq-0.13,\ybot) -- (\xq+0.13,\ybot);
\draw[CBlue,line width=0.9pt] (\xq-0.13,\ytop) -- (\xq+0.13,\ytop);
\node[CDark,font=\small] at (\xq,-0.68) {$\N=\M/G\cong\mathbb{S}^1$};
\node[CDark,font=\scriptsize] at (\xq,-1.02) {endpoints identified};

\node[pt,fill=COrange!85!black] at (Qx) {};
\node[pt,fill=CGreen!60!black] at (Qy) {};
\node[font=\small,anchor=west] at (\xq+0.28,3.25) {$[x]=[x']$};
\node[font=\small,anchor=west] at (\xq+0.28,1.45) {$[y]$};

\draw[dist] (\xq-0.42,1.45) -- (\xq-0.42,3.25);
\node[CPink,font=\small,rotate=90,anchor=south] at (\xq-0.46,2.35)
  {$d_{\N}([x],[y])$};

\end{tikzpicture}